\theoremstyle:=definition,remark,plain\do{%
        \expandafter\g@addto@macro\csname th@\theoremstyle\endcsname{%
            \addtolength\thm@preskip\parskip
            }%
        }
\newtheorem{remark}{Remark}[section]
\newtheorem{proposition}{Proposition}
\newtheorem{assumption}{Assumption}[section]
\newtheorem{lemma}{Lemma}
\newtheorem{corollary}{Corollary}[section]
\newtheorem{definition}{Definition}[section]
\newtheorem{theorem}{Theorem}
\theoremstyle{remark}
\newtheorem{example}{Example}[section]
\newcommand{\calX}{\mathcal{X}}
\newcommand{\bbR}{\mathbb{R}}
\newcommand{\bbE}{\mathbb{E}}
\newcommand{\bbP}{\mathbb{P}}
\newcommand{\dd}{\textnormal{d}}
\newcommand{\vt}[1]{\left\vert #1 \right\vert}
\newcommand{\vvt}[1]{\left\Vert #1 \right\Vert}
\newcommand{\vta}[1]{\vert #1 \vert}
\newcommand{\vvta}[1]{\Vert #1 \Vert}
\newcommand{\lr}[1]{\left( #1 \right)}
\newcommand{\lrr}[1]{\left[ #1 \right]}
\newcommand{\lrrr}[1]{\left\{ #1 \right\}}
\newcommand{\tn}[1]{\textnormal{#1}}
\newcommand{\lspo}{\ell_{\textnormal{SPO}}}
\newcommand{\lspop}{\ell_{\textnormal{SPO+}}}
\newcommand{\rspo}{R_{\textnormal{SPO}}}
\newcommand{\rspop}{R_{\textnormal{SPO+}}}
\newcommand{\lspoplb}{\underline{\ell}_{\textnormal{SPO+}}}
\newcommand{\rspoub}{\overline{R}_{\textnormal{SPO}}}
\newcommand{\rspoplb}{\underline{R}_{\textnormal{SPO+}}}
\DeclareMathOperator*\lowlim{\underline{lim}}
\title{Risk Bounds and Calibration for a Smart Predict-then-Optimize Method}
\author{Heyuan Liu$^1$ \and Paul Grigas$^1$}
\date{%
$^1$Department of Industrial Engineering and Operations Research \\
University of California, Berkeley\thanks{\texttt{\{heyuan\_liu, pgrigas\}@berkeley.edu}} \\[2ex]%
\today
}
\begin{document}

\maketitle

\begin{abstract}
    The predict-then-optimize framework is fundamental in practical stochastic decision-making problems:  first predict unknown parameters of an optimization model, then solve the problem using the predicted values. 
    A natural loss function in this setting is defined by measuring the decision error induced by the predicted parameters, which was named the Smart Predict-then-Optimize (SPO) loss by \cite{elmachtoub2021smart}. Since the SPO loss is typically nonconvex and possibly discontinuous, \cite{elmachtoub2021smart} introduced a convex surrogate, called the SPO+ loss, that importantly accounts for the underlying structure of the optimization model. In this paper, we greatly expand upon the consistency results for the SPO+ loss provided by \cite{elmachtoub2021smart}. 
    We develop risk bounds and uniform calibration results for the SPO+ loss relative to the SPO loss, which provide a quantitative way to transfer the excess surrogate risk to excess true risk. By combining our risk bounds with generalization bounds, we show that the empirical minimizer of the SPO+ loss achieves low excess true risk with high probability. We first demonstrate these results in the case when the feasible region of the underlying optimization problem is a polyhedron, and then we show that the results can be strengthened substantially when the feasible region is a level set of a strongly convex function.
    We perform experiments to empirically demonstrate the strength of the SPO+ surrogate, as compared to standard $\ell_1$ and squared $\ell_2$ prediction error losses, on portfolio allocation and cost-sensitive multi-class classification problems.
\end{abstract}


\section{Introduction}

    The \emph{predict-then-optimize} framework, where one predicts the unknown parameters of an optimization model and then plugs in the predictions before solving, is prevalent in applications of machine learning. Some typical examples include predicting future asset returns in portfolio allocation problems and predicting the travel time on each edge of a network in navigation problems. In most cases, there are many contextual features available, such as time of day, weather information, financial and business news headlines, and many others, that can be leveraged to predict the unknown parameters and reduce uncertainty in the decision making problem. Ultimately, the goal is to produce a high quality prediction model that leads to a good decisions when implemented, such as a position that leads to a large return or a route that induces a small realized travel time.
    There has been a fair amount of recent work examining this paradigm and other closely related problems in data-driven decision making, such as the works of \cite{bertsimas2020predictive,donti2017task,elmachtoub2021smart,kao2009directed,estes2019objective,ho2019data,notz2019prescriptive,kotary2021end}, the references therein, and others. 
    
    In this work, we focus on the particular and important case where the optimization problem of interest has a linear objective with a known convex feasible region and where the contextual features are related to the coefficients of the linear objective. This case includes the aforementioned shortest path and portfolio allocation problems. In this context, \cite{elmachtoub2021smart} developed the Smart Predict-then-Optimize (SPO) loss function, which directly measures the regret of a prediction against the best decision in hindsight (rather than just prediction error, such as squared error).
    After the introduction of the SPO loss, recent work has studied its statistical properties, including generalization bounds of the SPO loss function in \cite{balghiti2019generalization} and generalization and regret convergence rates in \cite{hu2020fast}. 
    Moreover, since the SPO loss is not continuous nor convex in general \citep{elmachtoub2021smart}, which makes the training of a prediction model computationally intractable, \cite{elmachtoub2021smart} introduced a novel convex surrogate loss, referred to as the SPO+ loss. 
    \cite{elmachtoub2021smart} highlight and prove several advantages of the SPO+ surrogate loss:  {\em (i)} it still accounts for the downstream optimization problem when evaluating the quality of a prediction model (unlike prediction losses such as the squared $\ell_2$ loss), {\em (ii)} it has a desirable Fisher consistency property with respect to the SPO loss under mild conditions, and {\em (iii)} it often performs better than commonly considered prediction losses in experimental results.
    Unfortunately, although a desirable property of any surrogate loss in this context, Fisher consistency is not directly applicable when one only has available a finite dataset, which is always the case in practice, because it relies on full knowledge of the underlying distribution. Motivated thusly, it is desirable to develop \emph{risk bounds} that allow one to translate an approximate guarantee on the risk of a surrogate loss function to a corresponding guarantee on the SPO risk. That is, risk bounds (and the related notion of calibration functions) answer the question:  to what tolerance $\delta$ should the surrogate excess risk be reduced to in order to ensure that the excess SPO risk is at most $\epsilon$? Note that, with enough data, it is possible in practice to ensure a (high probability) bound on the excess surrogate risk through generalization and optimization guarantees.
    
    The main goal of this work is to provide risk bounds for the SPO+ surrogate loss function. 
    Our results, to the best of our knowledge, are the first risk bounds of the SPO+ loss, besides the analysis of the $1$-dimensional scenario in \cite{ho2020risk}. 
    Our results consider two cases for the structure of the feasible region of the optimization problem:  {\em (i)} the case of a bounded polyhedron, and {\em (ii)} the case of a level set of a smooth and strongly convex function.
    In the polyhedral case, we prove that the risk bound of the SPO+ surrogate is $O(\epsilon^2)$, where $\epsilon$ is the desired accuracy for the excess SPO risk. Our results hold under mild distributional assumptions that extend those considered in \cite{elmachtoub2021smart}.
    In the strongly convex level set case, we improve the risk bound of the SPO+ surrogate to $O(\epsilon)$ by utilizing novel properties of such sets that we develop, namely stronger optimality guarantees and continuity properties.
    As a consequence of our analysis, we can leverage generalization guarantees for the SPO+ loss to obtain the first sample complexity bounds, with respect to the SPO risk, for the SPO+ surrogate under the two cases we consider.
    In Section \ref{sec:experiment}, we present computational results that validate our theoretical findings. In particular we present results on entropy constrained portfolio allocation problems which, to the best of our knowledge, is the first computational study of predict-then-optimize problems for a strongly convex feasible region. Our results on portfolio allocation problems demonstrate the effectiveness of the SPO+ surrogate. We also present results for cost-sensitive multi-class classification that illustrate the benefits of faster convergence of the SPO risk in the case of strongly convex sets as compared to polyhedral ones.
    
    
    Starting with binary classification, risk bounds and calibration have been previously studied in other machine learning settings.
    Pioneer works studying the properties of convex surrogate loss functions for the 0-1 loss include \cite{zhang2004statistical,bartlett2006convexity,massart2006risk} and \cite{steinwart2007compare}. Works including \cite{zhang2004statistical2, tewari2007consistency} and \cite{osokin2017structured} have studied the consistency and calibration properties of multi-class classification problems, which can be considered as a special case of the predict-then-optimize framework \citep{balghiti2019generalization}. Most related to the results presented herein is the work of \cite{ho2020risk}, who study the uniform calibration properties of the squared $\ell_2$ loss, and the related work of \cite{hu2020fast}, who also develop fast sample complexity results for the SPO loss when using a squared $\ell_2$ surrogate.

    \paragraph{Notation.}
    Let $\odot$ represent element-wise multiplication between two vectors. For any positive integer $m$, let $[m]$ denote the set $\{1, \dots, m\}$.
    Let $I_p$ denote the $p$ by $p$ identity matrix for any positive integer $p$. 
    For $\bar{c} \in \bbR^d$ and a positive semi-definite matrix $\Sigma \in \bbR^{d \times d}$, let $\mathcal{N}(\bar{c}, \Sigma)$ denote the normal distribution $\bbP(c) = \frac{e^{- \frac12 (c-\bar{c})^T \Sigma^{-1} (c-\bar{c})}}{\sqrt{(2\pi)^d \det(\Sigma)}}$. 
    We will make use of a generic given norm $\vvta{\cdot}$ on $w \in \bbR^d$, as well as its dual norm  $\vvta{\cdot}_*$ which is defined by $\vvta{c}_* = \max_{w: \vvta{w} \le 1} c^T w$. 
    For a positive definite matrix $A$, we define the $A$-norm by $\vvta{w}_A := \sqrt{w^T A w}$.
    Also, we denote the diameter of the set $S \subseteq \bbR^d$ by $D_S := \sup_{w, w' \in S} \vvta{w - w'}_2$.

\section{Predict-then-optimize framework and preliminaries}\label{sec:spo}

    We now formally describe the predict-then-optimize framework, which is widely prevalent in stochastic decision making problems. 
    We assume that the problem of interest has a linear objective, but the cost vector of the objective, $c \in \mathcal{C} \subseteq \bbR^d$, is not observed when the decision is made. 
    Instead, we observe a feature vector $x \in \mathcal{X} \subseteq \bbR^p$, which provides contextual information associated with $c$. 
    Let $\bbP$ denote the underlying joint distribution of the pair $(x, c)$. 
    Let $w$ denote the decision variable and assume that we have full knowledge of the feasible region $S \subseteq \bbR^d$, which is assumed to be non-empty, compact, and convex. 
    When a feature vector $x$ is provided, the goal of the decision maker is to solve the \emph{contextual stochastic optimization problem:}
    \begin{equation}\label{eq:opt-mean}
        \min_{w \in S} \bbE_{c \sim \bbP(\cdot \vert x)} [c^T w] = \min_{w \in S} \bbE_{c \sim \bbP(\cdot \vert x)} [c]^T w. 
    \end{equation}
    As demonstrated by \eqref{eq:opt-mean}, for linear optimization problems the predict-then-optimize framework relies on a prediction of the conditional expectation of the cost vector, namely $\bbE[c \vert x] = \bbE_{c \sim \bbP(\cdot \vert x)} [c]$. 
    Let $\hat{c}$ denote a prediction of the conditional expectation, then the next step in the predict-then-optimize setting is to solve the deterministic optimization problem with the cost vector $\hat{c}$, namely 
    \begin{equation}\label{eq:opt-task}
        P(\hat{c}): \quad \min_{w \in S} \hat{c}^T w. 
    \end{equation}
    Depending on the structure of the feasible region $S$, the optimization problem $P(\cdot)$ can represent linear programming, conic programming, and even (mixed) integer programming, for example.
    In any case, we assume that we can solve $P(\cdot)$ to any desired accuracy via either a closed-form solution or a solver.
    Let $w^*(\cdot) : \bbR^d \to S$ denote a particular optimization oracle for problem \eqref{eq:opt-task}, whereby $w^*(\hat{c})$ is an optimal solution of $P(\hat{c})$. (We assume that the oracle is deterministic and ties are broken in an arbitrary pre-specified manner.) 
    
    In order to obtain a model for predicting cost vectors, namely a cost vector predictor function $g : \calX \to \bbR^d$, we may leverage machine learning methods to learn the underlying distribution $\bbP$ from observed data $\{(x_1, c_1), \dots, (x_n, c_n)\}$, which are assumed to be independent samples from $\bbP$. Most importantly, following \eqref{eq:opt-mean}, we would like to learn the conditional expectation and thus $g(x)$ can be thought of as an estimate of $\bbE[c \vert x]$.
    We follow a standard recipe for learning a predictor function $g$ where we specify a loss function to measure the quality of predictions relative to the observed realized cost vectors.
    In particular, for a loss function $\ell$, the value $\ell(\hat{c}, c)$ represents the loss or error incurred when the cost vector prediction is $\hat{c}$ (i.e., $\hat{c} = g(x)$) and the realized cost vector is $c$.
    Let $R_{\ell}(g; \bbP) := \bbE_{(x, c) \sim \bbP} [\ell(g(x), c)]$ denote the the risk of given loss function $\ell$ and let $R_{\ell}^*(\bbP) = \inf_{g'} R_{\ell}(g'; \bbP)$ denote the optimal $\ell$-risk over all measurable functions $g'$.
    Also, let $\hat{R}_{\ell}^n(g) := \frac1n \sum_{i=1}^n \ell(g(x_i), c_i)$ denote the empirical $\ell$-risk. 
    Most commonly used loss functions are based on directly measuring the prediction error, including the (squared) $\ell_2$ and the $\ell_1$ losses.
    However, these losses do not take the downstream optimization task nor the structure of the feasible region $S$ into consideration. Motivated thusly, one may consider a loss function that directly measures the decision error with respect to the optimization problem \eqref{eq:opt-task}.
    \cite{elmachtoub2021smart} formalize this notion in our context of linear optimization problems with their introduction of the SPO (Smart Predict-then-Optimize) loss function, which is defined by
    \begin{equation*}
        \lspo(\hat{c}, c) := c^T w^*(\hat{c}) - c^T w^*(c), 
    \end{equation*}
    where $\hat{c} \in \bbR^d$ is the predicted cost vector and $c \in \mathcal{C}$ is the realized cost vector. 
    Due to the possible non-convexity and possible discontinuities of the SPO loss, \cite{elmachtoub2021smart} also propose a convex surrogate loss function, the SPO+ loss, which is defined as 
    \begin{equation*}
        \lspop(\hat{c}, c) := \max_{w \in S} \{(c - 2\hat{c})^T w\} + 2 \hat{c}^T w^*(c) - c^T w^*(c).
    \end{equation*}
    Importantly, the SPO+ loss still accounts for the downstream optimization problem \eqref{eq:opt-task} and the structure of the feasible region $S$, in contrast to losses that focus only on prediction error. As discussed by \cite{elmachtoub2021smart}, the SPO+ loss can be efficiently optimized via linear/conic optimization reformulations and with (stochastic) gradient methods for large datasets.
    \cite{elmachtoub2021smart} provide theoretical and empirical justification for the use of the SPO+ loss function, including a derivation through duality theory, promising experimental results on shortest path and portfolio optimization instances, and the following theorem which provides the Fisher consistency of the SPO+ loss.
    
    \begin{theorem}[\cite{elmachtoub2021smart}, Theorem 1]\label{thm:spo-fisher}
        Suppose that the feasible region $S$ has a non-empty interior. 
        For fixed $x \in \mathcal{X}$, suppose that the conditional distribution $\bbP(\cdot \vert x)$ is continuous on all of $\bbR^d$, is centrally symmetric around its mean $\bar{c} := \bbE_{c \sim \bbP(\cdot \vert x)} [c]$, and that there is a unique optimal solution of $P(\bar{c})$. Then, for all $\Delta \in \bbR^p$ it holds that 
        \begin{equation*}
            \bbE_{c \sim \bbP(\cdot \vert x)} \lrr{\lspop(\bar{c} + \Delta, c) - \lspop(\bar{c}, c)} = \bbE_{c \sim \bbP(\cdot \vert x)} \lrr{(c + 2\Delta)^T (w^*(c) - w^*(c + 2\Delta))} \ge 0. 
        \end{equation*}
        Moreover, if $\Delta \not= 0$, then $\bbE_{c \sim \bbP(\cdot \vert x)} \lrr{\lspop(\bar{c} + \Delta) - \lspop(\bar{c})} > 0$. 
    \end{theorem}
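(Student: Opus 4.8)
The plan is to verify the stated identity by direct algebra on the definition of $\lspop$ combined with the central-symmetry hypothesis, then read off nonnegativity from optimality of the oracle $w^*$, and finally handle the strict inequality separately by exhibiting an open set of realized cost vectors on which the (already nonnegative) integrand is strictly positive.

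First I would rewrite the $\max$ term in $\lspop$ using the oracle: since maximizing $(c-2\hat c)^T w$ over $S$ is the same as minimizing $(2\hat c - c)^T w$, we have $\max_{w\in S}\lrrr{(c-2\hat c)^T w} = (c - 2\hat c)^T w^*(2\hat c - c)$. Substituting $\hat c = \bar c + \Delta$ and $\hat c = \bar c$ and subtracting, the terms $2\hat c^T w^*(c) - c^T w^*(c)$ partially cancel and leave
\begin{equation*}
\lspop(\bar c + \Delta, c) - \lspop(\bar c, c) = (c - 2\bar c - 2\Delta)^T w^*(2\bar c + 2\Delta - c) - (c-2\bar c)^T w^*(2\bar c - c) + 2\Delta^T w^*(c).
\end{equation*}
I then take $\bbE_{c\sim\bbP(\cdot\vert x)}$ and use that central symmetry about $\bar c$ makes the map $c \mapsto 2\bar c - c$ measure preserving. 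Performing this change of variables in the first two terms converts $w^*(2\bar c + 2\Delta - c)$ into $w^*(c + 2\Delta)$ and $w^*(2\bar c - c)$ into $w^*(c)$; after collecting terms and using $c^T w^*(c) = \min_{w\in S} c^T w$, the right-hand side collapses to $\bbE\lrr{(c+2\Delta)^T(w^*(c) - w^*(c+2\Delta))}$, which is exactly the claimed middle expression. This establishes the identity.

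For the inequality, I observe that for every fixed $c$ the value $w^*(c+2\Delta)$ minimizes $(c+2\Delta)^T w$ over $S$, so the pointwise integrand $h(c) := (c+2\Delta)^T\lr{w^*(c) - w^*(c+2\Delta)} \ge 0$; taking expectations gives $\ge 0$.

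The strict inequality for $\Delta \neq 0$ is the main obstacle, since I must upgrade ``$h \ge 0$'' to ``$\bbE[h] > 0$'', i.e. show $h > 0$ on a set of positive probability. The plan has three steps. \textbf{(a)} Exhibit one cost vector $c_0$ with a unique minimizer $w^*(c_0)$ that is strictly suboptimal for $c_0 + 2\Delta$, so that $h(c_0) = (c_0+2\Delta)^T w^*(c_0) - \min_{w\in S}(c_0+2\Delta)^T w > 0$. Because $S$ has nonempty interior and $\Delta \ne 0$, the functional $w \mapsto \Delta^T w$ is nonconstant on $S$, so I can pick an exposed point $\hat w$ with $\Delta^T \hat w$ strictly above $\min_{w\in S}\Delta^T w$, let $\nu$ be a direction exposing $\hat w$ (so $\hat w$ uniquely minimizes $\nu^T w$), and choose a witness $w' \in S$ with $\Delta^T(w' - \hat w) < 0$; setting $c_0 = t\nu$ for a sufficiently small $t>0$ keeps $\hat w$ the unique minimizer of $c_0^T w$ while ensuring $(c_0+2\Delta)^T(w' - \hat w) = t\,\nu^T(w'-\hat w) + 2\Delta^T(w'-\hat w) < 0$, so $\hat w$ is not optimal for $c_0 + 2\Delta$. \textbf{(b)} At such a $c_0$ the selection $c \mapsto w^*(c)$ is continuous (uniqueness of the minimizer forces continuity of the oracle there) and $c \mapsto \min_{w\in S} c^T w$ is continuous everywhere, so $h$ is continuous at $c_0$ and hence strictly positive on an open neighborhood of $c_0$. \textbf{(c)} Since $\bbP(\cdot\vert x)$ is continuous on all of $\bbR^d$, this neighborhood carries positive probability; combined with $h \ge 0$ everywhere this yields $\bbE[h] > 0$. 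The delicate point is step (a)---choosing $c_0$ so that the fixed shift $2\Delta$ actually dislodges the optimizer---and the nonempty-interior hypothesis is exactly what makes the required exposed point and witness direction available, while the uniqueness of the minimizer at $\bar c$ guarantees the oracle selections entering the identity are unambiguous.
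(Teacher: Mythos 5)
Your proof is correct, but note that this paper does not actually prove Theorem \ref{thm:spo-fisher}: it is imported verbatim from \cite{elmachtoub2021smart} (their Theorem 1), so there is no in-paper argument to compare against. Your derivation of the identity is the standard one --- rewrite the $\max$ via the oracle as $(c-2\hat c)^T w^*(2\hat c - c)$, subtract, and use the measure-preserving map $c \mapsto 2\bar c - c$ supplied by central symmetry --- and the nonnegativity is immediate from optimality of $w^*(c+2\Delta)$. The interesting part is your strict-inequality argument, which differs from the cited proof in a substantive way: you construct a cost vector $c_0 = t\nu$ (with $\nu$ exposing a point $\hat w$ at which $\Delta^T\hat w$ exceeds $\min_{w\in S}\Delta^T w$, which exists by Straszewicz-type density of exposed points once $S$ has nonempty interior) so that the shift by $2\Delta$ provably dislodges the optimizer, then use uniqueness of $w^*(c_0)$ to get continuity of the integrand at $c_0$ and hence positivity on an open set of positive probability. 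This argument never references $\bar c$ and in particular never uses the hypothesis that $P(\bar c)$ has a unique optimal solution; the original proof in \cite{elmachtoub2021smart} does lean on that hypothesis. So your route is slightly more general for the inequality itself (the uniqueness hypothesis remains relevant for the Fisher-consistency interpretation, i.e., that $\bar c$ is the essentially unique risk minimizer, and for your identity only insofar as ties must occur on a $\bbP$-null set, which absolute continuity already guarantees). Two small points you should make explicit: the phrase ``continuous on all of $\bbR^d$'' must be read as ``admits a density that is positive everywhere'' for step (c) to give the open neighborhood positive mass, and the selection $c \mapsto w^*(c)$ need only be measurable up to the Lebesgue-null set of cost vectors with non-unique minimizers for the expectations to be well defined.
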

    Notice that Theorem \ref{thm:spo-fisher} holds for \emph{arbitrary} $x \in \calX$, i.e., it employs a nonparametric analysis as is standard in consistency and calibration results, whereby there are no constraints on the predicted cost vector associated with $x$.
    Under the conditions of Theorem \ref{thm:spo-fisher}, given any $x \in \calX$, we know that the conditional expectation $\bar{c} = \bbE_{c \sim \bbP(\cdot \vert x)} [c]$ is the unique minimizer of the SPO+ risk. 
    Furthermore, since $\bar{c}$ is also a minimizer of the SPO risk, it holds that the SPO+ loss function is Fisher consistent with respect to the SPO loss function, i.e., minimizing the SPO+ risk also minimizes the SPO risk.
    However, in practice, due to the fact that we have available only a finite dataset and not complete knowledge of the distribution $\bbP$, we cannot directly minimize the true SPO+ risk. Instead, by employing the use of optimization and generalization guarantees, we are able to approximately minimize the SPO+ risk. A natural question is then: does a low excess SPO+ risk guarantee a low excess SPO risk?
    More formally, we are primarily interested in the following questions: 
    {\em (i)} for any $\epsilon > 0$, does there exist $\delta(\epsilon) > 0$ such that $\rspop(g; \bbP) - \rspop^*(\bbP) < \delta(\epsilon)$ implies that $\rspo(g; \bbP) - \rspo^*(\bbP) < \epsilon$?, and {\em (ii)} what is the largest such value of $\delta(\epsilon)$ that guarantees the above?
    
    \paragraph{Excess risk bounds via calibration.}
    The notions of calibration and calibration functions provide a useful set of tools to answer the previous questions.
    We now review basic concepts concerning calibration when using a generic surrogate loss function $\ell$, although we are primarily interested in the aforementioned SPO+ surrogate.
    An excess risk bound allows one to transfer the conditional excess $\ell$-risk, $\bbE \lrr{\ell(\hat{c}, c) \vert x} - \inf_{c'} \bbE \lrr{\ell(c', c) \vert x}$, to the conditional excess $\lspo$-risk, $\bbE \lrr{\lspo(\hat{c}, c) \vert x} - \inf_{c'} \bbE \lrr{\lspo(c', c) \vert x}$. Calibration, which we now briefly review, is a central tool in developing excess risk bounds. 
    We adopt the definition of calibration presented by \cite{steinwart2007compare} and \cite{ho2020risk}, which is reviewed in Definition \ref{def:unif-cali} below. 
    \begin{definition}\label{def:unif-cali}
        For a given surrogate loss function $\ell$, we say $\ell$ is $\lspo$-calibrated with respect to $\bbP$ if there exists a function $\delta_\ell(\cdot): \bbR_+ \rightarrow \bbR_+$ such that for all $x \in \mathcal{X}$, $\hat{c} \in \mathcal{C}$, and $\epsilon > 0$, it holds that 
        \begin{equation}\label{eq:def-calibration}
            \bbE \lrr{\ell(\hat{c}, c) \vert x} - \inf_{c'} \bbE \lrr{\ell(c', c) \vert x} < \delta_\ell(\epsilon) \Rightarrow \bbE \lrr{\lspo(\hat{c}, c) \vert x} - \inf_{c'} \bbE \lrr{\lspo(c', c) \vert x} < \epsilon. 
        \end{equation}
        Additionally, if \eqref{eq:def-calibration} holds for all $\bbP \in \mathcal{P}$, where $\mathcal{P}$ is a class of distributions on $\mathcal{X} \times \mathcal{C}$, then we say that $\ell$ is uniformly calibrated with respect to the class of distributions $\mathcal{P}$. 
    \end{definition}
    
    A direct approach to finding a feasible $\delta_\ell(\cdot)$ function and checking for uniform calibration is by computing the infimum of the excess surrogate loss subject to a constraint that the excess SPO loss is at least $\epsilon$.
    This idea leads to the definition of the \emph{calibration function}, which we review in Definition \ref{def:cali-func} below.

    \begin{definition}\label{def:cali-func}
        For a given surrogate loss function $\ell$ and true cost vector distribution $\bbP_c$, the conditional calibration function $\hat{\delta}_\ell(\cdot; \bbP_c)$ is defined, for $\epsilon > 0$, by 
        \begin{equation*}
            \hat{\delta}_\ell(\epsilon; \bbP_c) := \inf_{\hat{c} \in \bbR^d} \left\{ \bbE \lrr{\ell(\hat{c}, c)} - \inf_{c'} \bbE \lrr{\ell(c', c)} ~:~ \bbE \lrr{\lspo(\hat{c}, c)} - \inf_{c'} \bbE \lrr{\lspo(c', c)} \ge \epsilon\right\}.
        \end{equation*}
        Moreover, given a class of joint distributions $\mathcal{P}$, with a slight abuse of notation, the calibration function $\hat{\delta}_\ell(\cdot; \mathcal{P})$ is defined, for $\epsilon > 0$, by 
        \begin{equation*}
            \hat{\delta}_\ell(\epsilon; \mathcal{P}) := \inf_{x \in \mathcal{X}, \bbP \in \mathcal{P}} \hat{\delta}_{\ell} (\epsilon; \bbP(\cdot \vert x)). 
        \end{equation*}
    \end{definition}
    
    If the calibration function $\hat{\delta}_\ell(\cdot; \mathcal{P})$ satisfies $\hat{\delta}_\ell(\epsilon; \mathcal{P}) > 0$ for all $\epsilon > 0$, then the loss function $\ell$ is uniformly $\lspo$-calibrated with respect to the class of distributions $\mathcal{P}$. 
    To obtain an excess risk bound, we let $\delta_{\ell}^{**}$ denote the biconjugate, the largest convex lower semi-continuous envelope, of $\delta_{\ell}$. 
    Jensen's inequality then readily yields $\delta_\ell^{**}(\rspo(g, \bbP) - \rspo^*(\bbP)) \le R_{\ell}(g, \bbP) - R_{\ell}(\bbP)$,
    which implies that the excess surrogate risk $R_{\ell}(g, \bbP) - R_{\ell}(\bbP)$ of a predictor $g$ can be translated into an upper bound of the excess SPO risk $\rspo(g, \bbP) - \rspo^*(\bbP)$. 
    For example, the uniform calibration of the least squares (squared $\ell_2$) loss, namely $\ell_{\tn{LS}}(\hat{c}, c) = \vvta{\hat{c} - c}_2^2$, was examined by \cite{ho2020risk}. They proved that the calibration function is $\delta_{\ell_{\tn{LS}}}(\epsilon) = \epsilon^2 / D_S^2$, which implies an upper bound of the excess SPO risk by $\rspo(g, \bbP) - \rspo^*(\bbP) \le D_S (R_{\tn{LS}}(g, \bbP) - R^*_{\tn{LS}}(\bbP))^{1/2}$. 
    In this paper, we derive the calibration function of the SPO+ loss and thus reveal the quantitative relationship between the excess SPO risk and the excess surrogate SPO+ risk under different circumstances. 

\section{Risk bounds and calibration for polyhedral sets}\label{sec:lp}

    In this section, we consider the case when the feasible region $S$ is a bounded polyhedron and derive the calibration function of the SPO+ loss function. 
    As is shown in Theorem \ref{thm:spo-fisher}, the SPO+ loss is Fisher consistent when the conditional distribution $\bbP(\cdot \vert x)$ is continuous on all of $\bbR^d$ and is centrally symmetric about its mean $\bar{c}$. More formally, the joint distribution $\bbP$ lies in the distribution class $\mathcal{P}_{\tn{cont, symm}} := \{\bbP: \bbP(\cdot \vert x) \tn{ is continuous on all of } \bbR^d \tn{ and is centrally symmetric about its mean, for }$ $\tn{all } x \in \calX\}$. In Example \ref{ex:lp-negative}, we later show that this distribution class is not restrictive enough to obtain a meaningful calibration function. Instead, we consider a more specific distribution class consisting of distributions whose density functions can be lower bounded by a normal distribution. More formally, for given parameters $M \ge 1$ and $\alpha, \beta > 0$, define $\mathcal{P}_{M, \alpha, \beta} := \{\bbP \in \mathcal{P}_{\tn{cont, symm}} : \tn{ for all } x \in \mathcal{X} \tn{ with } \bar{c} = \bbE [c \vert x], \tn{ there exists } \sigma \in [0, M] \tn{ satisfying } \vvta{\bar{c}}_2 \le \beta \sigma \tn{ and } \bbP(c \vert x) \ge \alpha \cdot \mathcal{N}(\bar{c}, \sigma^2 I) \tn { for all } c \in \bbR^d\}$.
    Intuitively, the assumptions on the distribution class $\mathcal{P}_{M, \alpha, \beta}$ ensure that we avoid a situation where the density of the cost vector concentrates around some ``badly behaved points.'' This intuition is further highlighted in Example \ref{ex:lp-negative}.    
    Theorem \ref{thm:lp-unif-cali} is our main result in the polyhedral case and demonstrates that the previously defined distribution class is a sufficient class to obtain a positive calibration function. Recall that $D_S$ denotes the diameter of $S$ and define a ``width constant'' associated with $S$ by $d_S := \min_{v \in \bbR^d: \vvta{v}_2 = 1} \lrrr{ \max_{w \in S} v^Tw - \min_{w \in S} v^Tw }$. Notice that $d_S > 0$ whenever $S$ has a non-empty interior. 
    
    \begin{theorem}\label{thm:lp-unif-cali}
        Suppose that the feasible region $S$ is a bounded polyhedron and define $\Xi_S := (1 + \frac{2 \sqrt{3} D_S}{d_S})^{1 - d}$. 
        Then the calibration function of the SPO+ loss satisfies 
        \begin{equation}\label{eq:lp-unif-cali-bound}
            \hat{\delta}_{\lspop}(\epsilon; \mathcal{P}_{M, \alpha, \beta}) \ge \frac{\alpha \Xi_S}{4 \sqrt{2 \pi} e^{\frac{3 (1 + \beta^2)}{2}}} \cdot \min \lrrr{\frac{\epsilon^2}{D_S M}, \epsilon} \ \text{ for all } \epsilon > 0.
        \end{equation}
    \end{theorem}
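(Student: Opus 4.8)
The plan is to fix an arbitrary $x \in \calX$ and a conditional law $\bbP(\cdot \vert x) \in \mathcal{P}_{M,\alpha,\beta}$ with mean $\bar c$, lower bound the conditional calibration function $\hat{\delta}_{\lspop}(\epsilon; \bbP(\cdot\vert x))$ in a way that is uniform over all such $x$ and $\bbP$, and then take the infimum. Writing $\hat c = \bar c + \Delta$, the discussion following Theorem \ref{thm:spo-fisher} gives that $\bar c$ minimizes both risks, so the conditional excess SPO risk equals $\bar c^T(w^*(\hat c) - w^*(\bar c))$ and, by Theorem \ref{thm:spo-fisher}, the conditional excess SPO+ risk equals $\bbE_{c \sim \bbP(\cdot\vert x)}[(c + 2\Delta)^T(w^*(c) - w^*(c + 2\Delta))]$. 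The integrand is pointwise nonnegative, being the suboptimality of $w^*(c)$ for the direction $c + 2\Delta$, so the density bound $\bbP(c \vert x) \ge \alpha\, \mathcal{N}(\bar c, \sigma^2 I)$ immediately yields excess SPO+ risk $\ge \alpha\, \bbE_{c \sim \mathcal{N}(\bar c, \sigma^2 I)}[(c + 2\Delta)^T(w^*(c) - w^*(c + 2\Delta))]$. This is the key reduction: it replaces the arbitrary symmetric continuous law by a single Gaussian at the cost of the factor $\alpha$, and it is precisely the step that fails for the broader class $\mathcal{P}_{\tn{cont, symm}}$ (cf. Example \ref{ex:lp-negative}). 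The SPO constraint is unaffected, since it depends only on $\bar c$.

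\textbf{Deviation and homogeneity.}
Next I would record two elementary facts. First, a clean deviation bound: if the excess SPO risk is at least $\epsilon$, then setting $w := w^*(\hat c)$ and $w_0 := w^*(\bar c)$, optimality of $w$ for $\hat c$ gives $\hat c^T(w - w_0) \le 0$, which combined with $\bar c^T(w - w_0) \ge \epsilon$ yields $(\bar c - \hat c)^T(w - w_0) \ge \epsilon$; Cauchy--Schwarz and $\vvta{w - w_0}_2 \le D_S$ then force $\vvta{\Delta}_2 \ge \epsilon/D_S$. Thus the constraint propagates into a lower bound on $\vvta{\Delta}_2$. Second, a homogeneity reduction: since $w^*(\cdot)$ is invariant under positive scaling, writing $c = \bar c + \sigma z$ with $z \sim \mathcal{N}(0, I)$, $\mu := \bar c / \sigma$ (so $\vvta{\mu}_2 \le \beta$), and $\delta := \Delta/\sigma$ (so $\vvta{\delta}_2 \ge \epsilon/(\sigma D_S)$) factors a single $\sigma$ out of the Gaussian excess SPO+ risk, reducing it to $\sigma\, \bbE_{z \sim \mathcal{N}(0, I)}[(\mu + z + 2\delta)^T(w^*(\mu + z) - w^*(\mu + z + 2\delta))]$; the bound $\sigma \le M$ is then inserted only at the very end, which is what produces the $D_S M$ in the denominator.

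\textbf{The core standardized lower bound.}
The heart of the argument is a uniform lower bound, over all bounded polyhedra $S$, all $\mu$ with $\vvta{\mu}_2 \le \beta$, and all $\delta$, on the standardized quantity $\bbE_{z \sim \mathcal{N}(0, I)}[(\mu + z + 2\delta)^T(w^*(\mu + z) - w^*(\mu + z + 2\delta))]$ purely in terms of $\vvta{\delta}_2$. My approach is to integrate the nonnegative suboptimality gap over a favorable region of realizations, namely the set of $z$ for which the base problem selects $w_0 = w^*(\mu)$ while the perturbation by $2\delta$ flips the minimizer to a different vertex; on such $z$ the integrand reduces to the positive quantity $(\mu + z + 2\delta)^T(w_0 - w^*(\mu + z + 2\delta))$. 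Lower bounding the Gaussian measure of this flip region times the typical gap on it, I expect a bound of order $\Xi_S\, e^{-3(1+\beta^2)/2}\, D_S \min\{\vvta{\delta}_2^2, \vvta{\delta}_2\}$: the quadratic regime arises when $\vvta{\delta}_2$ is small relative to the unit Gaussian scale, where the flip region has Gaussian measure of order $\vvta{\delta}_2$ and the gap on it is of order $D_S \vvta{\delta}_2$, and the linear regime when $\vvta{\delta}_2$ is large. Since $t \mapsto \min\{t^2, t\}$ is increasing, I may substitute $\vvta{\delta}_2 \ge \epsilon/(\sigma D_S)$; multiplying back by $\alpha \sigma$ and using $\sigma \le M$ then collapses $D_S \min\{(\epsilon/(\sigma D_S))^2, \epsilon/(\sigma D_S)\}$ into exactly $\min\{\epsilon^2/(D_S M), \epsilon\}$, up to the stated constant.

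\textbf{The main obstacle.}
The main difficulty is the geometric measure estimate that produces the dimensional factor $\Xi_S = (1 + 2\sqrt{3} D_S/d_S)^{1-d}$ and the density factor $e^{-3(1+\beta^2)/2}$, uniformly in $S$ and in the flipped vertex. I expect the width constant $d_S$ to enter exactly here: it guarantees that in every unit direction the range of $v^T w$ over $w \in S$ is at least $d_S$, so that the normal-fan cell driving the flip is not arbitrarily thin and rescaling its aperture by a factor tied to $d_S/D_S$ costs only a $(d-1)$-st power, which is the source of the exponent $1 - d$. The density factor should come from confining the favorable region to realizations within an $O(\sqrt{1 + \beta^2})$-radius ball around $\bar c$, where the Gaussian density is bounded below using $\vvta{\bar c}_2 \le \beta \sigma$. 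Carrying out these two estimates simultaneously and uniformly, so that the favorable region has both adequate Gaussian mass and a guaranteed gap, while correctly stitching together the two regimes of the minimum, is the technically delicate core of the proof; by contrast the reductions above are routine.
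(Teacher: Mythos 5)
Your high-level architecture coincides with the paper's: reduce to a single Gaussian by combining the pointwise nonnegativity of $(c+2\Delta)^T(w^*(c)-w^*(c+2\Delta))$ with the density lower bound (costing the factor $\alpha$), rescale by $\sigma$, prove a core lower bound on the Gaussian excess SPO+ risk carrying the factor $\Xi_S$, and combine with a deviation bound extracted from the SPO constraint. The gap is that the core lemma --- which, as you say, is the entire substance of the theorem --- is both unproved and supported by an incorrect heuristic. You posit that the standardized expectation is at least of order $\Xi_S\, e^{-3(1+\beta^2)/2}\, D_S \min\{\vvta{\delta}_2^2,\vvta{\delta}_2\}$ because ``the flip region has Gaussian measure of order $\vvta{\delta}_2$ and the gap on it is of order $D_S\vvta{\delta}_2$.'' The second claim is false: on the event that adding $2\delta$ flips the minimizer, the realized gap scales with the width of $S$ \emph{in the direction of $\delta$}, not with $D_S$. (Take $S=[0,1]\times[0,\eta]$ with $\eta$ small and $\delta=t e_2$: the minimizers differ only in the second coordinate, so the gap is at most $2t\eta$, not $2tD_S$.) The intermediate bound you state is therefore strictly stronger than what the paper establishes (which has the directional width $y_--y_+$ in place of $D_S$), and is not justified by your reasoning. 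Repairing this forces a matching sharpening of the constraint side: the paper's Lemma \ref{lm:spo-delta} shows the excess SPO risk condition gives $\Delta^T(w^*(-\Delta)-w^*(\Delta))\ge\epsilon$, i.e.\ $\vvta{\Delta}_2$ times the directional width is at least $\epsilon$, which is what allows one factor of that width to cancel in the quadratic regime; your Cauchy--Schwarz bound $\vvta{\Delta}_2\ge\epsilon/D_S$ alone no longer closes the arithmetic once $D_S$ is replaced by the directional width in the core bound.

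Beyond this, the mechanism that actually produces $\Xi_S$ is not a measure estimate on a single normal cone. The paper rotates coordinates so $\Delta=\kappa e_d$, observes that between $w^*(c)$ and $w^*(c+2\Delta)$ the optimizer may pass through \emph{many} vertices, and writes the gap as the telescoping sum $\sum_i \mathbbm{1}\{\xi\in[\zeta_i-2\kappa,\zeta_i]\}(\xi+2\kappa-\zeta_i)(y_i-y_{i+1})$ over the phase-transition locations $\zeta_i(c')$ of $\xi\mapsto w^*(c',\xi)$. After integrating each term against the Gaussian (Lemmas \ref{lm:lp-cali-condition} and \ref{lm:lp-cali}), one still needs the aggregation inequality $\sum_i(1+3\zeta_i^2)^{-(d-1)/2}(y_i-y_{i+1})\ge\Xi_{S,\hat c'}\,(y_--y_+)$ of Lemma \ref{lm:lp-cali-add}, which rests on the log-convexity estimate of Lemma \ref{lm:lp-ineq1} and holds uniformly over the unknown number and placement of the breakpoints; this is where the exponent $1-d$ and the constant $2\sqrt{3}$ come from. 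Your ``aperture of the normal-fan cell'' argument does not supply this step. In short, the routine reductions match the paper, but the proof of the theorem is missing, and the sketch of how to fill it in would not go through as written.
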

    Theorem \ref{thm:lp-unif-cali} yields an $O(\epsilon^2)$ uniform calibration result for the distribution class $\mathcal{P}_{M, \alpha, \beta}$. The dependence on the constants is also natural as it matches the upper bound given by the cases with a $\ell_1$-like unit ball feasible region $S$ and standard multivariate normal distribution as the conditional probability $\bbP(\cdot \vert x)$. 
    Please refer to Example \ref{ex:lp-tightness} in the Appendix for a detailed discussion. 
    Let us now provide some more intuition on the parameters involved in the definition of the distribution class $\mathcal{P}_{M, \alpha, \beta}$ and their roles in Theorem \ref{thm:lp-unif-cali}.
    In the definition of $\mathcal{P}_{M, \alpha, \beta}$, $\alpha$ is a lower bound on the ratio of the density of the distribution of the cost vector relative to a ``reference'' standard normal distribution. When $\alpha$ is larger, the distribution is behaved more like a normal distribution and it leads to a better lower bound on the calibration function \eqref{eq:lp-unif-cali-bound} in Theorem \ref{thm:lp-unif-cali}. The parameter $M$ is an upper bound on the standard deviation of the aforementioned reference normal distribution, and the lower bound \eqref{eq:lp-unif-cali-bound} naturally becomes worse as $M$ increases. The parameter $\beta$ measures how the conditional mean deviates from zero relative to the standard deviation of the reference normal distribution. If this distance is larger then the predictions are larger on average and \eqref{eq:lp-unif-cali-bound} becomes worse.
    The width constant $d_S$ measures the near-degeneracy of the polyhedron ($d_S = 0$ is degenerate) and the bound becomes meaningless as $d_S \to 0$.
    When the feasible region $S$ is near-degenerate, i.e., the ratio $\frac{d_S}{D_S}$ is close to zero, we tend to have a weaker lower bound on the calibration function, which is also natural.
    
    We now state a remark concerning an extension of Theorem \ref{thm:lp-unif-cali} and we describe the example that demonstrates that it is not sufficient to consider the more general distribution class $\mathcal{P}_{\tn{cont, symm}}$.
    \begin{remark}
        In Theorem \ref{thm:lp-unif-cali}, we assume that the conditional distribution $\bbP(\cdot \vert x)$ is lower bounded by a normal density on the entire space $\bbR^d$. 
        We can extend the result of Theorem \ref{thm:lp-unif-cali} to the case when $\bbP(\cdot \vert x)$ is lower bounded by a normal density on a bounded set but the constant is more involved. 
        For details, please refer to Theorem \ref{thm:lp-unif-cali-gen} in the Appendix. 
    \end{remark}
    
    \begin{example}\label{ex:lp-negative}
        Let the feasible region be the $\ell_1$ ball $S = \{w \in \bbR^2: \vvta{w}_1 \le 1\}$ and consider the distribution class $\mathcal{P}_{\tn{cont, symm}}$. For a fixed scalar $\epsilon > 0$, let $c_1 = (9 \epsilon, 0)^T$ and $c_2 = (-7 \epsilon, 0)^T$. 
        Let the conditional distribution $\bbP_{\sigma}(c \vert x)$ be a mixture of Gaussians defined by $\bbP_{\sigma}(c \vert x) := \frac12 (\mathcal{N}(c_1, \sigma^2 I) + \mathcal{N}(c_2, \sigma^2 I))$ for any $\sigma > 0$, and we have $\bbP_{\sigma}(c \vert x) \in \mathcal{P}_{\tn{cont, symm}}$. 
        Let the predicted cost vector be $\hat{c} = (0, \epsilon)^T$, then the excess conditional SPO risk is $\bbE[\lspo(\hat{c}, c) - \lspo(\bar{c}, c) \vert x] = \epsilon$. 
        Then it holds that the excess conditional SPO+ risk $\bbE\lrr{\lspop(\hat{c}, c) - \lspop(\bar{c}, c) \vert x} \to 0$ when $\sigma \rightarrow 0$, and hence we have $\hat{\delta}_{\ell}(\epsilon; \mathcal{P}_{\tn{cont, symm}}) = 0$.

    \end{example}
    The intuition of Example \ref{ex:lp-negative} is that the existence of a non-zero calibration function requires the conditional distribution of $c$ given $x$ to be ``uniform'' on the space $\bbR^d$, but not concentrate near certain points. 
    Example \ref{ex:lp-negative} highlights a situation that considers one such ``badly behaved'' case where a limiting distribution of a mixture of two Gaussians leads to a zero calibration function.

    By combining Theorem \ref{thm:lp-unif-cali} with a generalization bound for the SPO+ loss, we can develop a sample complexity bound with respect to the SPO loss. Corollary \ref{coro:lp-sample-comp} below presents such a result for the SPO+ method with a polyhedral feasible region. The derivation of Corollary \ref{coro:lp-sample-comp} relies on the notion of \emph{multivariate Rademacher complexity} as well as the vector contraction inequality of \cite{maurer2016vector} in the $\ell_2$-norm. In particular, for a hypothesis class $\mathcal{H}$ of cost vector predictor functions (functions from $\calX$ to $\bbR^d$), the multivariate Rademacher complexity is defined as $\mathfrak{R}^n(\mathcal{H}) = \bbE_{\bm{\sigma}, x} \lrr{\sup_{g \in \mathcal{H}} \frac1n \sum_{i=1}^n \bm{\sigma}_i^T g(x_i)}$, where $\bm{\sigma}_i \in \{-1, +1\}^d$ are Rademacher random vectors for $i = 1, \dots, n$.
    Please refer to Appendix \ref{sec:rademacher} for a detailed discussion of multivariate Rademacher complexity and the derivation of Corollary \ref{coro:lp-sample-comp}.

    \begin{corollary}\label{coro:lp-sample-comp}
        Suppose that the feasible region $S$ is a bounded polyhedron, the optimal predictor $g^*(x) = \bbE[c | x]$ is in the hypothesis class $\mathcal{H}$, and there exists a constant $C'$ such that $\mathfrak{R}^n(\mathcal{H}) \le \frac{C'}{\sqrt{n}}$. 
        Let $\hat{g}_{\tn{SPO+}}^n$ denote the predictor which minimizes the empirical SPO+ risk $\hat{R}_{\tn{SPO+}}^n(\cdot)$ over $\mathcal{H}$.
        Then there exists a constant $C$ such that for any $\bbP \in \mathcal{P}_{M, \alpha, \beta}$ and $\delta \in (0, \frac12)$, with probability at least $1 - \delta$, it holds that 
        \begin{equation*}
            \rspo(\hat{g}_{\tn{SPO+}}^n; \bbP) - \rspo^*(\bbP) \le \frac{C \sqrt{\log (1 / \delta)}}{n^{1/4}}. 
        \end{equation*}
    \end{corollary}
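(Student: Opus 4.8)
The plan is to chain two ingredients: the $O(\epsilon^2)$ calibration guarantee of Theorem \ref{thm:lp-unif-cali}, which converts excess SPO+ risk into excess SPO risk, together with a standard uniform-convergence bound controlling the excess SPO+ risk of the empirical minimizer at the rate $n^{-1/2}$. First I would turn the calibration lower bound into a population excess risk bound. Writing $\kappa := \frac{\alpha\Xi_S}{4\sqrt{2\pi}\,e^{3(1+\beta^2)/2}}$, Theorem \ref{thm:lp-unif-cali} gives $\hat{\delta}_{\lspop}(\epsilon;\mathcal{P}_{M,\alpha,\beta}) \ge \kappa\min\{\epsilon^2/(D_S M),\epsilon\}$; in particular, for $\epsilon \le D_S M$ the right-hand side is the convex quadratic $\kappa\epsilon^2/(D_S M)$. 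Passing to the convex envelope (biconjugate) and combining with the Jensen-based inequality $\hat{\delta}^{**}_{\lspop}(\rspo(g;\bbP) - \rspo^*(\bbP)) \le \rspop(g;\bbP) - \rspop^*(\bbP)$ recorded in Section \ref{sec:spo}, one obtains, whenever the excess SPO risk is at most $D_S M/2$,
\[
\rspo(g;\bbP) - \rspo^*(\bbP) \le \sqrt{\tfrac{D_S M}{\kappa}\big(\rspop(g;\bbP) - \rspop^*(\bbP)\big)}.
\]
Since the excess SPO risk vanishes as $n$ grows, this square-root regime is the operative one, so the problem reduces to bounding the excess SPO+ risk of $\hat{g}_{\tn{SPO+}}^n$.

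To control the excess SPO+ risk, I would first note that under $\bbP \in \mathcal{P}_{M,\alpha,\beta} \subseteq \mathcal{P}_{\tn{cont, symm}}$, Theorem \ref{thm:spo-fisher} shows that $\bar c = \bbE[c\vert x]$ minimizes the conditional SPO+ risk for every $x$; since by hypothesis $g^*(x) = \bbE[c\vert x] \in \mathcal{H}$, it follows that $\rspop^*(\bbP) = \rspop(g^*;\bbP)$ and the optimal SPO+ predictor lies in the class. A standard symmetrization argument then decomposes $\rspop(\hat{g}_{\tn{SPO+}}^n;\bbP) - \rspop(g^*;\bbP)$ into the uniform deviation $\sup_{g\in\mathcal{H}}\{\rspop(g;\bbP) - \hat{R}_{\tn{SPO+}}^n(g)\}$ (the empirical optimality of $\hat{g}_{\tn{SPO+}}^n$ kills the remaining empirical gap) plus concentration terms. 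Because the SPO+ loss is convex in its first argument with subgradient $2(w^*(c) - \tilde w)$ for some $\tilde w \in S$, it is $2D_S$-Lipschitz in $\hat c$ with respect to $\vvta{\cdot}_2$; the vector contraction inequality of \cite{maurer2016vector} then bounds the Rademacher complexity of the induced loss class by $\sqrt{2}\cdot 2D_S\,\mathfrak{R}^n(\mathcal{H}) \le 2\sqrt{2}\,D_S C'/\sqrt{n}$. Together with a concentration inequality this yields, with probability at least $1-\delta$, a bound of the form $\rspop(\hat{g}_{\tn{SPO+}}^n;\bbP) - \rspop^*(\bbP) \le C_1/\sqrt{n} + C_2\sqrt{\log(1/\delta)/n}$ for constants depending on $D_S$, $C'$, and the effective range of the loss.

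Finally I would substitute this into the calibration bound. Writing the excess SPO+ risk as $O(\sqrt{\log(1/\delta)/n})$ and taking square roots gives $\rspo(\hat{g}_{\tn{SPO+}}^n;\bbP) - \rspo^*(\bbP) = O\big((\log(1/\delta))^{1/4}\,n^{-1/4}\big)$; since $\delta \in (0,\tfrac12)$ forces $\log(1/\delta) \ge \log 2$, one has $(\log(1/\delta))^{1/4} \le (\log 2)^{-1/4}\sqrt{\log(1/\delta)}$, which absorbs the exponent into $C$ and produces the stated $\frac{C\sqrt{\log(1/\delta)}}{n^{1/4}}$ rate. The main obstacle I anticipate is the concentration step: because distributions in $\mathcal{P}_{M,\alpha,\beta}$ allow $c$ to range over all of $\bbR^d$, the SPO+ loss $\lspop(g(x),c)$ need not be uniformly bounded, so the high-probability deviation bound must rest on tail control — exploiting the sub-Gaussian/finite-moment behavior implied by the normal lower bound together with the compactness of $S$ — rather than a naive bounded-differences argument. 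Verifying that the quadratic branch of the calibration function is the active one for large $n$, so that the square-root inversion is legitimate, is a minor but necessary bookkeeping step.
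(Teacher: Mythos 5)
Your proposal is correct and follows essentially the same route as the paper's proof: Theorem \ref{thm:lp-unif-cali} plus the biconjugate of $\min\{\epsilon^2/(D_S M),\epsilon\}$ and the Jensen-based excess risk bound on one side, and the multivariate Rademacher complexity bound with the \cite{maurer2016vector} vector contraction inequality ($2D_S$-Lipschitzness of $\lspop$) plus empirical optimality of $\hat{g}_{\tn{SPO+}}^n$ on the other, with $\log(1/\delta)\ge\log 2$ absorbing the constants exactly as you do. The boundedness concern you flag is real but the paper does not treat it more delicately than you propose — it simply sets $b=\sup_{\hat c\in\mathcal{H}(\calX),c\in\mathcal{C}}\lspop(\hat c,c)$ and bounds it by $2D_S\sup_{g\in\mathcal{H},x\in\calX}\vvta{g(x)}_2$, implicitly assuming the relevant suprema are finite.
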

    Notice that the rate in Corollary \ref{coro:lp-sample-comp} is $O(1/n^{1/4})$. However, the bound is with respect to the SPO loss which is generally non-convex and is the first such bound for the SPO+ surrogate. \cite{hu2020fast} present a similar result for the squared $\ell_2$ surrogate with a rate of $O(1/\sqrt{n})$, and an interesting open question concerns whether the rate can also be improved for the SPO+ surrogate.

\section{Risk bounds and calibration for strongly convex level sets}\label{sec:strong}

    In this section, we develop improved risk bounds for the SPO+ loss function under the assumption that the feasible region is the level set of a strongly convex and smooth function, formalized in Assumption \ref{as:strongly-level-set-special} below.
    
    \begin{assumption}\label{as:strongly-level-set-special}
        For a given norm $\|\cdot\|$, let $f: \bbR^d \to \bbR$ be a $\mu$-strongly convex and $L$-smooth function for some $L \geq \mu > 0$. Assume that 
        the feasible region $S$ is defined by $S = \{w \in \bbR^d: f(w) \le r\}$ for some constant $r > f_{\min} := \min_w f(w)$. 
    \end{assumption}
    
    The results in this section actually hold in more general situations than Assumption \ref{as:strongly-level-set-special}. In Appendix \ref{sec:appendix-strong}, we extend the results of this section to allow the domain of the strongly convex and smooth function in Assumption \ref{as:strongly-level-set-special} to be any set defined by linear equalities and convex inequalities (Assumption \ref{as:strongly-level-set}). Herein, we consider the simplified case where the domain is $\bbR^d$ for ease of exposition and conciseness.
    The results of this section and the extension developed in Appendix \ref{sec:appendix-strong} allow for a broad choice of feasible regions, for instance, any bounded $\ell_q$ ball for any $q \in (1, 2]$ and the probability simplex with entropy constraint. The latter example, which can also be thought of as portfolio allocation with an entropy constraint, is considered in the experiments in Section \ref{sec:experiment}.
    
    As in the polyhedral case, the distribution class $\mathcal{P}_{\tn{cont, symm}}$ is not restrictive enough to derive a meaningful lower bound on the calibration function of the SPO+ loss. We instead consider two related classes of rotationally symmetric distributions with bounded conditional coefficient of variation. These distribution classes are formally defined in Definition \ref{def:rotation_sym} below, and include the multi-variate Gaussian, Laplace, and Cauchy distributions as special cases.
    
    \begin{definition}\label{def:rotation_sym}
        Let $A$ be a given positive definite matrix. We define $\mathcal{P}_{\tn{rot symm}, A}$ as the class of distributions with conditional rotational symmetry in the norm $\vvta{\cdot}_{A^{-1}}$, namely
        \begin{equation*}
            \mathcal{P}_{\tn{rot symm}, A} := \{\bbP: \forall x \in \mathcal{X}, \exists q(\cdot): [0, \infty] \to [0, \infty] \tn{ such that } \bbP(c \vert x) = q(\vvta{c - \bar{c}}_{A^{-1}})\}. 
        \end{equation*}
        Let $\bar{c}$ denote the conditional expectation $\bar{c} = \bbE [c \vert x]$. 
        For constants $\alpha \in (0, 1]$ and $\beta > 0$, define 
        \begin{equation*}
            \mathcal{P}_{\beta, A} := \lrrr{\bbP \in \mathcal{P}_{\tn{rot symm}, A}: \bbE_{c \vert x} [\vvta{c - \bar{c}}_{A^{-1}}^2] \le \beta^2 \cdot \vvta{\bar{c}}_{A^{-1}}^2, \, \forall x \in \mathcal{X}}, 
        \end{equation*}
        and 
        \begin{equation*}
            \mathcal{P}_{\alpha, \beta, A} := \lrrr{\bbP \in \mathcal{P}_{\tn{rot symm}, A}: \bbP_{c \vert x} ( \vvta{c - \bar{c}}_{A^{-1}} \le \beta \cdot \vvta{\bar{c}}_{A^{-1}} ) \ge \alpha, \, \forall x \in \mathcal{X}}. 
        \end{equation*}
    \end{definition}
    
    Under the above assumptions, Theorem \ref{thm:uniform-cali-strong-special} demonstrates that the calibration function of the SPO+ loss is $O(\epsilon)$, significantly strengthening our result in the polyhedral case. Theorem \ref{thm:uniform-cali-strong} in Appendix \ref{sec:appendix-strong} extends the result of Theorem \ref{thm:uniform-cali-strong-special} to the aforementioned case where the domain of $f(\cdot)$ may be a subset of $\bbR^d$, which includes the entropy constrained portfolio allocation problem for example.

    \begin{theorem}\label{thm:uniform-cali-strong-special}
        Suppose that Assumption \ref{as:strongly-level-set-special} holds with respect to the norm $\vvta{\cdot}_A$ for some positive definite matrix $A$. 
        Then, for any $\epsilon > 0$, it holds that $\hat{\delta}_{\lspop}(\epsilon; \mathcal{P}_{\beta, A}) \ge \frac{\mu^{9/2}}{4 (1 + \beta^2) L^{9/2}} \cdot \epsilon$ and $\hat{\delta}_{\lspop}(\epsilon; \mathcal{P}_{\alpha, \beta, A}) \ge \frac{\alpha \mu^{9/2}}{4 (1 + \beta^2) L^{9/2}} \cdot \epsilon$. 
    \end{theorem}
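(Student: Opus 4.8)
The plan is to prove the stronger \emph{pointwise} statement that, conditionally on any $x$, the excess SPO+ risk dominates a fixed multiple of the excess SPO risk; since $\hat{\delta}_{\lspop}(\epsilon;\mathcal{P})$ is by Definition~\ref{def:cali-func} the infimum of the former subject to the latter being at least $\epsilon$, a bound of the form ``excess SPO+ $\ge C\cdot$ excess SPO'' immediately gives $\hat{\delta}_{\lspop}(\epsilon;\mathcal{P})\ge C\epsilon$. First I would fix $x\in\calX$ and reduce to the conditional distribution $\bbP(\cdot\vert x)$, which by Definition~\ref{def:rotation_sym} is rotationally symmetric in $\vvta{\cdot}_{A^{-1}}$ about $\bar{c}=\bbE[c\vert x]$; Theorem~\ref{thm:spo-fisher} applies (its central-symmetry and uniqueness hypotheses hold for strictly convex $S$ and rotationally symmetric $\bbP(\cdot\vert x)$), so $\bar{c}$ minimizes both conditional risks. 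Writing $\hat{c}=\bar{c}+\Delta$, the excess SPO risk is then $\bar{c}^{T}(w^{*}(\hat{c})-w^{*}(\bar{c}))$, while the identity in Theorem~\ref{thm:spo-fisher} gives that the excess SPO+ risk equals $\bbE_{c}[(c+2\Delta)^{T}(w^{*}(c)-w^{*}(c+2\Delta))]$, a nonnegative quantity. The natural scalar driving both is the decision displacement $t:=\vvta{w^{*}(\hat{c})-w^{*}(\bar{c})}_{A}$, which is zero exactly when $\hat{c}$ and $\bar{c}$ point in the same direction --- precisely the case in which the excess SPO risk vanishes.

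The crux is a set of geometric lemmas for the level set $S=\{f\le r\}$, exploiting that $\nabla f$ is a bi-Lipschitz bijection (strong convexity gives strong monotonicity, $L$-smoothness gives Lipschitzness). Writing the optimality condition as $\nabla f(w^{*}(c))=-\lambda c$ for some $\lambda>0$, I would establish: \emph{(i)} two-sided bounds on the gradient magnitude $\vvta{\nabla f(w)}_{A^{-1}}$ on the boundary in terms of $\mu$, $L$, and $r-f_{\min}$, hence two-sided bounds on $\lambda$; \emph{(ii)} a \emph{sharpness}/quadratic-growth inequality $c^{T}w-\min_{w'\in S}c^{T}w'\ge \rho\,\vvta{c}_{A^{-1}}\vvta{w-w^{*}(c)}_{A}^{2}$ for all $w\in S$, with $\rho$ a function of $\mu,L$; and \emph{(iii)} a \emph{bi-Lipschitz} bound showing $\vvta{w^{*}(c_{1})-w^{*}(c_{2})}_{A}$ is both upper and lower bounded by a constant times the angular distance between $c_{1}$ and $c_{2}$ in $\vvta{\cdot}_{A^{-1}}$. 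Property \emph{(iii)}, and especially its lower bound (the decision genuinely moves when the cost direction changes), is exactly the ``stronger optimality and continuity'' that fails for polyhedra and is responsible for the improvement from $O(\epsilon^{2})$ to $O(\epsilon)$.

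With these in hand I would upper bound the excess SPO risk by $C_{A}\,\vvta{\bar{c}}_{A^{-1}}t^{2}$: since $f(w^{*}(\hat{c}))\le r=f(w^{*}(\bar{c}))$, $L$-smoothness of $f$ at $w^{*}(\bar{c})$ together with $\nabla f(w^{*}(\bar{c}))=-\lambda\bar{c}$ yields $\lambda\,\bar{c}^{T}(w^{*}(\hat{c})-w^{*}(\bar{c}))\le \tfrac{L}{2}t^{2}$, and the lower bound on $\lambda$ from \emph{(i)} finishes it. For the excess SPO+ risk I would lower bound the nonnegative quantity $(c+2\Delta)^{T}(w^{*}(c)-w^{*}(c+2\Delta))$ via sharpness \emph{(ii)} by $\rho\,\vvta{c+2\Delta}_{A^{-1}}\vvta{w^{*}(c)-w^{*}(c+2\Delta)}_{A}^{2}$, and then invoke the lower Lipschitz bound \emph{(iii)} to argue that for $c$ in a neighborhood of $\bar{c}$ the displacement $\vvta{w^{*}(c)-w^{*}(c+2\Delta)}_{A}$ is comparable to $t$. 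Restricting the expectation to this neighborhood gives excess SPO+ $\ge C_{B}\,\vvta{\bar{c}}_{A^{-1}}t^{2}$, where the mass of the neighborhood supplies the distributional factor: for $\mathcal{P}_{\alpha,\beta,A}$ it is the event $\{\vvta{c-\bar{c}}_{A^{-1}}\le\beta\vvta{\bar{c}}_{A^{-1}}\}$ of mass $\ge\alpha$, while for $\mathcal{P}_{\beta,A}$ a Chebyshev argument on $\bbE[\vvta{c-\bar{c}}_{A^{-1}}^{2}]\le\beta^{2}\vvta{\bar{c}}_{A^{-1}}^{2}$ supplies a universal mass with radius scaling in $(1+\beta^{2})$. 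Dividing the two bounds gives excess SPO+ $\ge (C_{B}/C_{A})\cdot$ excess SPO, and tracking the powers of $\mu$ and $L$ (the $r-f_{\min}$ factors cancel between the two bounds, as they must since calibration is scale-free) produces the stated constant $\tfrac{\mu^{9/2}}{4(1+\beta^{2})L^{9/2}}$, with the extra factor $\alpha$ in the $\mathcal{P}_{\alpha,\beta,A}$ case.

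I expect the main obstacle to be the lower bound on the excess SPO+ risk, specifically showing that the \emph{random} displacement $\vvta{w^{*}(c)-w^{*}(c+2\Delta)}_{A}$ is comparable to the deterministic $t=\vvta{w^{*}(\bar{c}+\Delta)-w^{*}(\bar{c})}_{A}$ with enough probability and \emph{uniformly in the direction of $\Delta$}. This forces the reverse Lipschitz estimate \emph{(iii)} --- genuinely using smoothness of $f$ to bound how \emph{little} the optimal solution can move --- and it is here that rotational symmetry is essential: it removes any preferred direction, so that the tangential component of $\Delta$ relative to the random $c$ is controlled simultaneously for every $\Delta$, while the coefficient-of-variation constraint guarantees that $c$ concentrates near $\bar{c}$ closely enough for the comparison with $t$ to survive. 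Matching the constants coming from the sharpness, bi-Lipschitz, and gradient-magnitude estimates so that they collapse to a clean power of the condition number $L/\mu$ is the most delicate piece of bookkeeping.
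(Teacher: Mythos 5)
Your skeleton matches the paper's up to the point where the expectation over $c$ must be handled: your lemmas \emph{(i)}--\emph{(iii)} are exactly the paper's Lemmas \ref{lm:spo-strong-lb-ub-special} and \ref{lm:opt-oracle-smooth-special} (sharpness/quadratic growth, the upper bound $c_1^T(w^*(c_2)-w^*(c_1)) \le \tfrac{L}{2\sqrt{2\mu(r-f_{\min})}}\vvta{c_1}_*\vvta{w^*(c_1)-w^*(c_2)}^2$, and the two-sided Lipschitz comparison between $\vvta{w^*(c_1)-w^*(c_2)}$ and the angular distance of the normalized costs), and your observation that the $r-f_{\min}$ factors cancel between the SPO upper bound and the SPO+ lower bound is correct. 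The gap is in the step ``for $c$ in a neighborhood of $\bar{c}$ the displacement $\vvta{w^*(c)-w^*(c+2\Delta)}_A$ is comparable to $t$.'' This is false pointwise on the event you restrict to. The event carrying mass $\alpha$ is $\{\vvta{c-\bar{c}}_{A^{-1}} \le \beta \vvta{\bar{c}}_{A^{-1}}\}$ with $\beta$ an arbitrary parameter of the distribution class; when $\beta \ge 1$ this ball contains cost vectors $c$ that are positive multiples of $\Delta$ (so that $c$ and $c+2\Delta$ are parallel and $w^*(c) = w^*(c+2\Delta)$), and more generally the angular displacement of $c$ under the shift $2\Delta$ degenerates to zero on a set of directions of positive measure. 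No uniform lower bound of the form $\vvta{w^*(c)-w^*(c+2\Delta)}_A \gtrsim t$ holds on that event, so ``restrict and multiply by the mass'' does not produce $C_B \vvta{\bar{c}}_{A^{-1}} t^2$.

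What the paper does instead, and what is genuinely missing from your plan, is an averaging argument that exploits rotational symmetry \emph{before} any truncation. After reducing (via your lemmas) to the surrogate objective $\rspoplb(\Delta) = \bbE_c\bigl[\vvta{c+2\Delta}_{A^{-1}} - \tfrac{c^TA^{-1}(c+2\Delta)}{\vvta{c}_{A^{-1}}}\bigr]$ subject to a constraint on $\rspoub(\Delta)$, the paper solves this relaxed variational problem over $\Delta$: a first-order optimality condition, combined with the facts that $\bbE_c[\tfrac{c+\Delta}{\vvta{c+\Delta}_{A^{-1}}}]$ is parallel to $\bar{c}+\Delta$ with a radial factor $\zeta(\vvta{\bar{c}+\Delta}_{A^{-1}})$ such that $s\mapsto s\zeta(s)$ is nondecreasing (Lemma \ref{lm:direction}), forces the worst-case $\Delta$ to satisfy $\vvta{\bar{c}+2\Delta}_{A^{-1}} = \vvta{\bar{c}}_{A^{-1}}$ (Lemma \ref{lm:cali-opt-condition}). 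At such $\Delta$ rotational symmetry gives $\bbE\vvta{c+2\Delta}_{A^{-1}} = \bbE\vvta{c}_{A^{-1}}$ exactly, so $\rspoplb(\Delta)$ collapses to the single cross term $\bbE_c[\tfrac{\bar{c}^TA^{-1}c}{\vvta{c}_{A^{-1}}}]\cdot\tfrac{\Delta^TA^{-1}\Delta}{\vvta{\bar{c}}_{A^{-1}}^2}$, which is then bounded below by a reflection argument (Proposition \ref{prop:uniform-property-2}); only at this final stage do the $\alpha$ and $(1+\beta^2)^{-1}$ factors enter, via the conditioning on $\vvta{c-\bar{c}}_{A^{-1}}\le\beta\vvta{\bar{c}}_{A^{-1}}$ or a Jensen step on $\bbE\vvta{c-\bar{c}}_{A^{-1}}^2$. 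Without this identification of the extremal $\Delta$ and the exact cancellation it enables, the two difference-of-norms terms in $\rspoplb(\Delta)$ do not telescope and your neighborhood-restriction bound cannot recover the claimed constant (or indeed any positive constant uniform over the direction of $\Delta$). So the proposal as written has a genuine gap at its self-identified ``main obstacle.''
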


    Let us now provide some more intuition on the parameters involved in the definitions of the distribution classes $\mathcal{P}_{\beta, A}$ and $\mathcal{P}_{\alpha, \beta, A}$ and their roles in Theorem \ref{thm:uniform-cali-strong-special}.
    In both cases, $\beta$ controls the concentration of the distribution of cost vector around the conditional mean. The more concentrated the distribution is, the better the bounds in Theorem \ref{thm:uniform-cali-strong-special} are.
    In the case of $\mathcal{P}_{\alpha, \beta, A}$, $\alpha$ relates to the probability that the cost vector is ``relatively close'' to the conditional mean. When $\alpha$ is larger, the cost vector is more likely to be close to the conditional mean and the bound will be better.

    Our analysis for the calibration function (the proof of Theorem \ref{thm:uniform-cali-strong-special}) relies on the following two lemmas, which utilize the special structure of strongly convex level sets to strengthen the first-order optimality guarantees and derive a ``Lipschitz-like'' continuity property of the optimization oracle $w^\ast(\cdot)$.
    The first such lemma strengthens the optimality guarantees of \eqref{eq:opt-task} and provides both upper and lower bounds of the SPO loss.
    
    \begin{lemma}\label{lm:spo-strong-lb-ub-special}
        Suppose that Assumption \ref{as:strongly-level-set-special} holds with respect to a generic norm $\|\cdot\|$.
        Then, for any $c_1, c_2 \in \bbR^d$, it holds that 
        \begin{equation*}
            c_1^T (w - w^*(c_1)) \ge \frac{\mu}{2 \sqrt{2 L (r - f_{\min})}} \vvta{c_1}_* \vvta{w-w^*(c_1)}^2, \quad \forall w \in S, 
        \end{equation*}
        and 
        \begin{equation*}
            c_1^T (w^*(c_2) - w^*(c_1)) \le \frac{L}{2\sqrt{2\mu (r - f_{\min})}} \vvta{c_1}_* \vvta{w^*(c_1) - w^*(c_2)}^2. 
        \end{equation*}
    \end{lemma}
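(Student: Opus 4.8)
The plan is to exploit the first-order optimality conditions for the linear program $\min_{w \in S} c_1^T w$ over the strongly convex level set $S$. Because minimizing a nonzero linear objective over a strictly convex body drives the optimum to the boundary, I expect $w^*(c_1)$ to satisfy $f(w^*(c_1)) = r$ and, by KKT, to be a stationary point of the Lagrangian, which expresses $c_1$ as a negative multiple of $\nabla f(w^*(c_1))$.

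First I would dispose of the degenerate case $c_1 = 0$ (where $\vvta{c_1}_* = 0$ and both inequalities are trivial) and then fix $c_1 \neq 0$ and write $w_1 := w^*(c_1)$. Since $r > f_{\min}$ the interior of $S$ is nonempty (Slater holds at $\arg\min f$), so the KKT conditions are necessary and sufficient: there is $\lambda \ge 0$ with $c_1 + \lambda \nabla f(w_1) = 0$ and $\lambda(f(w_1) - r) = 0$. As $c_1 \neq 0$ forces $\lambda > 0$ and hence $f(w_1) = r$, I obtain $c_1 = -\lambda \nabla f(w_1)$ with $\lambda = \vvta{c_1}_* / \vvta{\nabla f(w_1)}_*$. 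The quantitative engine of the proof is then a two-sided bound on $\vvta{\nabla f(w_1)}_*$: minimizing the strong-convexity lower bound $f(w) \ge f(w_1) + \nabla f(w_1)^T (w - w_1) + \tfrac{\mu}{2}\vvta{w - w_1}^2$ over all $w$ — where the inner minimization over a unit-norm direction produces the dual norm — yields $f_{\min} \ge r - \tfrac{1}{2\mu}\vvta{\nabla f(w_1)}_*^2$, hence $\vvta{\nabla f(w_1)}_* \ge \sqrt{2\mu(r - f_{\min})}$; minimizing the $L$-smoothness upper bound the same way gives $\vvta{\nabla f(w_1)}_* \le \sqrt{2L(r - f_{\min})}$. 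These quadratic minimizations are the routine part, and they also confirm $\nabla f(w_1) \neq 0$ so that $\lambda$ is well defined.

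For the first inequality, strong convexity combined with $f(w) \le r = f(w_1)$ for every $w \in S$ gives $\nabla f(w_1)^T(w - w_1) \le -\tfrac{\mu}{2}\vvta{w - w_1}^2$; multiplying by $\lambda > 0$, substituting $c_1 = -\lambda\nabla f(w_1)$, and using $\lambda \ge \vvta{c_1}_*/\sqrt{2L(r-f_{\min})}$ delivers the stated lower bound with coefficient $\tfrac{\mu}{2\sqrt{2L(r-f_{\min})}}$.

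For the second inequality I would first note that, for $c_2 \neq 0$, $w_2 := w^*(c_2)$ is likewise on the boundary, so $f(w_2) = f(w_1) = r$; then $L$-smoothness gives $\nabla f(w_1)^T(w_1 - w_2) \le \tfrac{L}{2}\vvta{w_1 - w_2}^2$. The crux is to convert this bare $\tfrac{L}{2}$ into the claimed coefficient: since $\vvta{\nabla f(w_1)}_* \ge \sqrt{2\mu(r - f_{\min})}$, the factor $\vvta{\nabla f(w_1)}_*/\sqrt{2\mu(r-f_{\min})} \ge 1$ may be inserted on the right-hand side, and multiplying through by $\lambda$ converts the left side into $c_1^T(w_2 - w_1)$ via $\lambda\nabla f(w_1) = -c_1$ and the factor $\lambda\vvta{\nabla f(w_1)}_*$ into $\vvta{c_1}_*$, giving exactly the target. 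The main obstacle is this last step — recognizing that it is the gradient-norm \emph{lower} bound (not the upper bound used in the first part) that bootstraps $\tfrac{L}{2}$ into $\tfrac{L}{2\sqrt{2\mu(r-f_{\min})}}\vvta{\nabla f(w_1)}_*$, together with the observation that both optima must sit on the boundary so that the equal-function-value smoothness estimate applies (the boundary case $c_2 = 0$, where $w^*(c_2)$ may be interior, is handled by the tie-breaking convention on the oracle or excluded as a measure-zero event under the continuous distributions of interest).
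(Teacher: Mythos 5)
Your proposal is correct and follows essentially the same route as the paper's proof: KKT at the boundary to write $c_1 = -\lambda \nabla f(w^*(c_1))$ with $f(w^*(c_1)) = r$, the two-sided dual-norm gradient bounds $\sqrt{2\mu(r-f_{\min})} \le \vvta{\nabla f(w^*(c_1))}_* \le \sqrt{2L(r-f_{\min})}$, then strong convexity (with $f(w) \le r$) for the first inequality and $L$-smoothness (with $f(w^*(c_2)) = r$) for the second, pairing each with the opposite gradient-norm bound. Your explicit handling of the degenerate cases $c_1 = 0$ and $c_2 = 0$ is in fact slightly more careful than the paper's own write-up.
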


    The following lemma builds on Lemma \ref{lm:spo-strong-lb-ub-special} to develop upper and lower bounds on the difference between two optimal decisions based on the difference between the two normalized cost vectors. 
    
    \begin{lemma}\label{lm:opt-oracle-smooth-special}
        Suppose that Assumption \ref{as:strongly-level-set-special} holds with respect to a generic norm $\|\cdot\|$.
        Let $c_1, c_2 \in \bbR^d$ be such that ${c}_1, {c}_2 \not= 0$, then it holds that 
        \begin{equation*}
            \vvta{w^*(c_1)-w^*(c_2)} \ge \frac{\sqrt{2\mu (r - f_{\min})}}{L} \cdot \vvt{\frac{c_1}{\vvta{c_1}_*} - \frac{c_2}{\vvta{c_2}_*}}_*, 
        \end{equation*}
        and 
        \begin{equation*}
            \vvta{w^*(c_1)-w^*(c_2)} \le \frac{\sqrt{2 L (r - f_{\min})}}{\mu} \cdot \vvt{\frac{c_1}{\vvta{c_1}_*} - \frac{c_2}{\vvta{c_2}_*}}_*. 
        \end{equation*}
    \end{lemma}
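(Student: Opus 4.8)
The plan is to prove both inequalities by reducing to normalized cost vectors and then playing the two estimates of Lemma~\ref{lm:spo-strong-lb-ub-special} against H\"older's inequality. First, since $w^*(c) = w^*(c/\vvta{c}_*)$ for any $c \neq 0$, I may replace $c_1, c_2$ by the dual-unit vectors $\hat c_i := c_i/\vvta{c_i}_*$ without changing either $w^*(c_i)$ or the right-hand sides; write $w_1 := w^*(c_1) = w^*(\hat c_1)$, $w_2 := w^*(c_2) = w^*(\hat c_2)$, and set $\rho := \frac{\mu}{2\sqrt{2L(r - f_{\min})}}$ and $\kappa := \frac{L}{2\sqrt{2\mu(r - f_{\min})}}$. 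Evaluating the first (quadratic-growth) inequality of Lemma~\ref{lm:spo-strong-lb-ub-special} with cost $\hat c_1$ at the feasible point $w_2$, and again with cost $\hat c_2$ at $w_1$, and summing yields $(\hat c_1 - \hat c_2)^T(w_2 - w_1) \ge 2\rho \vvta{w_1 - w_2}^2$. Symmetrically, summing the second inequality of Lemma~\ref{lm:spo-strong-lb-ub-special} over the two normalized cost vectors yields $(\hat c_1 - \hat c_2)^T(w_2 - w_1) \le 2\kappa \vvta{w_1 - w_2}^2$. These two-sided bounds on the cross term $(\hat c_1 - \hat c_2)^T(w_2 - w_1)$ are the workhorses.

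For the upper bound on $\vvta{w_1 - w_2}$, I combine the lower cross-term bound with H\"older's inequality, $(\hat c_1 - \hat c_2)^T(w_2 - w_1) \le \vvta{\hat c_1 - \hat c_2}_* \vvta{w_1 - w_2}$, to obtain $2\rho \vvta{w_1 - w_2}^2 \le \vvta{\hat c_1 - \hat c_2}_* \vvta{w_1 - w_2}$. Dividing by $\vvta{w_1 - w_2}$ (the inequality is trivial when $w_1 = w_2$) gives $\vvta{w_1 - w_2} \le \frac{1}{2\rho} \vvta{\hat c_1 - \hat c_2}_* = \frac{\sqrt{2L(r - f_{\min})}}{\mu} \vvta{\hat c_1 - \hat c_2}_*$, which is exactly the claimed upper bound.

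For the lower bound on $\vvta{w_1 - w_2}$ the naive mirror image fails: I would want to pair the upper cross-term bound $(\hat c_1 - \hat c_2)^T(w_2 - w_1) \le 2\kappa \vvta{w_1 - w_2}^2$ with a matching \emph{lower} bound on the cross term by $\vvta{\hat c_1 - \hat c_2}_* \vvta{w_1 - w_2}$, but H\"older only supplies this quantity as an upper bound, not a lower one. Equivalently, the lower bound is reduced to the dual-norm quadratic estimate $(\hat c_1 - \hat c_2)^T(w_2 - w_1) \ge \frac{1}{2\kappa} \vvta{\hat c_1 - \hat c_2}_*^2$: combining it with the upper cross-term bound cancels one power and gives $\vvta{\hat c_1 - \hat c_2}_* \le 2\kappa \vvta{w_1 - w_2}$, i.e. $\vvta{w_1 - w_2} \ge \frac{1}{2\kappa}\vvta{\hat c_1 - \hat c_2}_* = \frac{\sqrt{2\mu(r - f_{\min})}}{L}\vvta{\hat c_1 - \hat c_2}_*$, the claimed lower bound. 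So everything reduces to this dual-norm lower bound on the cross term, a strong-monotonicity (co-Lipschitz) property of the normalized optimizer map $\hat c \mapsto w^*(\hat c)$.

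Establishing this dual-norm estimate is where the real work lies and is the step I expect to be the main obstacle. My plan is to exploit the first-order optimality conditions afforded by the level-set structure: for $c \neq 0$ the minimizer $w^*(c)$ lies on the boundary $\{w : f(w) = r\}$ and satisfies the KKT relation $c = -\lambda \nabla f(w^*(c))$ with $\lambda > 0$, so that $\hat c_i = -\nabla f(w_i)/\vvta{\nabla f(w_i)}_*$; hence $\hat c_1 - \hat c_2$ is a difference of normalized gradients, controllable through the $L$-smoothness of $f$ (the bound $\vvta{\nabla f(w_1) - \nabla f(w_2)}_* \le L \vvta{w_1 - w_2}$) together with lower bounds on $\vvta{\nabla f(w_i)}_*$ over the level set obtained from strong convexity. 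The delicate point is the constant: naive gradient-norm bounds, which must replace the normalizers $\vvta{\nabla f(w_i)}_*$ by their worst-case value over the level set, lose a factor on the order of $\sqrt{L/\mu}$ relative to the target $\frac{\sqrt{2\mu(r - f_{\min})}}{L}$. Matching the sharp constant therefore appears to require routing the argument back through the refined two-sided estimates of Lemma~\ref{lm:spo-strong-lb-ub-special} along the normal directions rather than through crude norm bounds on the gradients, and verifying that the constants align is the crux of the proof.
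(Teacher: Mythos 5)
Your proof of the \emph{upper} bound is correct and complete: summing the first inequality of Lemma~\ref{lm:spo-strong-lb-ub-special} at the two normalized cost vectors gives $(\hat c_1 - \hat c_2)^T(w_2 - w_1) \ge \frac{\mu}{\sqrt{2L(r - f_{\min})}}\vvta{w_1 - w_2}^2$, and H\"older's inequality then yields the claimed bound with the right constant. This is a clean route (the paper's appendix, which proves the generalized Lemma~\ref{lm:opt-oracle-smooth}, in fact only writes out the lower bound, so your argument supplies the missing half).

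The \emph{lower} bound, however, has a genuine gap: everything is reduced to the co-coercivity estimate $(\hat c_1 - \hat c_2)^T(w_2 - w_1) \ge \frac{\sqrt{2\mu(r - f_{\min})}}{L}\vvta{\hat c_1 - \hat c_2}_*^2$, which you do not prove --- you only sketch a plan and explicitly concede that you cannot verify the constant. Note that this estimate does not follow from chaining the bounds you already have (doing so loses a factor of $(\mu/L)^{3/2}$), so it would require an independent argument. The paper avoids the cross term entirely and proves the lower bound directly from the KKT stationarity condition: since $\tilde c_i \neq 0$ forces $f(w_i) = r$ and $\nabla f(w_i) = -u_i \hat c_i$ with $u_i = \vvta{\nabla f(w_i)}_*$, strong convexity on the level set gives $u_i \ge \sqrt{2\mu(r - f_{\min})}$, and then the elementary scaling inequality
\begin{equation*}
    \vvta{u_1 \hat c_1 - u_2 \hat c_2}_* \;\ge\; \sqrt{2\mu(r - f_{\min})}\,\vvta{\hat c_1 - \hat c_2}_* \qquad \text{for } \vvta{\hat c_1}_* = \vvta{\hat c_2}_* = 1,\ u_1, u_2 \ge \sqrt{2\mu(r - f_{\min})}
\end{equation*}
combined with $L$-smoothness, $\vvta{\nabla f(w_1) - \nabla f(w_2)}_* \le L \vvta{w_1 - w_2}$, yields $\vvta{w_1 - w_2} \ge \frac{1}{L}\vvta{u_1\hat c_1 - u_2\hat c_2}_* \ge \frac{\sqrt{2\mu(r - f_{\min})}}{L}\vvta{\hat c_1 - \hat c_2}_*$. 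The $\sqrt{L/\mu}$ loss you worry about does not occur on this route because only a \emph{lower} bound on the multipliers $u_i$ is needed --- the scaling inequality is exact in the scaling and never replaces the normalizers by a worst-case upper bound. To complete your proof you would either need to adopt this direct argument or supply an independent proof of your co-coercivity estimate with the stated constant.
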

    Note that the lower bound of $c_1^T (w - w^*(c_1))$ in Lemma \ref{lm:spo-strong-lb-ub-special} and the upper bound of $\vvta{w^*(c_1)-w^*(c_2)}$ in Lemma \ref{lm:opt-oracle-smooth-special} match bounds developed by \cite{balghiti2019generalization}. 
    Indeed, although \cite{balghiti2019generalization} study the more general case of strongly convex sets, the constants are the same since Theorem 12 of \cite{journee2010generalized} demonstrates that our set $S$ is a $\frac{\mu}{\sqrt{2L(r - f_{\min})}}$-strongly convex set. 
    However, the upper bounds in Lemmas \ref{lm:spo-strong-lb-ub-special} and \ref{lm:opt-oracle-smooth-special} appear to be novel and rely on the special properties of strongly convex \emph{level} sets. It is important to emphasize that we generally do not expect all of the bounds in Lemmas \ref{lm:spo-strong-lb-ub-special} and \ref{lm:opt-oracle-smooth-special} to holds for polyhedral sets. Indeed, for a polyhedron the optimization oracle $w^\ast(\cdot)$ is generally discontinuous at cost vectors that have multiple optimal solutions. The properties in Lemmas \ref{lm:spo-strong-lb-ub-special} and \ref{lm:opt-oracle-smooth-special} drive the proof of Theorem \ref{thm:uniform-cali-strong-special} and hence lead to the improvement from $O(\epsilon^2)$ in the polyhedral case to $O(\epsilon)$ in the strongly convex level set case.

    By following similar arguments as in the derivation of Corlloary \ref{coro:lp-sample-comp}, Corollary \ref{coro:strong-sample-comp} presents the sample complexity of the SPO+ method when the feasible region is a strongly convex level set.
    
    \begin{corollary}\label{coro:strong-sample-comp}
        Suppose that Assumption \ref{as:strongly-level-set-special} holds with respect to the norm $\vvta{\cdot}_A$ for some positive definite matrix $A$.
        Suppose further that the optimal predictor $g^*(x) = \bbE[c | x]$ is in the hypothesis class $\mathcal{H}$, and there exists a constant $C'$ such that $\mathfrak{R}^n(\mathcal{H}) \le \frac{C'}{\sqrt{n}}$. 
        Let $\hat{g}_{\tn{SPO+}}^n$ denote the predictor which minimizes the empirical SPO+ risk $\hat{R}_{\tn{SPO+}}^n(\cdot)$ over $\mathcal{H}$.
        Then there exists a constant $C$ such that for any $\bbP \in \mathcal{P}_{\alpha, \beta} \cup \mathcal{P}_{\beta}$ and $\delta \in (0, \frac12)$, with probability at least $1 - \delta$, it holds that 
        \begin{equation*}
            \rspo(\hat{g}_{\tn{SPO+}}^n; \bbP) - \rspo^*(\bbP) \le \frac{C \sqrt{\log (1 / \delta)}}{n^{1/2}}. 
        \end{equation*}
    \end{corollary}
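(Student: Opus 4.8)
The plan is to mirror the derivation of Corollary \ref{coro:lp-sample-comp}, replacing the quadratic calibration function of the polyhedral case with the \emph{linear} calibration function of Theorem \ref{thm:uniform-cali-strong-special}; this linearity is precisely what upgrades the rate from $O(n^{-1/4})$ to $O(n^{-1/2})$. Throughout, fix $\bbP$ in the relevant distribution class and write $c_0$ for the corresponding calibration constant from Theorem \ref{thm:uniform-cali-strong-special}, i.e.\ $c_0 = \frac{\mu^{9/2}}{4(1+\beta^2)L^{9/2}}$ for $\mathcal{P}_{\beta,A}$ and $c_0 = \frac{\alpha\mu^{9/2}}{4(1+\beta^2)L^{9/2}}$ for $\mathcal{P}_{\alpha,\beta,A}$.

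First I would pass from the conditional calibration bound to an unconditional excess-risk bound. Because Theorem \ref{thm:uniform-cali-strong-special} gives $\hat{\delta}_{\lspop}(\epsilon;\mathcal{P}) \ge c_0\epsilon$, and the minorant $c_0\epsilon$ is itself convex, lower semi-continuous, and vanishes at $0$, the biconjugate obeys $\delta^{**}_{\lspop}(\epsilon) \ge c_0\epsilon$. The Jensen's-inequality argument recorded after Definition \ref{def:cali-func} then yields, for every measurable predictor $g$,
\begin{equation*}
    c_0 \cdot \big(\rspo(g;\bbP) - \rspo^*(\bbP)\big) \le \delta^{**}_{\lspop}\!\big(\rspo(g;\bbP) - \rspo^*(\bbP)\big) \le \rspop(g;\bbP) - \rspop^*(\bbP).
\end{equation*}
Thus it suffices to show the empirical minimizer has excess \emph{surrogate} risk $\rspop(\hat{g}_{\tn{SPO+}}^n;\bbP) - \rspop^*(\bbP) = O(n^{-1/2})$ with the stated probability; dividing by $c_0$ and folding constants into $C$ then finishes the proof.

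Next I would establish the generalization bound for the excess SPO+ risk, exactly as in Appendix \ref{sec:rademacher}. Decompose
\begin{equation*}
    \rspop(\hat{g}_{\tn{SPO+}}^n;\bbP) - \rspop^*(\bbP) \le \big[\rspop(\hat{g}_{\tn{SPO+}}^n) - \hat{R}_{\tn{SPO+}}^n(\hat{g}_{\tn{SPO+}}^n)\big] + \big[\hat{R}_{\tn{SPO+}}^n(g^*) - \rspop(g^*)\big],
\end{equation*}
where the empirical-optimality of $\hat{g}_{\tn{SPO+}}^n$ kills the intermediate term $\hat{R}_{\tn{SPO+}}^n(\hat{g}_{\tn{SPO+}}^n) - \hat{R}_{\tn{SPO+}}^n(g^*) \le 0$, and where $\rspop(g^*) = \rspop^*(\bbP)$ because Theorem \ref{thm:spo-fisher} identifies $g^*(x) = \bbE[c\vert x]$ as the conditional SPO+ risk minimizer (the rotationally symmetric classes of Definition \ref{def:rotation_sym} are continuous and centrally symmetric, and strongly convex level sets yield a unique optimizer for $\bar c \ne 0$) together with the assumption $g^* \in \mathcal{H}$. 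The first bracket is dominated by the uniform deviation $\sup_{g\in\mathcal{H}}|\rspop(g) - \hat{R}_{\tn{SPO+}}^n(g)|$, whose expectation symmetrization bounds by twice the Rademacher complexity of the composed loss class. The key structural fact is that $\hat c \mapsto \lspop(\hat c, c)$ is $2D_S$-Lipschitz in the $\ell_2$ norm, since a subgradient equals $2\big(w^*(c) - w^*(2\hat c - c)\big)$ and any two points of $S$ differ by at most $D_S$. The vector contraction inequality of \cite{maurer2016vector} transfers this to the multivariate Rademacher complexity, giving a term of order $D_S\,\mathfrak{R}^n(\mathcal{H}) \le D_S C'/\sqrt{n}$, while a bounded-differences (McDiarmid) argument both upgrades the expectation to a high-probability statement and controls the single-function second bracket, contributing the $\sqrt{\log(1/\delta)/n}$ dependence.

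The main obstacle is not any single inequality but the boundedness bookkeeping needed to legitimately invoke the concentration step: the SPO+ loss must be uniformly bounded (or suitably light-tailed) over $\mathcal{H}$ and over the support of $\bbP$ so that McDiarmid applies, which is where the compactness of $S$ and the tail control built into $\mathcal{P}_{\beta,A}$ and $\mathcal{P}_{\alpha,\beta,A}$ enter. This machinery is identical to that already assembled for Corollary \ref{coro:lp-sample-comp}; the only genuinely new ingredient is the linear calibration inequality of the first step, which is exactly what removes the square root and produces the improved $O(n^{-1/2})$ rate.
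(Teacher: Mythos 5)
Your proposal is correct and follows essentially the same route as the paper's proof: a Rademacher-complexity generalization bound for the SPO+ loss (via its $2D_S$-Lipschitz property and the vector contraction inequality of \cite{maurer2016vector}), the empirical-minimizer decomposition to bound the excess surrogate risk by $O(\sqrt{\log(1/\delta)/n})$, and then the linear (hence already convex) calibration function of Theorem \ref{thm:uniform-cali-strong-special} to transfer this directly to the excess SPO risk without losing a square root. Your explicit check that $g^*(x)=\bbE[c\vert x]$ minimizes the SPO+ risk for the rotationally symmetric classes is a detail the paper leaves implicit, but it is consistent with the intended argument.
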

    Notice that Corollary \ref{coro:strong-sample-comp} improves the rate of convergence to $O(1/\sqrt{n})$ as compared to the $O(1/n^{1/4})$ rate of Corollary \ref{coro:lp-sample-comp}. This matches the rate for the squared $\ell_2$ surrogate developed by \cite{hu2020fast} (though their result is in the polyhedral case).
    
\section{Computational experiments}\label{sec:experiment}

    In this section, we present computational results of synthetic dataset experiments wherein we empirically examine the performance of the SPO+ loss function for training prediction models, using portfolio allocation and cost-sensitive multi-class classification problems as our problem classes. 
    We focus on two classes of prediction models: {\em (i)} linear models, and {\em (ii)} two-layer neural networks with 256 neurons in the hidden layer. 
    We compare the performance of the empirical minimizer of the following four different loss function: \emph{(i)} the previously described SPO loss function (when applicable), \emph{(ii)} the previously described SPO+ loss function, \emph{(iii)} the least squares (squared $\ell_2$) loss function $\ell(\hat{c}, c) = \vvta{\hat{c} - c}_2^2$, and \emph{(iv)} the absolute ($\ell_1$) loss function $\ell(\hat{c}, c) = \vvta{\hat{c} - c}_1$. 
    For all loss functions, we use the Adam method of \cite{kingma2015adam} to train the parameters of the prediction models. 
    Note that the loss functions \emph{(iii)} and \emph{(iv)} do not utilize the structure of the feasible region $S$ and can be viewed as purely learning the relationship between cost and feature vectors. 
    
    \paragraph{Entropy constrained portfolio allocation.}
    First, we consider the portfolio allocation \citep{markowitz1952portfolio} problem with entropy constraint, where the goal is to pick an allocation of assets in order to maximize the expected return while enforcing a certain level of diversity through the use of an entropy constraint (see, e.g., \cite{bera2008optimal}).
    This application is an instance of our more general theory for strongly convex level sets on constrained domains, developed in Appendix \ref{sec:appendix-strong}.
    Alternative formulations of portfolio allocation, including when $S$ is a polyhedron or a polyhedron intersected with an ellipsoid, have been empirically studied in previous works (see, for example \cite{elmachtoub2021smart, hu2020fast}). 
    The objective is to minimize $c^T w$ where $c$ is the negative of the expected returns of $d$ different assets, and the feasible region is $S = \{w \in \bbR^d: w \ge 0, \sum_{i=1}^d w_i = 1, \sum_{i=1}^d w_i \log w_i \le r\}$ where $r$ is a user-specified threshold of the entropy of portfolio $w$. 
    Note that, due to the differentiability properties of of the optimization oracle $w^\ast(\cdot)$ in this case (see Lemma \ref{lm:oracle-differentiable} in the Appendix), it is possible to (at least locally) optimize the SPO loss using a gradient method even though SPO loss is not convex.
    
    In our simulations, the relationship between the true cost vector $c$ and its auxiliary feature vector $x$ is given by $c = \phi^{\tn{deg}}(B x) \odot \epsilon$, where $\phi^{\tn{deg}}$ is a polynomial kernel mapping of degree $\tn{deg}$, $B$ is a fixed weight matrix, and $\epsilon$ is a multiplicative noise term. 
    The features are generated from a standard multivariate normal distribution, we consider $d = 50$ assets, and further details of the synthetic data generation process are provided in Appendix \ref{sec:appendix-exp}. 
    To account for the differing distributions of the magnitude of the cost vectors, in order to evaluate the performance of each method we compute a ``normalized'' SPO loss on the test set. 
    Specifically, let $\hat{g}$ denote a trained prediction model and let $\{\tilde{x}_i, \tilde{c}_i\}_{i=1}^m$ denote the test set, then the normalized SPO loss is defined as $\frac{\sum_{i=1}^m \lspo(\hat{g}(\tilde{x}_i), \tilde{c}_i)}{\sum_{i=1}^m z^*(\tilde{c}_i)}$, where $z^*(\tilde{c}) := \min_{w \in S} \tilde{c}^T w$ is the optimal cost in hindsight. 
    We set the size of the test set to $10000$. In all of our experiments, we run $50$ independent trials for each setting of parameters.

    \begin{figure}
        \centering
        \includegraphics[width=\textwidth]{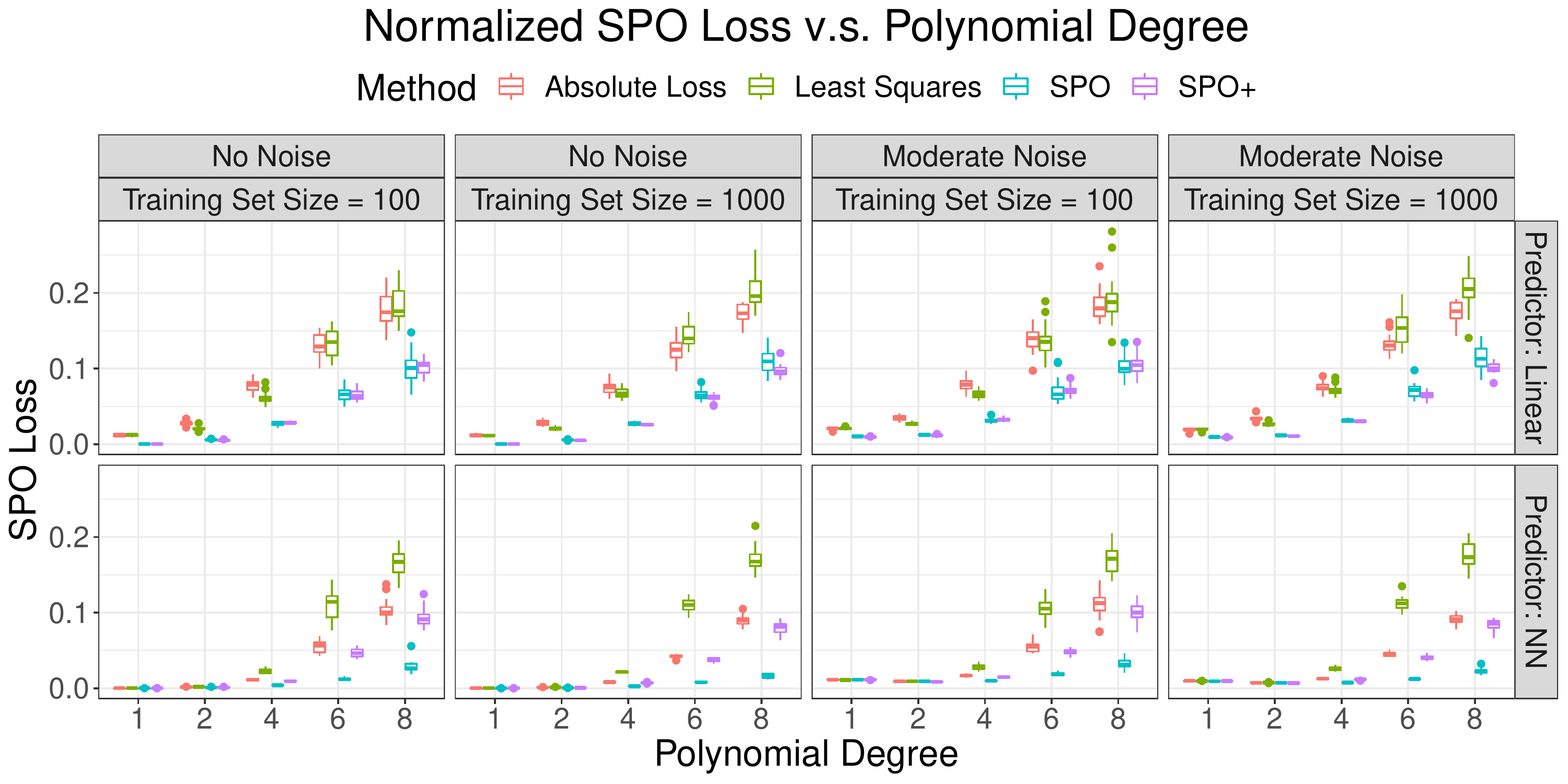}
        \caption{Normalized test set SPO loss for the SPO, SPO+, least squares, and absolute loss methods on portfolio allocation instances.}
        \label{fig:portfolio-04}
    \end{figure}

    Figure \ref{fig:portfolio-04} displays the empirical performance of each method. 
    We observe that with a linear hypothesis class, for smaller values of the degree parameters, i.e., $\tn{deg} \in \{1, 2\}$, all four methods perform comparably, while the SPO and SPO+ methods dominate in cases with larger values of the degree parameters.
    With a neural net hypothesis class, we observe a similar pattern but, due to the added degree of flexibility, the SPO method dominates the cases with larger values of degree and SPO+ method is the best among all surrogate loss functions. 
    The better results of the $\ell_1$ loss as compared to the squared $\ell_2$ loss might be explained by robustness against outliers. Appendix \ref{sec:appendix-exp} also contains results showing the observed convergence of the excess SPO risk, in the case of polynomial degree one, for both this experiment and the cost-sensitive multi-class classification case.
    
    \paragraph{Cost-sensitive multi-class classification.}
    Here we consider the cost-sensitive multi-class classification problem. Since this is a multi-class classification problem, the feasible region is simply the unit simplex $S = \{w \in \bbR^d: w \ge 0, \sum_{i=1}^d w_i = 1\}$
    We consider an alternative model for generating the data, whereby the relationship between the true cost vector $c$ and its auxiliary feature vector $x$ is as follows:
    first we generate a score $s = \sigma(\phi^{\tn{deg}}(b^T x) \odot \epsilon)$, where $\phi^{\tn{deg}}$ is a degree-$\tn{deg}$ polynomial kernel, $b$ is a fixed weight vector, $\epsilon$ is a multiplicative noise term, and $\sigma(\cdot)$ is the logistic function. 
    Then the true label is given by $\tn{lab} = \lceil 10 s \rceil \in \{1, \dots, 10\}$ and the cost vector $c$ is given by $c_i = \vta{i - \tn{lab}}$ for $i = 1, \dots, 10$. 
    The features are generated from a standard multivariate normal distribution, and further details of the synthetic data generation process are provided in Appendix \ref{sec:appendix-exp}. 
    Since the scale of the cost vectors do not change as we change different parameters, we simply compare the test set SPO loss for each method. 
    We still set the size of the test set to $10000$ and we run $50$ independent trials for each setting of parameters.
    In addition to the regular SPO+, least squares, and absolute losses, we consider an alternative surrogate loss constructed by considering the SPO+ loss using a log barrier (strongly convex) \emph{approximation} to the unit simplex. That is, we consider the SPO+ surrogate that arises from the set $\tilde{S} := \{w \in \bbR^d: w \ge 0, \sum_{i=1}^d w_i = 1, -\sum_{i=1}^d \log w_i \le r\}$ for some $r > 0$. Details about how we chose the value of $r$ are provided in Appendix \ref{sec:appendix-exp}.
    
    Herein we focus on the comparison between the standard SPO+ loss and the ``SPO+ w/ Barrier'' surrogate loss. (We include a more complete comparison of all the method akin to Figure \ref{fig:portfolio-04} in Appendix \ref{sec:appendix-exp}.) Figure \ref{fig:classification-barrier-lp} shows a detailed comparison between these alternative SPO+ surrogates as we vary the training set size. Note that the SPO loss is always measured with respect to the standard unit simplex and not the log barrier approximation.
    Interestingly, we observe that the ``SPO+ w/ Barrier'' surrogate tends to perform better than the regular SPO+ surrogate when the training set size is small, whereas the regular SPO+ surrogate gradually performs better as the training set size increases. These results suggest that adding a barrier constraint to the feasible region has a type of regularization effect, which may also be explained by our theoretical results. Indeed, adding the barrier constraint makes the feasible region strongly convex, which improves the rate of convergence of the SPO risk. On the other hand, this results in an approximation to the actual feasible region of interest and, eventually for large enough training set sizes, the regularizing benefit of the barrier constraint is outweighed by the cost of this approximation.
    
    \begin{figure}
        \centering
        \includegraphics[width=\textwidth]{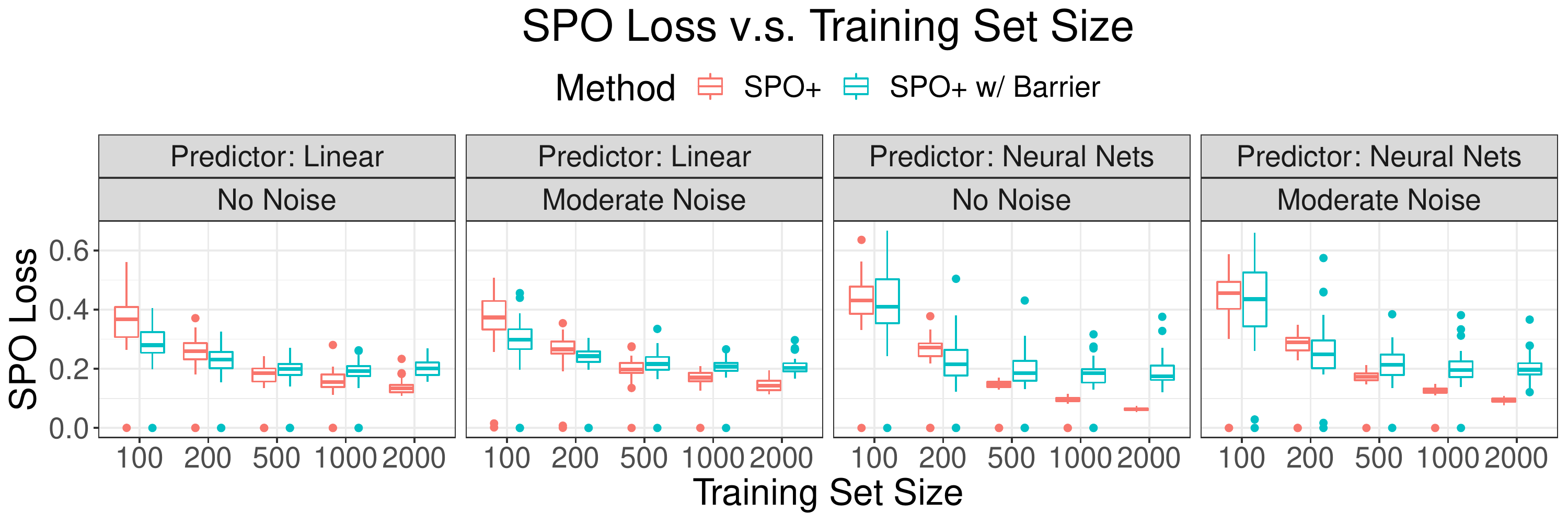}
        \caption{Test set SPO loss for the SPO+ methods with different feasible regions on the cost-sensitive multi-class classification instances.}
        \label{fig:classification-barrier-lp}
    \end{figure}

\section{Conclusions and future directions}\label{sec:conclusion}

    Our work develops risk bounds and uniform calibration results for the SPO+ loss relative to the SPO loss, and our results provide a quantitative way to transfer the excess surrogate risk to excess true risk. 
    We analyze the case with a polyhedral feasible region of the underlying optimization problem, and we strengthen the results when the feasible region is a level set of a strongly convex function. 
    There are several intriguing future directions. 
    In this work, we mainly focus on the worst case risk bounds of the SPO+ method under different types of feasible regions, and we consider the minimal possible conditions to guarantee the risk bounds. 
    In many practical problems, it is reasonable to assume the true joint distribution of $(x, c)$ satisfies certain ``low-noise'' conditions (see, for example, \cite{bartlett2006convexity,massart2006risk,hu2020fast}). 
    In such conditions, one might be able to obtain improved risk bounds and sample complexities. 
    Also, developing tractable surrogates and a corresponding calibration theory for non-linear objectives is very worthwhile.
    
\subsubsection*{Acknowledgments}
The authors are grateful to Othman El Balghiti, Adam N. Elmachtoub, and Ambuj Tewari for early discussions related to this work.
PG acknowledges the support of NSF Awards CCF-1755705 and CMMI-1762744.

\bibliographystyle{abbrvnat}
\bibliography{ref}

\appendix

\section{Rademacher complexity and generalization bounds}\label{sec:rademacher}
    Herein we briefly review \emph{Rademacher complexity}, a widely used concept in deriving generalization bounds, and how it applies in our analysis. 
    For any loss function $\ell(\cdot, \cdot)$ and a hypothesis class $\mathcal{H}$ of cost vector predictor functions, the Rademacher complexity is defined as 
    \begin{equation*}
        \mathfrak{R}_\ell^n(\mathcal{H}) := \bbE_{\sigma, \{(x_i, c_i)\}_{i=1}^n} \lrr{\sup_{g \in \mathcal{H}} \frac1n \sum_{i=1}^n \sigma_i \ell(g(x_i), c_i)}, 
    \end{equation*}
    where $\sigma_i$ are independent Rademacher random variables and $(x_i, c_i)$ are independent samples from the joint distribution $\bbP$ for $i = 1, \dots, n$. 
    The following theorem provides a classical generalization bounds based on the Rademacher complexity. 
    
    \begin{theorem}[\cite{bartlett2002rademacher}]\label{bartlett_thm}
        Let $\mathcal{H}$ be a hypothesis class from $\mathcal{X}$ to $\bbR^d$ and let $b = \sup_{\hat{c} \in \mathcal{H}(\mathcal{X}), c \in \mathcal{C}} \ell(\hat{c}, c)$. 
        Then, for any $\delta > 0$, with probability at least $1 - \delta$, for all $g \in \mathcal{H}$ it holds that 
        \begin{equation*}
            \vt{R_{\ell}(g; \bbP) - \hat{R}_{\ell}^n(g)} \le 2 \mathfrak{R}_\ell^n(\mathcal{H}) + b \sqrt{\frac{2 \log(1 / \delta)}{n}}. 
        \end{equation*}
    \end{theorem}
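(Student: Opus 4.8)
The plan is to follow the classical symmetrization argument combined with a bounded-differences concentration inequality. First I would fix the sample $Z = \{(x_i, c_i)\}_{i=1}^n$ and introduce the single scalar statistic $\Phi(Z) := \sup_{g \in \mathcal{H}} \lr{R_{\ell}(g; \bbP) - \hat{R}_{\ell}^n(g)}$, which captures the worst-case one-sided generalization gap over the entire class. The goal is to show that $\Phi(Z)$ concentrates tightly around its mean and that this mean is controlled by the multivariate Rademacher complexity; the two-sided (absolute-value) statement then follows by applying the identical argument to the reversed gap $\hat{R}_{\ell}^n - R_{\ell}$.

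Next I would establish that $\Phi$ satisfies a bounded-differences property. Since $0 \le \ell \le b$ on $\mathcal{H}(\calX) \times \mathcal{C}$, replacing a single sample $(x_i, c_i)$ by any other admissible pair changes $\hat{R}_{\ell}^n(g)$ by at most $b/n$ uniformly in $g$, and hence changes $\Phi(Z)$ by at most $b/n$ (a supremum of functions each perturbed by at most $b/n$ itself moves by at most $b/n$). McDiarmid's inequality then yields, for any $\delta > 0$, that with probability at least $1 - \delta$ we have $\Phi(Z) \le \bbE[\Phi(Z)] + b\sqrt{\tfrac{\log(1/\delta)}{2n}}$. This step is routine once the bounded-difference constant $b/n$ is identified.

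The crux, which I regard as the main technical step, is to bound $\bbE[\Phi(Z)]$ by $2\mathfrak{R}_\ell^n(\mathcal{H})$ via symmetrization. I would introduce an independent \emph{ghost} sample $Z' = \{(x_i', c_i')\}_{i=1}^n$ drawn from $\bbP$ and write $R_{\ell}(g; \bbP) = \bbE_{Z'}[\hat{R}_{\ell}^n(g; Z')]$. Pulling this inner expectation outside the supremum using Jensen's inequality gives $\bbE[\Phi(Z)] \le \bbE_{Z, Z'}\lrr{\sup_{g \in \mathcal{H}} \tfrac1n \sum_{i=1}^n \lr{\ell(g(x_i'), c_i') - \ell(g(x_i), c_i)}}$. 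Because the paired differences are i.i.d.\ and symmetric in distribution under the swap $(x_i, c_i) \leftrightarrow (x_i', c_i')$, I may insert Rademacher signs $\sigma_i$ into each summand without changing the expectation; subadditivity of the supremum then splits the resulting quantity into two identical terms, each equal to $\mathfrak{R}_\ell^n(\mathcal{H})$, producing the factor $2$. The care needed here is purely in justifying the sign-insertion and the Jensen step, which is the part most prone to error.

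Finally I would combine the two bounds: with probability at least $1 - \delta$, $\Phi(Z) \le 2\mathfrak{R}_\ell^n(\mathcal{H}) + b\sqrt{\tfrac{\log(1/\delta)}{2n}}$. Running the same symmetrization and concentration argument on the reversed gap $\sup_{g}\lr{\hat{R}_{\ell}^n(g) - R_{\ell}(g; \bbP)}$ controls the other direction, and a union bound over the two one-sided events upgrades the inequality to the stated absolute-value form $\vt{R_{\ell}(g; \bbP) - \hat{R}_{\ell}^n(g)} \le 2 \mathfrak{R}_\ell^n(\mathcal{H}) + b \sqrt{\tfrac{2 \log(1 / \delta)}{n}}$ uniformly over all $g \in \mathcal{H}$, where the final constant in the concentration term absorbs the factor arising from combining the two tail bounds. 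The only remaining subtlety is the bookkeeping of this constant when passing from the one-sided to the two-sided statement.
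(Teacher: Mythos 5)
Your proposal is correct and is exactly the classical symmetrization-plus-McDiarmid argument; the paper does not prove this statement itself but cites it from \cite{bartlett2002rademacher}, whose proof follows the same route (bounded differences of the supremum of the empirical process, then a ghost-sample/Rademacher-sign symmetrization to bound its mean by $2\mathfrak{R}_\ell^n(\mathcal{H})$, then a union bound over the two one-sided tails). The only loose ends you already flag yourself: the bounded-difference constant $b/n$ implicitly uses $\ell \ge 0$, and the stated constant $b\sqrt{2\log(1/\delta)/n} = 2b\sqrt{\log(1/\delta)/(2n)}$ leaves exactly the slack needed to absorb the $\log(2/\delta)$ from the two-sided union bound for the range of $\delta$ used in the paper.
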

    
    Moreover, we define the \emph{multivariate Rademacher complexity} \citep{maurer2016vector, bertsimas2020predictive, balghiti2019generalization} of $\mathcal{H}$ as 
    \begin{equation*}
        \mathfrak{R}^n(\mathcal{H}) = \bbE_{\bm{\sigma}, x} \lrr{\sup_{g \in \mathcal{H}} \frac1n \sum_{i=1}^n \bm{\sigma}_i^T g(x_i)}, 
    \end{equation*}
    where $\bm{\sigma}_i \in \{-1, +1\}^d$ are Rademacher random vectors for $i = 1, \dots, n$. 
    In many cases of hypothesis classes, such as linear functions with bounded Frobenius or element-wise $\ell_1$ norm, the multivariate Rademacher complexity can be bounded as $\mathfrak{R}^n(\mathcal{H}) \le \frac{C'}{\sqrt{n}}$ where $C'$ is a constant that usually depends on the properties of the data, the hypothesis class, and mildly on the dimensions $d$ and $p$. Detailed examples of such bounds have been provided by \cite{balghiti2019generalization, bertsimas2020predictive}.
    
    When the loss function $\ell(\cdot, \cdot)$ is additionally $L$-Lipschitz continuous with respect to the $2$-norm in the first argument, namely $\vta{\ell(\hat{c}_1, c) - \ell(\hat{c}_2, c)} \le L \vvta{\hat{c}_1 - \hat{c}_2}_2$ for all $\hat{c}_1, \hat{c}_2, c \in \bbR^p$, then by the vector contraction inequality of \citet{maurer2016vector} we have $\mathfrak{R}_{\ell}^n(\mathcal{H}) \le \sqrt{2} L \mathfrak{R}^n(\mathcal{H})$. 
    It is also easy to see that the the SPO+ loss function $ \lspop(\cdot, c)$ is $2 D_S$-Lipschitz continuous with respect to the $2$-norm for any $c$ and therefore we can leverage the vector contraction inequality of \citet{maurer2016vector} in this case. Combined with Theorem \ref{bartlett_thm}, this yields a generalization bound for the SPO+ loss which, when combined with Theorems \ref{thm:lp-unif-cali} and \ref{thm:uniform-cali-strong-special} yields Corollaries \ref{coro:lp-sample-comp} and \ref{coro:strong-sample-comp}, respectively. The full proofs of these corollaries are included below.
    
    \begin{proof}[Proof of Corollary \ref{coro:lp-sample-comp} and \ref{coro:strong-sample-comp}]
        Let $b = \sup_{\hat{c} \in \mathcal{H}(\mathcal{X}), c \in \mathcal{C}} \ell(\hat{c}, c) \le 2 D_S \sup_{g \in \mathcal{H}, x \in \mathcal{X}} \vvta{g(x)}_2$. 
        For any $\delta > 0$, with probability at least $1 - \delta$, for all $g \in \mathcal{H}$, it holds that 
        \begin{equation*}
            \vt{R_{\ell}(g; \bbP) - \hat{R}_{\ell}^n(g)} \le 4 \sqrt{2} D_S \mathfrak{R}^n(\mathcal{H}) + b \sqrt{\frac{2 \log(1 / \delta)}{n}}. 
        \end{equation*}
        Since $\mathfrak{R}^n(\mathcal{H}) \le \frac{C'}{\sqrt{n}}$ and $\log(1 / \delta) \ge \log(2)$, we know that there exists some universal constant $C_1$ such that 
        \begin{equation*}
            4 \sqrt{2} D_S \mathfrak{R}^n(\mathcal{H}) + b \sqrt{\frac{2 \log(1 / \delta)}{n}} \le C_1 \sqrt{\frac{\log{(1 / \delta)}}{n}}, 
        \end{equation*}
        for all $\delta \in (0, \frac12)$ and $n \geq 1$. 
        Since $\hat{g}_{\tn{SPO+}}^n$ minimizes the empirical SPO+ risk $\hat{R}_{\tn{SPO+}}^n(\cdot)$, we have $\hat{R}_{\tn{SPO+}}^n(\hat{g}_{\tn{SPO+}}^n) \le \hat{R}_{\tn{SPO+}}^n(g_{\tn{SPO+}}^*)$. 
        and therefore, with probability at least $1 - \delta$, it holds that 
        \begin{equation*}
            \rspop(\hat{g}_{\tn{SPO+}}^n) - \rspop^* \le 2 C_1 \sqrt{\frac{\log{(1 / \delta)}}{n}}. 
        \end{equation*}
        Recall Theorem \ref{thm:lp-unif-cali}, the biconjugate of $\min \{ \frac{\epsilon^2}{D_S M}, \epsilon \}$ is $\frac{\epsilon^2}{D_S M}$ for $\epsilon \in [0, \frac{D_S M}{2}]$ and $\epsilon - \frac{D_S M}{4}$ for $\epsilon \in [\frac{D_S M}{2}, \infty]$. 
        Then if the assumption in Corollary \ref{coro:lp-sample-comp} holds, with probability at least $1 - \delta$, it holds that 
        \begin{equation*}
            \rspo(\hat{g}_{\tn{SPO+}}^n; \bbP) - \rspo^*(\bbP) \le \frac{C_2 \sqrt{\log (1 / \delta)}}{n^{1/4}}, 
        \end{equation*}
        for some universal constant $C_2$. 
        Also, since the calibration function in Theorem \ref{thm:uniform-cali-strong} is linear and thus convex, then if the assumption in Corollary \ref{coro:lp-sample-comp} holds, with probability at least $1 - \delta$, it holds that 
        \begin{equation*}
            \rspo(\hat{g}_{\tn{SPO+}}^n; \bbP) - \rspo^*(\bbP) \le \frac{C_3 \sqrt{\log (1 / \delta)}}{n^{1/2}}, 
        \end{equation*}
        for some universal constant $C_3$. 
    \end{proof}

\section{Proofs and other technical details for Section \ref{sec:lp}}\label{sec:appendix-lp}
    
    \subsection{Additional definitions and notation}\label{sec:def-lp-proof}
    Recall that $S$ is polyhedral and let $Z_S$ denote the extreme points of $S$. We assume, for simplicity, that $w^\ast(c) \in Z_S$ for all $c \in \bbR^d$, but our results can be extended to allow for other possibilities in the case when there are multiple optimal solutions of $P(c)$.
    For any $i \in \{1, \dots, d\}$, we use $e_i \in \bbR^d$ to represent the unit vector whose $i$-th entry is $1$ and others are all zero. 
    Given a vector $c' \in \bbR^{d-1}$ and a scalar $\xi \in \bbR$, let $(c', \xi)$ denote the vector $(c'^T, \xi)^T \in \bbR^d$. 
    For fixed $c'$ and when $\xi$ ranges from negative infinity to positive infinity, the corresponding optimal solution $w^\ast(c', \xi)$ will sequentially take different values in $Z_S$, and we let $\Omega(c') = (w_1(c'), \dots, w_{k(c')}(c'))$ denote this sequence. 
    Let $y_i(c')$ denote the last element of vector $w_i(c')$ for $i = 1, \dots, k(c')$. 
    Also, for $i = 1, \dots, k(c') - 1$, we define phase transition location $\zeta_i(c') \in \bbR$ such that $(c', \zeta_i(c'))^T w_i(c') = (c', \zeta_i(c'))^T w_{i+1}(c')$, and additionally, we define $\zeta_0(c') = -\infty$ and $\zeta_{k(c')}(c') = \infty$. 
    When there is no confusion, we will omit $c'$ and only use $k, w_i, y_i, \zeta_i$ for simplicity. 
    
    Based on the above definition, for all $\xi \in (\zeta_{i-1}(c'), \zeta_i(c'))$, it holds that $w^*(c', \xi) = w_i(c')$. 
    Also, it holds that $y_1(c') > \dots > y_{k(c')}(c')$. 
    
    \subsection{Detailed derivation for Example \ref{ex:lp-negative}}
    Let the feasible region be the $\ell_1$ ball $S = \{w \in \bbR^2: \vvta{w}_1 \le 1\}$ and consider the distribution class $\mathcal{P}_{\tn{cont, symm}}$. Let $x \in \calX$ be fixed, $\epsilon > 0$ be a fixed scalar, $c_1 = (9 \epsilon, 0)^T$ and $c_2 = (-7 \epsilon, 0)^T$. 
    Let the conditional distribution be a mixture of normals defined by $\bbP_{\sigma}(c \vert x) := \frac12 (\mathcal{N}(c_1, \sigma^2 I) + \mathcal{N}(c_2, \sigma^2 I))$ for some $\sigma > 0$.
    The condition mean of $c$ is then $\bar{c} = (\epsilon, 0)^T$ and the distribution $\bbP_\sigma(c \vert x)$ is centrally symmetric around $\bar{c}$; therefore $\bbP_{\sigma} \in \mathcal{P}_{\tn{cont, symm}}$. 
    Let $\hat{c} = (0, \epsilon)^T$ and $\Delta := \hat{c} - \bar{c}$, which yields that the excess conditional SPO risk is $\bbE[\lspo(\hat{c}, c) - \lspo(\bar{c}, c)] = \bar{c}^T (w^*(\hat{c}) - w^*(\bar{c})) = \epsilon$. 
    Also, for all $c \in \mathcal{C}$, we may assume that $w^*(c) \in Z_S = \{\pm e_1, \pm e_2\}$ and hence $(c + 2\Delta)^T (w^*(c) - w^*(c + 2\Delta)) \le 2 \Delta^T (w^*(c) - w^*(c + 2\Delta)) \le 4 \epsilon$. 
    Therefore, using $\bbE\lrr{\lspop(\bar{c} + \Delta, c) - \lspop(\bar{c}, c)} = \bbE\lrr{(c + 2\Delta)^T (w^*(c) - w^*(c + 2\Delta))}$, it holds that
    \begin{align*}
        \bbE\lrr{\lspop(\bar{c} + \Delta, c) - \lspop(\bar{c}, c)} & \le 4 \epsilon \bbP_\sigma (w^*(c) \not= w^*(c + 2 \Delta)) \\ 
        & \le 4 \epsilon (1 - \bbP_{\sigma} (\{ \vvta{c - c_1}_2 \le \epsilon \} \cup \{ \vvta{c - c_2}_2 \le \epsilon \})) \rightarrow 0, 
    \end{align*}
    when $\sigma \rightarrow 0$, and hence we have $\hat{\delta}_{\ell}(\epsilon; \mathcal{P}_{\tn{cont, symm}}) = 0$.     

    \subsection{Proofs and useful lemmas}
    Lemma \ref{lm:spo-delta} provides the relationship between excess SPO risk and the optimal solution of \eqref{eq:opt-task} with respect to the difference $\Delta = \hat{c} - \bar{c}$ between the predicted cost vector $\hat{c}$ and the realized cost vector $\bar{c}$. 
    
    \begin{lemma}\label{lm:spo-delta}
        Let $\hat{c}, \bar{c} \in \bbR^d$ be given and define $\Delta := \hat{c} - \bar{c}$. Let $w_+ := w^*(\Delta)$ and $w_- := w^*(-\Delta)$, and let $y_+$ and $y_-$ denote the last elements of $w_+$ and $w_-$, respectively. 
        If $\bar{c}^T (w^*(\hat{c}) - w^*(\bar{c})) \ge \epsilon$, then it holds that $\Delta^T (w_- - w_+) \ge \epsilon$. 
        Additionally, if $\Delta = \kappa \cdot e_d$ for some $\kappa > 0$, then it holds that $(y_- - y_+) \kappa \ge \epsilon$. 
    \end{lemma}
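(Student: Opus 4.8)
The plan is to derive the inequality purely from the optimality condition that defines $w^*(\hat c)$, together with the observation that $w_+$ and $w_-$ are, by construction, respectively the minimizer and the \emph{maximizer} of the linear functional $w \mapsto \Delta^T w$ over $S$. Writing $\hat w := w^*(\hat c)$ and $\bar w := w^*(\bar c)$, the hypothesis is exactly $\bar c^T(\hat w - \bar w) \ge \epsilon$, and I would never need to invoke any optimality of $\bar w$ — only that $\bar w \in S$.

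First I would invoke optimality of $\hat w$ for the problem $P(\hat c)$. Since $\hat c = \bar c + \Delta$ and $\bar w \in S$, we have $(\bar c + \Delta)^T \hat w \le (\bar c + \Delta)^T \bar w$, which rearranges to $\bar c^T(\hat w - \bar w) \le \Delta^T(\bar w - \hat w)$. Chaining this with the hypothesis gives $\epsilon \le \Delta^T(\bar w - \hat w) = \Delta^T \bar w - \Delta^T \hat w$.

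Next I would bound each of the two terms by the extreme values of $\Delta^T w$ over $S$. Because $w_- = w^*(-\Delta)$ minimizes $(-\Delta)^T w$, it maximizes $\Delta^T w$, so $\Delta^T \bar w \le \Delta^T w_-$; likewise $w_+ = w^*(\Delta)$ minimizes $\Delta^T w$, so $\Delta^T \hat w \ge \Delta^T w_+$. Substituting these into the previous display yields $\epsilon \le \Delta^T w_- - \Delta^T w_+ = \Delta^T(w_- - w_+)$, which is the first claim. The additional claim then follows immediately: when $\Delta = \kappa \cdot e_d$ with $\kappa > 0$, we have $\Delta^T(w_- - w_+) = \kappa\, e_d^T(w_- - w_+) = \kappa(y_- - y_+)$, since $y_+$ and $y_-$ are the last coordinates of $w_+$ and $w_-$, so the first inequality becomes $(y_- - y_+)\kappa \ge \epsilon$.

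There is no substantial obstacle here; the argument is a short chain of inequalities resting on first-order optimality. The only points requiring care are the sign bookkeeping in identifying $w_-$ as the maximizer (not minimizer) of $\Delta^T w$, since it is defined through $w^*(-\Delta)$, and the recognition that $\Delta^T(w_- - w_+)$ is exactly the \emph{width} of $S$ in the direction $\Delta$, which is what makes the final substitution valid for arbitrary $\bar w, \hat w \in S$.
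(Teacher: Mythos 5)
Your proof is correct and follows essentially the same route as the paper's: apply optimality of $w^*(\hat c)$ against the competitor $w^*(\bar c)$ to convert the excess SPO loss into a bound on $\Delta^T(w^*(\bar c) - w^*(\hat c))$, then dominate that quantity by the width $\Delta^T(w_- - w_+)$ using the fact that $w_-$ and $w_+$ extremize $\Delta^T w$ over $S$. In fact your sign bookkeeping is cleaner than the paper's, whose intermediate display writes $\Delta^T(w^*(\bar c + \Delta) - w^*(\bar c)) \ge \epsilon$ where the correct orientation (which you derive) is $\Delta^T(w^*(\bar c) - w^*(\bar c + \Delta)) \ge \epsilon$; this does not affect the conclusion since $\Delta^T(w_- - w_+)$ dominates $\Delta^T(w - w')$ for any $w, w' \in S$.
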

    
    \begin{proof}[Proof of Lemma \ref{lm:spo-delta}]
        First we have $\hat{c}^T (w^*(\bar{c}) - w^*(\hat{c})) \ge 0$, and therefore it holds that $\Delta^T (w^*(\bar{c} + \Delta) - w^*(\bar{c})) \ge \bar{c}^T (w^*(\bar{c} + \Delta) - w^*(\bar{c})) \ge \epsilon$. 
        Also, since $\Delta^T (w^*(\bar{c}) - w^*(\Delta)) \ge 0$ and $\Delta^T (w^*(-\Delta) - w^*(\bar{c} + \Delta)) \ge 0$, we have $\Delta^T (w_- - w_+) \ge \Delta^T (w^*(\bar{c} + \Delta) - w^*(\bar{c})) \ge \epsilon$. 
        Moreover, when $\Delta = \kappa \cdot e_d$ for $\kappa > 0$, we have $\Delta^T w_- = \Delta^T w_1$ and $\Delta^T w_+ = \Delta^T w_k$, and therefore, it holds that $(y_- - y_+) \kappa \ge \epsilon$.
    \end{proof}
    
    Lemma \ref{lm:lp-ineq1} and \ref{lm:lp-cali-add} provide two useful inequalities. 
    
    \begin{lemma}\label{lm:lp-ineq1}
        Suppose that $a_{1}, \dots, a_{n}, b_{1}, \dots, b_{n} \ge 0$ with $\sum_{i=1}^n a_i = \alpha$ and $\sum_{i=1}^n b_i = \beta$ for some $\alpha, \beta > 0$. 
        Then for all $p \ge 0$, it holds that 
        \begin{equation*}
            \sum_{i=1}^n b_i\lr{1 + \frac{a_i^2}{b_i^2}}^{-p/2} ~\ge~ \frac{\beta}{(1 + \frac{\alpha}{\beta})^p}. 
        \end{equation*} 
    \end{lemma}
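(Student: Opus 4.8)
The plan is to reduce the statement to a one-variable convexity argument by normalizing the weights. Introduce $\lambda_i := b_i/\beta$, which form a probability vector since $\sum_i \lambda_i = 1$, and the ratios $t_i := a_i/b_i \ge 0$. Dividing the target inequality by $\beta$ and writing $a_i = b_i t_i$, the claim becomes $\sum_i \lambda_i (1 + t_i^2)^{-p/2} \ge (1 + s)^{-p}$, where $s := \alpha/\beta = \sum_i \lambda_i t_i$ is precisely the $\lambda$-weighted average of the $t_i$. In this form the statement is a lower bound on the expectation of a function of a nonnegative random variable with known mean.

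The key observation is that applying Jensen's inequality directly to $t \mapsto (1+t^2)^{-p/2}$ does not work: this function need not be convex, and even a convex bound would only produce $(1+s^2)^{-p/2}$ rather than the sharper target $(1+s)^{-p}$. I would instead first apply the elementary pointwise estimate $1 + t^2 \le (1+t)^2$, valid for all $t \ge 0$. Since $p \ge 0$, the map $x \mapsto x^{-p/2}$ is nonincreasing on the positive reals, so raising both sides to the power $-p/2$ gives the termwise bound $(1 + t_i^2)^{-p/2} \ge (1 + t_i)^{-p}$.

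Next I would apply Jensen's inequality to $\psi(t) := (1+t)^{-p}$, which is convex on $[0,\infty)$ because $\psi''(t) = p(p+1)(1+t)^{-p-2} \ge 0$ for $p \ge 0$. This yields $\sum_i \lambda_i (1+t_i)^{-p} \ge \bigl(1 + \sum_i \lambda_i t_i\bigr)^{-p} = (1+s)^{-p}$. Chaining the pointwise bound with this Jensen step and multiplying through by $\beta$ recovers exactly $\sum_i b_i (1 + a_i^2/b_i^2)^{-p/2} \ge \beta(1 + \alpha/\beta)^{-p}$. The case $p=0$ is immediate since both sides equal $\beta$.

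The only technical care needed, and the mildly delicate point rather than a genuine obstacle, concerns indices with $b_i = 0$: such terms contribute $0$ to the left-hand side (interpreting $a_i^2/b_i^2 = +\infty$ when $a_i > 0$, and $0 \cdot 1$ when $a_i = 0$), so I would restrict the sum to indices with $b_i > 0$ before normalizing. This restriction leaves $\beta = \sum b_i$ unchanged but may lower the total $a$-mass to some $\alpha' \le \alpha$; since $(1 + \cdot/\beta)^{-p}$ is nonincreasing, we have $\beta(1 + \alpha'/\beta)^{-p} \ge \beta(1 + \alpha/\beta)^{-p}$, which restores the stated constant.
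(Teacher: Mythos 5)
Your proof is correct, and it takes a genuinely different route from the paper's. The paper fixes the sequences $a,b$ and studies $\psi(p) := \sum_i b_i(1+a_i^2/b_i^2)^{-p/2}$ as a function of the exponent $p$: it shows $\log\psi$ is convex in $p$ (the second derivative is nonnegative by Cauchy--Schwarz applied to the weighted sums), then uses the secant inequality between $p=-1$ and $p=0$ together with the evaluations $\psi(0)=\beta$ and $\psi(-1)=\sum_i\sqrt{a_i^2+b_i^2}\le\alpha+\beta$ to extrapolate the bound to all $p\ge 0$. Your argument instead works pointwise in $i$: the elementary estimate $1+t^2\le(1+t)^2$ reduces each summand to $(1+t_i)^{-p}$, and a single application of Jensen's inequality to the convex map $t\mapsto(1+t)^{-p}$ with weights $b_i/\beta$ finishes the job. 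Both yield exactly the same constant. Your route is more elementary (no differentiation under the sum, no log-convexity machinery) and makes it transparent where the two ingredients $\sum a_i=\alpha$ and $\sum b_i=\beta$ enter; the paper's route has the mild advantage of exposing the one-parameter family $\psi(p)$ and where the slack sits (namely in $\psi(-1)\le\alpha+\beta$). Your handling of the $b_i=0$ terms is also cleaner than the paper's, which leaves that degenerate case implicit.
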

    \begin{proof}
        Let $\psi_i(a, b; p) = b_i(1 + \frac{a_i^2}{b_i^2})^{-p/2}$ and $\psi(a, b; p) = \sum_{i=1}^n \psi_i(a, b; p)$. 
        For all $p \in \bbR$, we have 
        \begin{align*}
            \frac{\dd^2}{\dd p^2} \log(\psi(a, b; p)) = \frac{1}{4 \psi^2(a, b; p)}  & \left( \sum_{i=1}^n \psi_i(a, b; p) \cdot \sum_{i=1}^n \psi_i(a, b; p) \log^2 \lr{1 + \frac{a_i^2}{b_i^2}} \right. \\
            & - \left. \lr{\sum_{i=1}^n \psi_i(a, b; p) \log \lr{1 + \frac{a_i^2}{b_i^2}}}^2  \right) \geq 0 ,
        \end{align*}
        for $p \geq 0$.
        Therefore, for all $p \ge 0$ it holds that 
        \begin{equation*}
            \log \psi(a, b; p) \ge \log \psi(a, b; 0) + p \cdot (\log \psi(a, b; 0) - \log \psi(a, b; -1)). 
        \end{equation*}
        Also, we have $\psi(a, b, 0) = \beta$, and $\psi(a, b, -1) = \sum_{i=1}^n \sqrt{a_i^2 + b_i^2} \le \sum_{i=1}^n (a_i + b_i) = \alpha + \beta$. 
        Then, for all $p \ge 0$, it holds that $\psi(a, b; p) \ge \frac{\beta^{p+1}}{(\alpha + \beta)^p} = \frac{\beta}{(1 + \frac{\alpha}{\beta})^p}$. 
    \end{proof}
    
    \begin{lemma}\label{lm:lp-cali-add}
        Let $\hat{c}' \in \bbR^{d-1}$ be given with $\vvta{\hat{c}'}_2 = 1$, and 
        let $\{w_i(\hat{c}')\}_{i=1}^k$, $\{y_i(\hat{c}')\}_{i=1}^k$, and $\{\zeta_i(\hat{c}')\}_{i=0}^k$ be the corresponding optimal solution sequence and phase transition location sequence as described in Section \ref{sec:def-lp-proof}. 
        Let $y_- = y_1(\hat{c}')$ and $y_+ = y_k(\hat{c}')$. 
        Then it holds that 
        \begin{equation*}
            \sum_{i=1}^{k-1} \lr{1 + 3 \zeta_i^2}^{- \frac{d-1}{2}} (y_i - y_{i+1}) \ge \Xi_{S, \hat{c}'} \cdot (y_- - y_+), 
        \end{equation*}
        where $\Xi_{S, \hat{c}'} = (1 + \frac{2 \sqrt{3} D_S}{y_- - y_+})^{1-d}$.
    \end{lemma}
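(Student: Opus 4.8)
The plan is to reduce the claim to Lemma \ref{lm:lp-ineq1} by an appropriate choice of the two sequences appearing there, after which the only nontrivial ingredient is the uniform bound $\sum_{i=1}^{k-1}\vt{\zeta_i}(y_i - y_{i+1}) \le 2 D_S$. First I would set $b_i := y_i - y_{i+1}$ for $i = 1, \dots, k-1$; since $y_1 > \dots > y_k$, each $b_i > 0$ and the telescoping sum gives $\sum_i b_i = y_1 - y_k = y_- - y_+$. Taking $a_i := \sqrt{3}\,\vt{\zeta_i}\,b_i \ge 0$ yields $1 + a_i^2/b_i^2 = 1 + 3\zeta_i^2$, so that the left-hand side of the claim is exactly $\sum_i b_i(1 + a_i^2/b_i^2)^{-(d-1)/2}$. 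Applying Lemma \ref{lm:lp-ineq1} with $p = d-1$ then produces the lower bound $(y_- - y_+)/\lr{1 + \alpha/(y_- - y_+)}^{d-1}$ with $\alpha := \sum_i a_i$, and comparing with the target $\Xi_{S,\hat{c}'}(y_- - y_+)$ reduces everything to showing $\alpha \le 2\sqrt{3}\,D_S$, i.e. $\sum_{i=1}^{k-1}\vt{\zeta_i}(y_i - y_{i+1}) \le 2 D_S$.

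To establish this bound I would translate the phase-transition quantities into a total variation. Writing each extreme point as $w_i = (u_i, y_i)$ with $u_i \in \bbR^{d-1}$ and using the defining equation $(\hat{c}', \zeta_i)^T w_i = (\hat{c}', \zeta_i)^T w_{i+1}$, I obtain $\zeta_i(y_i - y_{i+1}) = (\hat{c}')^T(u_{i+1} - u_i)$. Setting $g_i := (\hat{c}')^T u_i$, this reads $\vt{\zeta_i}(y_i - y_{i+1}) = \vt{g_{i+1} - g_i}$, so that $\sum_{i=1}^{k-1}\vt{\zeta_i}(y_i - y_{i+1})$ is precisely the total variation of the finite sequence $g_1, \dots, g_k$.

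The key structural observation, and the part I expect to be the main obstacle to make fully rigorous, is that this sequence is unimodal (V-shaped). Because the phase transitions occur in increasing order, $\zeta_1 < \dots < \zeta_{k-1}$ (each open interval $(\zeta_{i-1}, \zeta_i)$ on which $w_i$ is optimal is nonempty, which is also consistent with $y$ being the non-increasing derivative of the concave value function $\xi \mapsto \min_{w \in S}(\hat{c}', \xi)^T w$), and because $y_i - y_{i+1} > 0$, the sign of $g_{i+1} - g_i$ equals the sign of $\zeta_i$ and therefore switches at most once, from negative to positive. Hence $\{g_i\}$ first decreases and then increases, and its total variation collapses to $(g_1 - g_{\min}) + (g_k - g_{\min})$, where $g_{\min} := \min_i g_i$. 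Since Cauchy--Schwarz and $\vvta{\hat{c}'}_2 = 1$ give $\vt{g_i - g_j} = \vt{(\hat{c}')^T(u_i - u_j)} \le \vvta{u_i - u_j}_2 \le \vvta{w_i - w_j}_2 \le D_S$, the range of $\{g_i\}$ is at most $D_S$; both $g_1 - g_{\min}$ and $g_k - g_{\min}$ are bounded by this range, giving total variation at most $2 D_S$.

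Finally I would assemble the pieces: plugging $\alpha \le 2\sqrt{3}\,D_S$ into the Lemma \ref{lm:lp-ineq1} bound and recalling $\Xi_{S,\hat{c}'} = \lr{1 + 2\sqrt{3}\,D_S/(y_- - y_+)}^{1-d}$ gives $\sum_i b_i(1+3\zeta_i^2)^{-(d-1)/2} \ge (y_- - y_+)/\lr{1 + 2\sqrt{3}\,D_S/(y_- - y_+)}^{d-1} = \Xi_{S,\hat{c}'}(y_- - y_+)$, which is the stated inequality. The delicate point throughout is the monotone ordering of the $\zeta_i$ and the resulting single sign change of $g_{i+1}-g_i$, so I would take care to justify the unimodality from the definition of the optimal-solution sequence $\Omega(\hat{c}')$ before invoking the telescoping argument.
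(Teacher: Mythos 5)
Your proposal is correct and follows essentially the same route as the paper's proof: both reduce the claim to Lemma \ref{lm:lp-ineq1} with $b_i = y_i - y_{i+1}$ and $a_i = \sqrt{3}\,\vt{\zeta_i}\,b_i$, and both establish $\sum_i \vt{\zeta_i}(y_i-y_{i+1}) \le 2D_S$ by noting that $\zeta_i(y_i - y_{i+1}) = \hat{c}'^T(u_{i+1}-u_i)$ changes sign at most once (the paper writes the resulting telescoped total variation as $\hat{c}'^T(w_1 + w_k - 2w_s) \le 2D_S$, which is exactly your $(g_1 - g_{\min}) + (g_k - g_{\min})$). The unimodality you flag as delicate is indeed used implicitly by the paper as well, following from the monotone ordering of the phase-transition locations built into the definition of $\Omega(\hat{c}')$ in Section \ref{sec:def-lp-proof}.
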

    \begin{proof}
        Let $w_i'$ be the first $(d-1)$ element of $w_i$. 
        Suppose $\zeta_{s-1} \le 0 < \zeta_s$ for some $s \in \{1, \dots, k\}$, then it holds that $\hat{c}'^T (w_i - w_{i+1}) = - \zeta_i (y_i - y_{i+1}) \ge 0$ for $i \in \{1, \dots, s-1\}$ and $\hat{c}'^T (w_i - w_{i+1}) = - \zeta_i (y_i - y_{i+1}) < 0$ for $i \in \{s, \dots, k-1\}$. 
        Therefore, we know that 
        \begin{equation*}
            \sum_{i=1}^{k-1} \vt{\hat{c}'^T (w_i - w_{i+1})} = \hat{c}'^T (w_1 + w_k - 2 w_s) \le 2 D_S. 
        \end{equation*}
        Also, we have $\sum_{i=1}^{k-1} (y_i - y_{i+1}) = y_- - y_+$ and $\vta{\zeta_i} = - \frac{\vta{\hat{c}'^T (\hat{w}_i' - \hat{w}_{i+1}')}}{y_i - y_{i+1}}$. 
        Therefore, by the result in Lemma \ref{lm:lp-ineq1}, we have 
        \begin{equation*}
            \sum_{i=1}^{k-1} \lr{1 + 3 \zeta_i^2}^{- \frac{d-1}{2}} (y_i - y_{i+1}) \ge \frac{y_- - y_+}{(1 + \frac{2 \sqrt{3} D_S}{y_- - y_+})^{d-1}}. 
        \end{equation*}
    \end{proof}
    
    Lemma \ref{lm:lp-cali-condition} provide a lower bound of the conditional SPO+ risk condition on the first $(d-1)$ element of the realized cost vector. 
    
    \begin{lemma}\label{lm:lp-cali-condition}
        Let $c' \in \bbR^{d-1}$ be a fixed vector and $\bar{\xi} \in \bbR$, $\sigma > 0$ be fixed scalars. 
        Let a random variable $\xi$ satisfying $\bbP(\xi) \ge \alpha \cdot \mathcal{N}(\bar{\xi}, \sigma^2)$ for all $\xi \in [- \sqrt{2 D^2 - \vvta{c'}^2}, \sqrt{2 D^2 - \vvta{c'}^2}]$. 
        Let $c = (c', \xi) \in \bbR^d$, and sequence $\{w_i(c')\}_{i=0}^k$, $\{\zeta_i(c')\}_{i=0}^k$ defined as in Section \ref{sec:def-lp-proof}. 
        Let $y_i$ denote the last element of vector $w_i$ for $i = 1, \dots, k$. 
        Let $m_i = \sqrt{1 + 3 \vvta{\zeta_i(c')}^2 / \vvta{c'}^2}$ for $i = 1, \dots, k$. 
        Suppose $\Delta = \kappa \cdot e_d$ for some $\kappa > 0$, then for all $\tilde{\kappa} \in [0, \kappa]$, it holds that 
        \begin{equation*}
            \bbE_{\xi} \lrr{(c + 2\Delta)^T (w^*(c) - w^*(c + 2 \Delta))} \ge \frac{\alpha \tilde{\kappa} \kappa e^{-\frac{3 (\tilde{\kappa}^2 + \bar{\xi}^2)}{2 \sigma^2}}}{2} \cdot \sum_{i=1}^{k-1} \frac{e^{-\frac{3 \zeta_i^2(c')}{2 \sigma^2}} \mathbbm{1} \{\vvta{c'} \le \frac{D}{m_i}\}}{\sqrt{2 \pi \sigma}} (y_i - y_{i+1}). 
        \end{equation*} 
    \end{lemma}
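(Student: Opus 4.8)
The plan is to exploit the fact that $c = (c',\xi)$ and $c+2\Delta = (c',\xi+2\kappa)$ share their first $d-1$ coordinates and differ only in the last coordinate, so that both optimal solutions are drawn from the single sequence $\Omega(c') = (w_1,\dots,w_k)$. Writing $w^*(c) = w_s$ and $w^*(c+2\Delta) = w_t$ (necessarily $s \le t$, since $y_1 > \dots > y_k$ forces the active index to be nondecreasing in the last coordinate), I would first telescope
\[
(c+2\Delta)^T(w^*(c) - w^*(c+2\Delta)) = \sum_{i=s}^{t-1}(c+2\Delta)^T(w_i - w_{i+1}).
\]
Then, invoking the defining property of the phase-transition location, $(c',\zeta_i)^T(w_i - w_{i+1}) = 0$, i.e.\ $c'^T(w_i' - w_{i+1}') = -\zeta_i(y_i - y_{i+1})$, each summand collapses to $(\xi + 2\kappa - \zeta_i)(y_i - y_{i+1})$.

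The key structural step is to re-index the telescoping sum over all transitions. Since index $i$ contributes exactly when $\xi < \zeta_i$ and $\xi + 2\kappa > \zeta_i$, I would rewrite the quantity as
\[
(c+2\Delta)^T(w^*(c) - w^*(c+2\Delta)) = \sum_{i=1}^{k-1}(\xi + 2\kappa - \zeta_i)(y_i - y_{i+1})\,\mathbbm{1}\{\xi \in (\zeta_i - 2\kappa, \zeta_i)\}.
\]
Taking expectations and exchanging the finite sum with $\bbE_\xi$, it then suffices to lower bound, for each $i$, the scalar quantity $\bbE_\xi[(\xi + 2\kappa - \zeta_i)\mathbbm{1}\{\xi \in (\zeta_i - 2\kappa, \zeta_i)\}]$ by $\tfrac{\alpha \tilde\kappa\kappa}{2}\cdot \tfrac{1}{\sqrt{2\pi}\sigma}e^{-3(\tilde\kappa^2 + \bar\xi^2 + \zeta_i^2)/(2\sigma^2)}\mathbbm{1}\{\|c'\| \le D/m_i\}$, and then multiply each bound by $(y_i - y_{i+1})$ and resum.

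For the per-transition bound, since the integrand is nonnegative I would restrict the integral to the subinterval $\xi \in [\zeta_i - \tilde\kappa, \zeta_i]$, which lies inside $(\zeta_i - 2\kappa, \zeta_i)$ because $\tilde\kappa \le \kappa < 2\kappa$. On this subinterval the linear factor $\xi + 2\kappa - \zeta_i$ is at least $2\kappa - \tilde\kappa \ge \kappa$, so the expectation is at least $\kappa \int_{\zeta_i - \tilde\kappa}^{\zeta_i}\bbP(\xi)\,d\xi$. I would then apply the density lower bound $\bbP(\xi) \ge \alpha\,\mathcal{N}(\bar\xi,\sigma^2)(\xi)$ together with the elementary inequality $(|\zeta_i| + \tilde\kappa + |\bar\xi|)^2 \le 3(\zeta_i^2 + \tilde\kappa^2 + \bar\xi^2)$ to minorize the Gaussian density uniformly on the subinterval by $\tfrac{1}{\sqrt{2\pi}\sigma}e^{-3(\zeta_i^2 + \tilde\kappa^2 + \bar\xi^2)/(2\sigma^2)}$; multiplying by the interval length $\tilde\kappa$ yields the claim, with the factor $\tfrac12$ left as slack.

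The main obstacle I anticipate is the bookkeeping needed to guarantee that the density lower bound is actually applicable on the subinterval, i.e.\ that $[\zeta_i - \tilde\kappa, \zeta_i] \subseteq [-\sqrt{2D^2 - \|c'\|^2}, \sqrt{2D^2 - \|c'\|^2}]$. This is precisely where the indicator $\mathbbm{1}\{\|c'\| \le D/m_i\}$, equivalently $\|c'\|^2 + 3\zeta_i^2 \le D^2$, enters: it forces $|\zeta_i| \le D/\sqrt 3$ and $\sqrt{2D^2 - \|c'\|^2} \ge D$, so that the transition location is small enough for the chosen subinterval to remain inside the range on which the Gaussian minorant is valid, with the $\tfrac12$ absorbing any boundary loss. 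When the indicator vanishes the right-hand side is zero and the bound is trivial since the left-hand side is nonnegative. Verifying this containment carefully, and confirming that the active-index characterization used in the re-indexing step holds at the phase boundaries, are the only delicate points; the remaining estimates are routine.
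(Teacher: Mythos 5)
Your proposal is correct and follows essentially the same route as the paper's proof: telescoping over the transition sequence $\Omega(c')$, collapsing each summand to $(\xi + 2\kappa - \zeta_i)(y_i - y_{i+1})$ via the defining identity of $\zeta_i$, re-indexing with the indicators $\mathbbm{1}\{\xi \in [\zeta_i - 2\kappa, \zeta_i]\}$, and then minorizing each one-dimensional expectation by restricting to a subinterval of length $O(\tilde\kappa)$ near $\zeta_i$ and applying the Gaussian density lower bound together with $(|\zeta_i|+\tilde\kappa+|\bar\xi|)^2 \le 3(\zeta_i^2+\tilde\kappa^2+\bar\xi^2)$. Your only deviation (interval $[\zeta_i-\tilde\kappa,\zeta_i]$ with a pointwise bound $\xi+2\kappa-\zeta_i \ge \kappa$, rather than the paper's $[\zeta_i-2\tilde\kappa,\zeta_i]$) is cosmetic and in fact accounts cleanly for the factor $\tfrac12$ in the stated bound.
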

    \begin{proof}[Proof of Lemma \ref{lm:lp-cali-condition}]
        Let $(w_1, \dots, w_k) = \Omega(c')$ as defined in Section \ref{sec:def-lp-proof}, and suppose $w^*(c) = w_s$ and $w^*(c + 2 \Delta) = w_t$ for some $s \le t$. 
        By the definition of $\{\xi_i(c')\}_0^k$, we know that $\xi \in [\zeta_{s-1}(c'), \zeta_s(c')]$ and $\xi + 2 \kappa \in [\zeta_{t-1}(c'), \zeta_t(c')]$. 
        Therefore, it holds that 
        \begin{align*}
            & (c + 2\Delta)^T (w^*(c) - w^*(c + 2 \Delta)) = (c + 2\Delta)^T (w_s - w_t) = \sum_{i=s}^{t-1} (c + 2\Delta)^T (w_i - w_{i+1}) \\ 
            = \, & \sum_{i=s}^{t-1} (c + 2 \Delta - (c', \zeta_i(c')))^T (w_i - w_{i+1}) = \sum_{i=s}^{t-1} (\xi + 2 \kappa - \zeta_i(c')) \cdot e_d^T (w_i - w_{i+1}) \\ 
            = \, & \sum_{i=1}^{k-1} \mathbbm{1} \{\xi \in [\zeta_i - 2 \kappa, \zeta_i]\} \cdot (\xi + 2 \kappa - \zeta_i(c')) (y_i - y_{i+1}), 
        \end{align*} 
        where $y_i$ denotes the last element of $w_i$ for all $i = 1, \dots, k$. 
        When $\xi$ follows the normal distribution $\mathcal{N}(\bar{\xi}, \sigma^2)$, it holds that 
        \begin{align*}
            & \bbE_{\xi} \lrr{\mathbbm{1} \{\xi \in [\zeta_i - 2 \kappa, \zeta_i]\} \cdot (\xi + 2 \kappa - \zeta_i(c'))} \\
            \ge \, & \bbE_{\xi} \lrr{\mathbbm{1} \{\xi \in [\zeta_i - 2 \tilde{\kappa}, \zeta_i]\} \cdot (\xi + 2 \kappa - \zeta_i(c'))} \\ 
            \ge \, & \int_{\zeta_i(c') - 2 \tilde{\kappa}}^{\zeta_i(c')} \frac{
            \alpha e^{- \frac{(\xi - \bar{\xi})^2}{2 \sigma^2}} \mathbbm{1} \{\vvta{c'} \le \frac{D}{m_i}\}}{\sqrt{2 \pi \sigma^2}} \cdot (\xi + 2 \kappa - \zeta_i(c')) \dd \xi, 
        \end{align*}
        for all $\tilde{\kappa} \in [0, \kappa]$. 
        Therefore, it holds that 
        \begin{align*}
            & \bbE_{\xi} \lrr{(c + 2\Delta)^T (w^*(c) - w^*(c + 2 \Delta))} \\ 
            \ge \, & \sum_{i=1}^{k-1} (y_i - y_{i+1}) \tilde{\kappa} \kappa \cdot \frac{\alpha e^{- \frac{3 (\zeta_i(c')^2 + \tilde{\kappa}^2 + \bar{\xi}^2)}{2 \sigma^2}} \mathbbm{1} \{\vvta{c'} \le \frac{D}{m_i}\}}{2 \sqrt{2 \pi \sigma^2}} \\ 
            = \, & \frac{\alpha \tilde{\kappa} \kappa e^{-\frac{3 (\tilde{\kappa}^2 + \bar{\xi}^2)}{2 \sigma^2}}}{2} \cdot \sum_{i=1}^{k-1} \frac{e^{-\frac{3 \zeta_i^2(c')}{2 \sigma^2}} \mathbbm{1} \{\vvta{c'} \le \frac{D}{m_i}\}}{\sqrt{2 \pi \sigma^2}} (y_i - y_{i+1}). 
        \end{align*}
    \end{proof}
    
    Lemma \ref{lm:lp-cali} provide a lower bound of the conditional SPO+ risk when the distribution of $c = (c', \epsilon)$ is well behaved. 
    \begin{lemma}\label{lm:lp-cali}
        Let $\bar{c}' \in \bbR^{d-1}$ be a fixed vector and $\bar{\xi} \in \bbR$, $\sigma > 0$ be fixed scalars. 
        Let $c' \in \bbR^{d-1}$ be a random vector satisfying $\bbP (c') \ge \mathcal{N}(\bar{c}', \sigma^2 I_{d-1})$ for all $\vvta{c'}_2^2 \le 2 D^2$, and let $\xi \in \bbR$ be a random variable satisfying $\bbP (\xi \vert c') \ge \alpha \cdot \mathcal{N}(\bar{\xi}, \sigma^2)$ for all $\xi \in [- \sqrt{2 D^2 - \vvta{c'}^2}, \sqrt{2 D^2 - \vvta{c'}^2}]$. 
        Define $\Xi_S := (1 + \frac{2 \sqrt{3} D_S}{d_S})^{1 - d}$. 
        Suppose $\Delta = \kappa \cdot e_d$ for some $\kappa > 0$, then for all $\tilde{\kappa} \in [0, \kappa]$, it holds that 
        \begin{equation*}
            \bbE_{c', \xi} \lrr{(c + 2\Delta)^T (w^*(c) - w^*(c + 2 \Delta))} \ge \frac{\alpha \tilde{\kappa} \kappa e^{-\frac{3 \tilde{\kappa}^2 + 3 \bar{\xi}^2 + \vvta{\bar{c}'}_2^2}{2 \sigma^2}}}{4 \sqrt{2 \pi \sigma^2}} \cdot \frac{\gamma(\frac{d-1}{2}, D^2)}{\Gamma(\frac{d-1}{2})} \cdot \Xi_S (y_- - y_+).
        \end{equation*}
    \end{lemma}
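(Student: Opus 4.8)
The plan is to start from the conditional lower bound already established in Lemma~\ref{lm:lp-cali-condition} and integrate it against the marginal density of $c'$, thereby reducing the problem to a radial integral for each fixed direction that can be controlled by Lemma~\ref{lm:lp-cali-add}. Writing $h(c') := \sum_{i=1}^{k-1} \frac{e^{-3\zeta_i^2(c')/(2\sigma^2)} \mathbbm{1}\{\vvta{c'} \le D/m_i\}}{\sqrt{2\pi\sigma^2}} (y_i - y_{i+1}) \ge 0$ for the sum appearing on the right-hand side of Lemma~\ref{lm:lp-cali-condition}, the goal becomes to lower bound $\bbE_{c'}[h(c')]$. First I would invoke the hypothesis $\bbP(c') \ge \mathcal{N}(\bar{c}', \sigma^2 I_{d-1})$, which is valid on the ball $\{\vvta{c'}_2^2 \le 2D^2\}$ that contains the support $\{\vvta{c'} \le D/m_i \le D\}$ of each indicator, so that $\bbE_{c'}[h(c')] \ge \int \mathcal{N}(\bar{c}', \sigma^2 I)(c')\, h(c')\,\dd c'$.

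The key structural observations driving the computation are the exact scalings $\zeta_i(\rho \hat{c}') = \rho\, \zeta_i(\hat{c}')$ and $m_i = \sqrt{1 + 3\zeta_i(\hat{c}')^2}$ (both independent of the radius $\rho = \vvta{c'}_2$), together with the fact that $y_- - y_+ = \max_{w\in S} e_d^T w - \min_{w\in S} e_d^T w$ is the same for every direction $\hat{c}'$. Passing to spherical coordinates $c' = \rho\hat{c}'$ and swapping the (nonnegative) sum with the radial integral, the $i$-th radial integral becomes $\int_0^{D/m_i} e^{-\rho^2 m_i^2/(2\sigma^2)} \rho^{d-2}\,\dd\rho$, since $\tfrac{1}{2\sigma^2} + \tfrac{3\zeta_i(\hat{c}')^2}{2\sigma^2} = \tfrac{m_i^2}{2\sigma^2}$. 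The substitution $u = \rho\, m_i$ then cleanly factors out $m_i^{-(d-1)} = (1 + 3\zeta_i(\hat{c}')^2)^{-(d-1)/2}$ and leaves the direction-independent radial integral $\int_0^D e^{-u^2/(2\sigma^2)} u^{d-2}\,\dd u$, which evaluates (up to the paper's normalization of $D$ by $\sigma$) to $\tfrac{(2\sigma^2)^{(d-1)/2}}{2}\gamma(\tfrac{d-1}{2}, \tfrac{D^2}{2\sigma^2})$. This is precisely the polynomial weight required to invoke Lemma~\ref{lm:lp-cali-add}.

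I expect the main obstacle to be the mean shift $\bar{c}'$ of the Gaussian, i.e.\ the cross term generated by $\vvta{c' - \bar{c}'}^2 = \vvta{c'}^2 - 2 c'^T\bar{c}' + \vvta{\bar{c}'}^2$. The natural impulse is to symmetrize over antipodal directions so that $\tfrac12(e^{c'^T\bar{c}'/\sigma^2} + e^{-c'^T\bar{c}'/\sigma^2}) \ge 1$ absorbs the cross term, but this fails because $h$ is \emph{not} antipodally symmetric: the optimal-vertex sequence $\Omega(\hat{c}')$ and $\Omega(-\hat{c}')$ generally traverse different faces of $S$, so $h(c') \ne h(-c')$. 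The fix I would use is to factor $\mathcal{N}(\bar{c}', \sigma^2 I)(c') = e^{-\vvta{\bar{c}'}^2/(2\sigma^2)} \mathcal{N}(0,\sigma^2 I)(c')\, e^{c'^T\bar{c}'/\sigma^2}$ and then \emph{restrict the integral to the half-space} $\{c' : c'^T\bar{c}' \ge 0\}$, on which $e^{c'^T\bar{c}'/\sigma^2} \ge 1$; discarding the complementary region is valid since $h \ge 0$. This yields the clean factor $e^{-\vvta{\bar{c}'}^2/(2\sigma^2)}$ with coefficient exactly $1$ and, after passing to spherical coordinates, restricts the angular integral to a hemisphere, contributing the extra factor of $\tfrac12$ that accounts for the discrepancy between the $\tfrac12$ in Lemma~\ref{lm:lp-cali-condition} and the $\tfrac14$ in the target bound.

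To finish, for each direction $\hat{c}'$ in the hemisphere I would apply Lemma~\ref{lm:lp-cali-add} to the angular sum $\sum_i (1 + 3\zeta_i(\hat{c}')^2)^{-(d-1)/2}(y_i - y_{i+1}) \ge \Xi_{S,\hat{c}'}(y_- - y_+)$, and then replace the direction-dependent constant by $\Xi_S$ using $y_- - y_+ \ge d_S$ together with the monotonicity of $t \mapsto (1 + 2\sqrt{3}D_S/t)^{1-d}$ (increasing since $1 - d < 0$). Because the resulting bound $\Xi_S(y_- - y_+)$ no longer depends on $\hat{c}'$, the remaining angular integration over the hemisphere contributes only the surface-area normalization, which combines with $(2\pi\sigma^2)^{-(d-1)/2}$ and the radial factor to produce $\tfrac12 \cdot \frac{\gamma(\frac{d-1}{2}, D^2/(2\sigma^2))}{\Gamma(\frac{d-1}{2})}$. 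Collecting the prefactor $\tfrac{\alpha\tilde{\kappa}\kappa}{2} e^{-3(\tilde{\kappa}^2 + \bar{\xi}^2)/(2\sigma^2)}$ from Lemma~\ref{lm:lp-cali-condition}, the $\tfrac{1}{\sqrt{2\pi\sigma^2}}$ from $h$, the mean-shift factor $e^{-\vvta{\bar{c}'}^2/(2\sigma^2)}$, and the hemisphere/incomplete-gamma factor then yields exactly the claimed bound.
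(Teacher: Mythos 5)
Your proposal follows essentially the same route as the paper's proof: lower bound by the shifted Gaussian, restrict to the hemisphere $\{\hat{c}'^T\bar{c}'\ge 0\}$ to absorb the mean-shift cross term (yielding the extra factor of $\tfrac12$), pass to spherical coordinates, use the radial scaling $\zeta_i(\rho\hat{c}')=\rho\,\zeta_i(\hat{c}')$ to factor out $(1+3\zeta_i^2(\hat{c}'))^{-(d-1)/2}$ via the incomplete-gamma radial integral, and finish with Lemma~\ref{lm:lp-cali-add}. Your observation that antipodal symmetrization fails because $\Omega(\hat{c}')\neq\Omega(-\hat{c}')$ is exactly the reason the paper uses the indicator $\mathbbm{1}\{\hat{c}'^T\bar{c}'\ge 0\}$, and your version of the incomplete-gamma argument ($D^2/(2\sigma^2)$ rather than the paper's $D^2$) is if anything the more careful bookkeeping.
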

    
    \begin{proof}[Proof of Lemma \ref{lm:lp-cali}]
        By result in Lemma \ref{lm:lp-cali-condition}, it holds that 
        \begin{align*}
            & \bbE_{c', \xi} \lrr{(c + 2\Delta)^T (w^*(c) - w^*(c + 2 \Delta))} \\ 
            \ge \, & \frac{\alpha \tilde{\kappa} \kappa e^{-\frac{3 (\tilde{\kappa}^2 + \bar{\xi}^2)}{2 \sigma^2}}}{2} \cdot \bbE_{c'} \lrr{\sum_{i=1}^{k(c') - 1} \frac{e^{-\frac{3 \zeta_i^2(c')}{2 \sigma^2}} \mathbbm{1} \{\vvta{c'} \le \frac{D}{m_i}\}}{\sqrt{2 \pi \sigma^2}} (y_i(c') - y_{i+1}(c'))}. 
        \end{align*}
        For any $c' \in \bbR^{d-1}$, let $r = \vvta{c'}_2$ and $\hat{c}' = \frac{c'}{r}$. 
        We know that $k(c') = k(\hat{c}')$, $\zeta_i(c') = r \zeta_i(\hat{c}')$, $w_i(c') = w_i(\hat{c}')$, and $y_i(c') = y_i(\hat{c}')$. 
        Then we have 
        \begin{align*}
            & \bbE_{c'} \lrr{\sum_{i=1}^{k(c') - 1} \frac{e^{-\frac{3 \zeta_i^2(c')}{2 \sigma^2}}}{\sqrt{2 \pi \sigma^2}} (y_i(c') - y_{i+1}(c'))} \\ 
            = \, & \int_{\mathbb{S}^{d-2}} \int_0^\infty \sum_{i=1}^{k(\hat{c}') - 1} \frac{e^{-\frac{3 r^2 \zeta_i^2(\hat{c}')}{2 \sigma^2} \mathbbm{1} \{r \le \frac{D}{m_i}\}}}{\sqrt{2 \pi \sigma^2}} (y_i(\hat{c}') - y_{i+1}(\hat{c}')) r^{d-2} \bbP_{c'}(r \hat{c}') \dd r \dd \hat{c}', 
        \end{align*}
        where $\mathbb{S}^{d-2} = \{\hat{c}' \in \bbR^{d-1}: \vvta{\hat{c}'}_2 = 1$. 
        For fixed $\hat{c}' \in \mathbb{S}^{d-2}$ with $\hat{c}'^T \bar{c}' \ge 0$ and $i \in \{1, \dots, k(\hat{c}') - 1\}$, we have 
        \begin{align*}
            \int_0^\frac{D}{m_i} \frac{e^{-\frac{3 r^2 \zeta_i^2(\hat{c}')}{2 \sigma^2}}}{\sqrt{2 \pi \sigma}} r^{d-2} \bbP_{c'}(r \hat{c}') \dd r & = \int_0^\frac{D}{m_i} \frac{e^{-\frac{3 r^2 \zeta_i^2(\hat{c}')}{2 \sigma^2}}}{\sqrt{2 \pi \sigma^2}} \cdot \frac{e^{- \frac{\vvta{r \hat{c}' - \bar{c}'}_2^2}{2 \sigma^2}}}{(2 \pi \sigma^2)^{\frac{d-1}{2}}} \cdot r^{d-2} \dd r \\ 
            & \ge \int_0^\frac{D}{m_i} \frac{e^{-\frac{3 r^2 \zeta_i^2(\hat{c}')}{2 \sigma^2}}}{\sqrt{2 \pi \sigma^2}} \cdot \frac{e^{- \frac{r^2 + \vvta{\bar{c}'}_2^2}{2 \sigma^2}}}{(2 \pi \sigma^2)^{\frac{d-1}{2}}} \cdot r^{d-2} \dd r \\ 
            & = \frac{e^{-\frac{\vvta{\bar{c}'}_2^2}{2 \sigma^2}}}{\sqrt{2 \pi \sigma^2}} \cdot \frac{\gamma(\frac{d-1}{2}, \frac{D^2 (1 + 3 \zeta_i^2(\hat{c}'))}{m_i^2})}{2 \pi^{\frac{d-1}{2}}} \cdot (1 + 3 \zeta_i^2(\hat{c}'))^{-\frac{d-1}{2}} \\ 
            & \ge \frac{e^{-\frac{\vvta{\bar{c}'}_2^2}{2 \sigma^2}}}{\sqrt{2 \pi \sigma^2}} \cdot \frac{\gamma(\frac{d-1}{2}, D^2)}{2 \pi^{\frac{d-1}{2}}} \cdot (1 + 3 \zeta_i^2(\hat{c}'))^{-\frac{d-1}{2}} , 
        \end{align*}
        where $\gamma(\cdot, \cdot)$ is the lower incomplete Gamma function. 
        By Lemma \ref{lm:lp-cali-add}, it holds that 
        \begin{equation}
            \sum_{i=1}^{k(\hat{c}') - 1} (1 + 3 \zeta_i^2(\hat{c}'))^{-\frac{d-1}{2}} (y_i(\hat{c}') - y_{i+1}(\hat{c}')) \ge \Xi_{S, \hat{c}'} \cdot (y_- - y_+). 
        \end{equation} 
        Therefore, it holds that 
        \begin{align*}
            & \bbE_{c'} \lrr{\sum_{i=1}^{k(c') - 1} \frac{e^{-\frac{3 \zeta_i^2(c')}{2 \sigma^2}}}{\sqrt{2 \pi \sigma^2}} (y_i(c') - y_{i+1}(c'))} \\
            \ge \, & \int_{\mathbb{S}^{d-2}} \mathbbm{1} \{\hat{c}'^T \bar{c}' \ge 0\} \cdot \frac{e^{-\frac{\vvta{\bar{c}'}_2^2}{2 \sigma^2}}}{\sqrt{2 \pi \sigma^2}} \cdot \frac{\gamma(\frac{d-1}{2}, D^2)}{2 \pi^{\frac{d-1}{2}}} \cdot \Xi_{S, c'} (y_- - y_+) \dd \hat{c}' \\ 
            \ge \, & \frac{e^{-\frac{\vvta{\bar{c}'}_2^2}{2 \sigma^2}}}{2 \sqrt{2 \pi \sigma^2}} \cdot \frac{\gamma(\frac{d-1}{2}, D^2)}{\Gamma(\frac{d-1}{2})} \cdot \Xi_S (y_- - y_+), 
        \end{align*}
        and finally we get 
        \begin{align*}
            & \bbE_{c', \xi} \lrr{(c + 2\Delta)^T (w^*(c) - w^*(c + 2 \Delta))} \\ 
            \ge \, & \frac{\alpha \tilde{\kappa} \kappa e^{-\frac{3 (\tilde{\kappa}^2 + \bar{\xi}^2)}{2 \sigma^2}}}{2} \cdot \frac{e^{-\frac{\vvta{\bar{c}'}_2^2}{2 \sigma^2}}}{2 \sqrt{2 \pi \sigma^2}} \cdot \frac{\gamma(\frac{d-1}{2}, D^2)}{\Gamma(\frac{d-1}{2})} \cdot \Xi_S (y_- - y_+) \\ 
            = \, & \frac{\alpha \tilde{\kappa} \kappa e^{-\frac{3 \tilde{\kappa}^2 + 3 \bar{\xi}^2 + \vvta{\bar{c}'}_2^2}{2 \sigma^2}}}{4 \sqrt{2 \pi \sigma^2}} \cdot \frac{\gamma(\frac{d-1}{2}, D^2)}{\Gamma(\frac{d-1}{2})} \cdot \Xi_S (y_- - y_+). 
        \end{align*}
    \end{proof}

    
    Now we present a general version of Theorem \ref{thm:lp-unif-cali}. 
    For given parameters $M \ge 1$ and $\alpha, \beta, D > 0$, define $\mathcal{P}_{M, \alpha, \beta, D} := \{\bbP \in \mathcal{P}_{\tn{cont, symm}} : \tn{ for all } x \in \mathcal{X} \tn{ with } \bar{c} = \bbE [c \vert x], \tn{ there exists } \sigma \in [0, \min \{D, M\}] \tn{ satisfying } \vvta{\bar{c}}_2 \le \beta \sigma \tn{ and } \bbP(c \vert x) \ge \alpha \cdot \mathcal{N}(\bar{c}, \sigma^2 I) \tn { for all } c \in \bbR^d \tn{ satisfying } \vvta{c}_2^2 \le 2 D^2\}$. 
    By introducing the constant $D$, we no longer require the conditional distribution $\bbP(c \vert x)$ be lower bounded by a normal distribution on the entire vector space $\bbR^d$. 
    Instead, we only need $\bbP(c \vert x)$ has a lower bound on a bounded $\ell_2$-ball. 
    
    \begin{theorem}[A general version of Theorem \ref{thm:lp-unif-cali}] \label{thm:lp-unif-cali-gen}
        Suppose the feasible region $S$ is a polyhedron and define $\Xi_S := (1 + \frac{2 \sqrt{3} D_S}{d_S})^{1 - d}$. 
        Then the calibration function of the SPO+ loss satisfies 
        \begin{equation}\label{eq:strong-cali-bounded}
            \hat{\delta}_{\lspop}(\epsilon; \mathcal{P}_{M, \alpha, \beta, D}) \ge \frac{\alpha \Xi_S \gamma(\frac{d-1}{2}, D^2)}{4 \sqrt{2 \pi} e^{\frac{3 (1 + \beta^2)}{2}} \Gamma(\frac{d-1}{2})} \cdot \min \lrrr{\frac{\epsilon^2}{D_S M}, \epsilon} \ \text{ for all } \epsilon > 0.
        \end{equation}
        Additionally, when $D = \infty$, we have $\gamma(\frac{d-1}{2}, D^2) = \Gamma(\frac{d-1}{2})$ and therefore 
        \begin{equation*}
            \hat{\delta}_{\lspop}(\epsilon; \mathcal{P}_{M, \alpha, \beta}) \ge \frac{\alpha \Xi_S}{4 \sqrt{2 \pi} e^{\frac{3 (1 + \beta^2)}{2}}} \cdot \min \lrrr{\frac{\epsilon^2}{D_S M}, \epsilon} \ \text{ for all } \epsilon > 0.
        \end{equation*}
    \end{theorem}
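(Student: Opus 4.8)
The plan is to reduce the calibration computation to the canonical setting of the technical lemmas and then optimize the free parameters to expose the $\min\{\epsilon^2/(D_S M),\,\epsilon\}$ structure. First, since $\bbP \in \mathcal{P}_{M,\alpha,\beta,D} \subseteq \mathcal{P}_{\tn{cont, symm}}$, Theorem \ref{thm:spo-fisher} guarantees that for every $x$ the conditional mean $\bar c = \bbE[c \vert x]$ simultaneously minimizes the conditional SPO and SPO+ risks. Writing $\Delta := \hat c - \bar c$, the constraint defining $\hat\delta_{\lspop}$ thus becomes $\bar c^T(w^*(\hat c) - w^*(\bar c)) \ge \epsilon$, while the objective becomes $\bbE[(c+2\Delta)^T(w^*(c) - w^*(c+2\Delta))]$ via the identity in Theorem \ref{thm:spo-fisher}. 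I would note that this integrand is pointwise nonnegative, since $w^*(c+2\Delta)$ minimizes $(c+2\Delta)^Tw$ over $S$; this nonnegativity is what later licenses both restricting to a ball and dominating $\bbP$ from below.

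Next I would exploit rotational invariance. The constants $\Xi_S$, $D_S$, and $d_S$ depend only on the Euclidean geometry of $S$ and are invariant under orthogonal transformations, and the reference family $\{\alpha\,\mathcal{N}(\bar c,\sigma^2 I)\}$ is rotation invariant because $\sigma^2 I$ is isotropic. Hence, for any fixed $\hat c$ with $\Delta \ne 0$, I rotate coordinates so that $\Delta = \kappa e_d$ with $\kappa = \vvta{\Delta}_2 > 0$, without altering any quantity appearing in \eqref{eq:strong-cali-bounded}. In this frame Lemma \ref{lm:spo-delta} converts the constraint $\bar c^T(w^*(\hat c)-w^*(\bar c)) \ge \epsilon$ into $(y_- - y_+)\kappa \ge \epsilon$, where $y_\pm$ are the last coordinates of $w^*(\pm\Delta)$. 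To lower bound the objective I would then invoke Lemma \ref{lm:lp-cali}: because the integrand is nonnegative and $\bbP(c\vert x) \ge \alpha\,\mathcal{N}(\bar c,\sigma^2 I)$ on $\{\vvta{c}_2^2 \le 2D^2\}$, I restrict to that ball and dominate $\bbP$ from below by the isotropic Gaussian, paying only the factor $\alpha$; the marginal/conditional split required by Lemma \ref{lm:lp-cali} then holds for the Gaussian (with its internal constant set to $1$). This yields, for every $\tilde\kappa \in [0,\kappa]$ with $(\bar c',\bar\xi) = \bar c$,
\[
\bbE[(c+2\Delta)^T(w^*(c)-w^*(c+2\Delta))] \ge \frac{\alpha\tilde\kappa\kappa\, e^{-\frac{3\tilde\kappa^2 + 3\bar\xi^2 + \vvta{\bar c'}_2^2}{2\sigma^2}}}{4\sqrt{2\pi\sigma^2}} \cdot \frac{\gamma(\tfrac{d-1}{2},D^2)}{\Gamma(\tfrac{d-1}{2})} \cdot \Xi_S (y_- - y_+).
\]

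Finally I would optimize the free parameters. Using $3\bar\xi^2 + \vvta{\bar c'}_2^2 \le 3\vvta{\bar c}_2^2 \le 3\beta^2\sigma^2$, the constraint $\kappa(y_- - y_+) \ge \epsilon$, and the trivial bound $y_- - y_+ \le D_S$, the right-hand side is at least a fixed constant times $\frac{\tilde\kappa}{\sigma}\, e^{-3\tilde\kappa^2/(2\sigma^2)}\,\epsilon$. Choosing $\tilde\kappa = \min\{\kappa,\sigma\}$ gives $\frac{\tilde\kappa}{\sigma} e^{-3\tilde\kappa^2/(2\sigma^2)} \ge e^{-3/2}\min\{\kappa/\sigma,\,1\}$, and since $\kappa \ge \epsilon/D_S$ and $\sigma \le M$ we have $\kappa/\sigma \ge \epsilon/(D_S M)$, whence $\min\{\kappa/\sigma,\,1\}\,\epsilon \ge \min\{\epsilon^2/(D_S M),\,\epsilon\}$. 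Collecting the constants reproduces \eqref{eq:strong-cali-bounded} exactly, and taking the infimum over $x$ and $\bbP \in \mathcal{P}_{M,\alpha,\beta,D}$ only affects the already-bounded quantities $\sigma \le M$ and $\vvta{\bar c}_2 \le \beta\sigma$, so the stated lower bound holds; the case $D = \infty$ then follows from $\gamma(\tfrac{d-1}{2},\infty) = \Gamma(\tfrac{d-1}{2})$.

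Since the phase-transition decomposition over $\xi$ and the spherical integration over directions are already packaged inside Lemma \ref{lm:lp-cali}, the main obstacle in assembling the theorem is the ``glue'': verifying that every geometric constant is genuinely rotation invariant so the reduction to $\Delta = \kappa e_d$ is lossless, and carrying out the measure-domination step that converts a pointwise density lower bound into an expectation lower bound while correctly accounting for the $\alpha$ factor and the restriction to $\{\vvta{c}_2^2 \le 2D^2\}$. The remaining delicate point is the choice $\tilde\kappa = \min\{\kappa,\sigma\}$, which is precisely what makes the two regimes of $\epsilon$ collapse into the single clean factor $e^{-3(1+\beta^2)/2}\min\{\epsilon^2/(D_S M),\,\epsilon\}$.
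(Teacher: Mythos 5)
Your proposal follows essentially the same route as the paper's own proof: reduce to the conditional problem via the Fisher-consistency identity, rotate so that $\Delta = \kappa e_d$, convert the SPO constraint to $\kappa(y_- - y_+) \ge \epsilon$ via Lemma \ref{lm:spo-delta}, lower bound the excess SPO+ risk via Lemma \ref{lm:lp-cali}, and then choose $\tilde\kappa = \min\{\kappa,\sigma\}$ together with $\kappa/\sigma \ge \epsilon/(D_S M)$ to extract $\min\{\epsilon^2/(D_S M), \epsilon\}$. The argument is correct and the constants are assembled exactly as in the paper.
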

    \begin{proof}[Proof of Theorem \ref{thm:lp-unif-cali-gen}]
        Without loss of generality, we assume $d_S > 0$. 
        Otherwise, the constant $\Xi_S$ will be zero and \eqref{eq:strong-cali-bounded} will be a trivial bound. 
        Let $\kappa = \vvta{\Delta}_2$ and $A \in \bbR^{d \times d}$ be an orthogonal matrix such that $A^T \Delta = \kappa \cdot e_d$ for $e_d = (0, \dots, 0, 1)^T$. 
        We implement a change of basis and let the new basis be $A = (a_1, \dots, a_d)$. 
        With a slight abuse of notation, we keep the notation the same after the change of basis, for example, now the vector $\Delta$ equals to $\kappa \cdot e_d$. 
        Since the excess SPO risk of $\hat{c} = \bar{c} + \Delta$ is at least $\epsilon$, we have $\kappa (y_- - y_+) \ge \epsilon$. 
        Let $\tilde{\kappa} = \min \{\kappa, \sigma\}$. 
        Then it holds that $\tilde{\kappa} \exp (-\frac{3 \tilde{\kappa}^2}{2 \sigma^2}) \ge \min \{\kappa, \sigma\} \cdot \exp(- \frac32)$. 
        By Lemma \ref{lm:lp-cali}, we know that 
        \begin{align*}
            \rspo(\hat{c}) & = \bbE_c \lrr{(c + 2\Delta)^T (w^*(c) - w^*(c + 2 \Delta))} \\ 
            & \ge \frac{\tilde{\kappa} \kappa e^{-\frac{3 \tilde{\kappa}^2 + 3 \bar{\xi}^2 + \vvta{\bar{c}'}_2^2}{2 \sigma^2}} \gamma(\frac{d-1}{2}, D^2)}{4 \sqrt{2 \pi \sigma^2} \Gamma(\frac{d-1}{2})} \cdot \Xi_S (y_- - y_+), 
        \end{align*}
        where $\bar{c}'$ is the first $(d-1)$ elements of $\bar{c}$, $\bar{\xi}$ is the last element of $\bar{c}$, and $3 \bar{\xi}^2 + \vvta{\bar{c}'}_2^2 \le 3 \vvta{\bar{c}}_2^2 = 3 \alpha^2 \sigma^2$. 
        Then we can conclude that 
        \begin{equation*}
            \rspop(\hat{c}) - \rspop^* \ge \frac{\alpha \Xi_S \gamma(\frac{d-1}{2}, D^2)\cdot \epsilon}{4 \sqrt{2 \pi} e^{\frac{3 (1 + \beta^2)}{2}} \Gamma(\frac{d-1}{2})} \cdot \min \lrrr{\frac{\kappa}{\sigma}, 1}. 
        \end{equation*} 
        Furthermore, since $\frac{\kappa}{\sigma} \ge \frac{\kappa}{M} \ge \frac{\epsilon}{(y_- - y_+) M} \ge \frac{\epsilon}{D_S M}$, we have 
        \begin{equation*}
            \rspop(\hat{c}) - \rspop^* \ge \frac{\alpha \Xi_S \gamma(\frac{d-1}{2}, D^2)}{4 \sqrt{2 \pi} e^{\frac{3 (1 + \beta^2)}{2}} \Gamma(\frac{d-1}{2})} \cdot \min \lrrr{\frac{\epsilon^2}{D_S M}, \epsilon}. 
        \end{equation*}
    \end{proof}
        
    \subsection{Tightness of the lower bound in Theorem \ref{thm:lp-unif-cali-gen}.}
    Herein we provide an example to show the tightness of the lower bound in Theorem \ref{thm:lp-unif-cali-gen}. 
    \begin{example}\label{ex:lp-tightness}
        For any given $\epsilon > 0$, we consider the conditional distribution $\bbP(c \vert x) = \mathcal{N}(-\epsilon' \cdot e_d, \sigma^2 I_d)$ for some constants $\epsilon', \sigma > 0$ to be determined. 
        For some $a, b > 0$, let the feasible region $S$ be $S = \tn{conv}(\{w \in \bbR^d: \vvta{w_{1:(d-1)}}_2 = a, w_d = 0\} \cup \{\pm b \cdot e_d\})$. Although $S$ is not polyhedral, it can be considered as a limiting case of a polyhedron and the argument easily extends, with minor complications, to the case where the sphere is replaced by an $(d-1)$-gon for $d$ sufficiently large.
        Let $\hat{c} = \epsilon' \cdot e_d$, we have $\bbE \lrr{\lspo(\hat{c}, c) \vert x} - \bbE \lrr{\lspo(\bar{c}, c) \vert x} = 2b \epsilon'$. 
        Also, for the excess conditional SPO+ risk we have 
        \begin{align*}
            \bbE \lrr{\lspop(\hat{c}, c) \vert x} - \bbE \lrr{\lspop(\bar{c}, c) \vert x} & \to \int_{\bbR^{d-1}} \prod_{j=1}^{d-1} \frac{e^{-\frac{c_j^2}{2 \sigma^2}}}{\sqrt{2 \pi \sigma^2}} \cdot \frac{e^{-\frac{a^2 \sum_{j=1}^{d-1} c_j^2}{2 b^2 \sigma^2}}}{\sqrt{2 \pi \sigma^2}} \cdot \frac{\epsilon'^2 }{2} \cdot \dd c_1 \dots \dd c_{d-1} \\ 
            &= \frac{\epsilon'^2}{2 \sqrt{2 \pi \sigma^2}} \prod_{j=1}^{d-1} \int_{\bbR} \frac{e^{-\frac{c_j^2}{2 \sigma^2}} \cdot e^{-\frac{a^2 c_j^2}{2 b^2 \sigma^2}}}{\sqrt{2 \pi \sigma^2}} \dd c_j \\ 
            &= \frac{\epsilon'^2}{2 \sqrt{2 \pi \sigma^2}} \cdot \lr{\frac{b^2}{a^2 + b^2}}^{(d-1) / 2}, 
        \end{align*}
        when $\epsilon' \to 0$. 
        Therefore, let $\epsilon' = \frac{\epsilon}{2 b}$, we have $\bbE \lrr{\lspo(\hat{c}, c) \vert x} - \bbE \lrr{\lspo(\bar{c}, c) \vert x} = \epsilon$ and 
        \begin{align*}
            \bbE \lrr{\lspop(\hat{c}, c) \vert x} - \bbE \lrr{\lspop(\bar{c}, c) \vert x} & = \frac{\epsilon^2}{8 \sqrt{2 \pi \sigma^2} b^2} \cdot \lr{\frac{b^2}{a^2 + b^2}}^{(d-1) / 2} \\ 
            & \le \frac{1}{8 \sqrt{2 \pi}} \cdot \lr{\frac{D_S}{d_S}}^{1 - d} \cdot \frac{\epsilon^2}{\sigma}, 
        \end{align*}
        for some $b$ large enough, and therefore the lower bound in Theorem \ref{thm:lp-unif-cali-gen} is tight up to a constant.
    \end{example}

\section{Proofs and other technical details for Section \ref{sec:strong}}\label{sec:appendix-strong}

    \subsection{Extension of Theorem \ref{thm:uniform-cali-strong} and Lemma \ref{lm:spo-strong-lb-ub} and \ref{lm:opt-oracle-smooth}.}
    Let us first present a more general version of Assumption \ref{as:strongly-level-set} that allows the domain of the strongly convex function to be a subset of $\bbR^d$.
    In particular, we define the domain set $T \subseteq \bbR^d$ by $T := \{w \in \bbR^d: h_i^T w = s_i \ \forall i \in [m_1], t_j(w) < 0 \ \forall j \in [m_2]\}$, where $h_i \in \bbR^d$ and $s_i \in \bbR$ for $i \in [m_1]$, and $t_j(\cdot) : \bbR^d \to \bbR$ are convex functions for $j \in [m_2]$. Clearly, when $m_1 = m_2 = 0$, the set $T$ is the entire vector space $\bbR^d$. Also, let the closure of $T$ be $\bar{T} = \{w \in \bbR^d: h_i^T w = s_i \ \forall i \in [m_1], t_j(w) \leq 0 \ \forall j \in [m_2]\}$, and with a slight abuse of notation, let the (relative) boundary of $T$ be $\partial T := \bar{T} \backslash T$. 
    For any function defined on $T$, we consider the (relative) lower limit be $\lowlim_{w \to \partial T} = \inf_{\delta > 0} \sup_{w \in T: d(w, \partial T) \le \delta} f(w)$, where the distance function $d(\cdot, \cdot)$ is defined as $d(w, \partial T) = \min_{w' \in \partial T} \vvta{w - w'}_2$. 

    \begin{assumption}[Generalization of Assumption \ref{as:strongly-level-set-special}]\label{as:strongly-level-set}
        For a given norm $\|\cdot\|$, let $f: T \to \bbR$ be a $\mu$-strongly convex function on $T$ for some $\mu > 0$. Assume that 
        the feasible region $S$ is defined by $S = \{w \in T: f(w) \le r\}$ for some constant $r$ satisfying $ \lowlim_{w \to \partial T} f(w) > r > f_{\min} := \min_{w \in T} f(w)$. Additionally, assume that $f$ is $L$-smooth on $S$ for some $L \geq \mu$.
    \end{assumption}
    
    Let $H$ denote the linear subspace defined by the linear combination of all $h_j$, namely $H = \tn{span}(\{h_j\}_{j=1}^{m_2})$, and let $H^{\perp}$ denote its orthogonal complement, namely $H^{\perp} = \{w \in \bbR^d: h_j^T w = 0, \forall j \in [m_2]\}$. 
    Also, for any $c \in \bbR^d$, let $\tn{proj}_{H^{\perp}}(c)$ denote its projection onto $H^{\perp}$.  
    Theorem \ref{thm:uniform-cali-strong} provides an $O(\epsilon)$ lower bound of the calibration function of two different distribution classes, which include the multi-variate Gaussian, Laplace, and Cauchy distribution. 
    Theorem \ref{thm:uniform-cali-strong-special} is a special instance of Theorem \ref{thm:uniform-cali-strong} when $m_1 = m_2 = 0$ (and thus $H^\perp = \bbR^d$). 
    
    \begin{theorem}[Generalization of Theorem \ref{thm:uniform-cali-strong-special}]\label{thm:uniform-cali-strong}
        Suppose that Assumption \ref{as:strongly-level-set} holds with respect to the norm $\vvta{\cdot}_A$ for some positive definite matrix $A$. 
        Then, for any $\epsilon > 0$, it holds that $\hat{\delta}_{\lspop}(\epsilon; \mathcal{P}_{\beta, A}) \ge \frac{\mu^{9/2}}{4 (1 + \beta^2) L^{9/2}} \cdot \epsilon$ and $\hat{\delta}_{\lspop}(\epsilon; \mathcal{P}_{\alpha, \beta, A}) \ge \frac{\alpha \mu^{9/2}}{4 (1 + \beta^2) L^{9/2}} \cdot \epsilon$. 
    \end{theorem}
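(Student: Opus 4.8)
The plan is to lower bound the conditional calibration function $\hat{\delta}_{\lspop}(\epsilon; \bbP(\cdot\vert x))$ uniformly over $x$ and over the relevant distribution class, and then take the infimum. Fix $x$, write $\bar{c} = \bbE[c\vert x]$ and $\Delta = \hat{c} - \bar{c}$. By Theorem \ref{thm:spo-fisher}, $\bar{c}$ minimizes the conditional SPO+ risk, and a direct computation shows it also minimizes the conditional SPO risk, so the excess SPO+ risk equals $\bbE_c[(c+2\Delta)^T(w^\ast(c) - w^\ast(c+2\Delta))]$ while the excess SPO risk equals $\bar{c}^T(w^\ast(\bar{c}+\Delta) - w^\ast(\bar{c}))$. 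Two reductions simplify the geometry. First, since every feasible $w$ obeys $h_i^T w = s_i$, the objective $c^T w$ differs from $\tn{proj}_{H^\perp}(c)^T w$ by a constant, so $w^\ast(\cdot)$ depends on $c$ only through $\tn{proj}_{H^\perp}(c)$ and all cost vectors may be replaced by their projections onto $H^\perp$. Second, the change of variables induced by $A$ renders the dual of $\vvta{\cdot}_A$ the Euclidean norm on $H^\perp$ and turns conditional rotational symmetry in $\vvta{\cdot}_{A^{-1}}$ into standard rotational symmetry, so I may assume $\vvta{\cdot}_\ast = \vvta{\cdot}_2$ and that $c - \bar{c}$ is rotationally symmetric about the origin.

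Next I would convert both excess risks into ``angular'' quantities via the two structural lemmas. Applying the upper bounds of Lemmas \ref{lm:spo-strong-lb-ub} and \ref{lm:opt-oracle-smooth} with $c_1 = \bar{c}$, $c_2 = \bar{c}+\Delta$ gives
\begin{equation*}
\epsilon \;\le\; \bar{c}^T\big(w^\ast(\bar{c}+\Delta) - w^\ast(\bar{c})\big) \;\le\; \frac{L^2\sqrt{r - f_{\min}}}{\sqrt 2\,\mu^{5/2}}\,\vvta{\bar{c}}_\ast \left\Vert \frac{\bar{c}}{\vvta{\bar{c}}_\ast} - \frac{\bar{c}+\Delta}{\vvta{\bar{c}+\Delta}_\ast}\right\Vert_\ast^2 ,
\end{equation*}
which lower bounds the mean's weighted angular spread in terms of $\epsilon$. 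Symmetrically, the lower bounds of the same two lemmas with $c_1 = c+2\Delta$, $c_2 = c$ give, for each realization,
\begin{equation*}
(c+2\Delta)^T\big(w^\ast(c) - w^\ast(c+2\Delta)\big) \;\ge\; \frac{\mu^2\sqrt{r - f_{\min}}}{\sqrt 2\,L^{5/2}}\,\vvta{c+2\Delta}_\ast \left\Vert \frac{c}{\vvta{c}_\ast} - \frac{c+2\Delta}{\vvta{c+2\Delta}_\ast}\right\Vert_\ast^2 .
\end{equation*}
Taking expectations and dividing the two displays, the scale factor $\sqrt{r-f_{\min}}$ cancels and the surviving powers combine to $\mu^{9/2}/L^{9/2}$; it then suffices to prove the dimensionless inequality $\bbE_c[\vvta{c+2\Delta}_\ast\Vert c/\vvta{c}_\ast - (c+2\Delta)/\vvta{c+2\Delta}_\ast\Vert_\ast^2] \ge \tfrac{1}{4(1+\beta^2)}\vvta{\bar{c}}_\ast\Vert \bar{c}/\vvta{\bar{c}}_\ast - (\bar{c}+\Delta)/\vvta{\bar{c}+\Delta}_\ast\Vert_\ast^2$ for $\mathcal{P}_{\beta,A}$, and the same with an extra factor $\alpha$ for $\mathcal{P}_{\alpha,\beta,A}$.

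The crux is this purely geometric--probabilistic comparison, and it is where the distributional assumptions enter. I would begin from the exact identity $\Vert a/\vvta{a}_2 - b/\vvta{b}_2\Vert_2 = \vvta{\vvta{b}_2 a - \vvta{a}_2 b}_2/(\vvta{a}_2\vvta{b}_2)$, which yields the sharp lower bound $\Vert c/\vvta{c}_2 - (c+2\Delta)/\vvta{c+2\Delta}_2\Vert_2 \ge 2\vvta{\Delta_{\perp c}}_2/\vvta{c+2\Delta}_2$, where $\Delta_{\perp c}$ denotes the component of $\Delta$ orthogonal to $c$; the weighted angular term is thereby reduced to $4\vvta{\Delta_{\perp c}}_2^2/\vvta{c+2\Delta}_2$. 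Because the optimization oracle is invariant to positive scaling, only the orthogonal component of a perturbation produces a direction change, so the mean's angular spread is likewise governed by $\vvta{\Delta_{\perp\bar{c}}}_2^2/\vvta{\bar{c}}_2$, and the comparison reduces to showing that $\bbE_c[\vvta{\Delta_{\perp c}}_2^2/\vvta{c+2\Delta}_2]$ dominates $\vvta{\Delta_{\perp\bar{c}}}_2^2/\vvta{\bar{c}}_2$ up to the stated constant. Bounding the denominator by $\vvta{c+2\Delta}_2 \le \vvta{c}_2 + 2\vvta{\Delta}_2$ and averaging the orthogonal component over the rotationally symmetric law of $c$, the factor $(1+\beta^2)$ emerges from $\bbE_c[\vvta{c}_2^2] = \vvta{\bar{c}}_2^2 + \bbE_c[\vvta{c-\bar{c}}_2^2] \le (1+\beta^2)\vvta{\bar{c}}_2^2$ in the $\mathcal{P}_{\beta,A}$ case, whereas in the $\mathcal{P}_{\alpha,\beta,A}$ case the factor $\alpha$ arises by restricting the expectation to the event $\{\vvta{c-\bar{c}}_2 \le \beta\vvta{\bar{c}}_2\}$, on which $c$ stays in a controlled annulus around $\bar{c}$.

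I expect this averaging step to be the main obstacle. The SPO+ risk averages the directional displacement over the spread-out conditional law of $c$ --- where realizations with $\vvta{c}_2$ large, or with $c$ nearly antiparallel to $\Delta$, contribute little and where $\Delta_{\perp c}$ need not align with $\Delta_{\perp\bar{c}}$ --- whereas the SPO risk sees only the single mean direction. The argument must therefore quantify, via rotational symmetry together with the concentration encoded by $\beta$ (resp.\ $\alpha,\beta$), how much mass carries a perpendicular component comparable to $\Delta_{\perp\bar{c}}$, and must simultaneously reconcile the factor-of-two mismatch between the $2\Delta$ perturbation in the SPO+ loss and the $\Delta$ perturbation in the SPO loss. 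Once this comparison is established, the two lemma applications and the bookkeeping of the remaining constants are routine.
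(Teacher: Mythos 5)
Your first stage matches the paper's. The pointwise lower bound on $(c+2\Delta)^T(w^*(c)-w^*(c+2\Delta))$ obtained by chaining the lower bounds of Lemmas \ref{lm:spo-strong-lb-ub} and \ref{lm:opt-oracle-smooth} is exactly the paper's Theorem \ref{thm:lb-spo-strong-1}, and the lower bound on the mean's angular spread obtained from the upper bounds is exactly Lemma \ref{lm:spo-necessary}; the constants $\mu^2 r^{1/2}/(2^{1/2}L^{5/2})$ and $\mu^{5/2}/(2^{1/2}L^2 r^{1/2})$ and their combination into $\mu^{9/2}/L^{9/2}$ all check out. But the ``dimensionless inequality'' you reduce to is precisely where the paper invests essentially all of its remaining effort, and you have not proved it --- you explicitly defer it as ``the main obstacle.'' Worse, the specific route you sketch has a concrete failure mode: the reduction $\bigl\Vert c/\vvta{c}_2 - (c+2\Delta)/\vvta{c+2\Delta}_2\bigr\Vert_2 \ge 2\vvta{\Delta_{\perp c}}_2/\vvta{c+2\Delta}_2$, while valid, is far too lossy when $c+2\Delta$ is nearly antiparallel to $c$. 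Take $\Delta = -t\bar{c}+\delta$ with $t$ large and $\delta\perp\bar{c}$ small, and $c$ concentrated near $\bar{c}$: then the excess SPO risk is of order $\vvta{\bar{c}}_2$ (the direction of $\bar{c}+\Delta$ has flipped), but $\vvta{\Delta_{\perp c}}_2^2/\vvta{c+2\Delta}_2 \approx \vvta{\delta}_2^2/(2t\vvta{\bar{c}}_2) \to 0$. The true left-hand side survives in this regime only through the ``radial'' term $\vvta{c+2\Delta}_{A^{-1}} - c^TA^{-1}(c+2\Delta)/\vvta{c}_{A^{-1}}$, which your orthogonal-component bound discards. So the comparison cannot be closed along the path you describe.

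The paper closes this gap by a variational argument rather than a pointwise one. It treats the calibration function as the constrained minimization \eqref{eq:cali-new} of $\rspoplb(\Delta)=\bbE_c\bigl[\vvta{c+2\Delta}_{A^{-1}} - c^TA^{-1}(c+2\Delta)/\vvta{c}_{A^{-1}}\bigr]$ subject to $\rspoub(\Delta)\ge C_2\epsilon$, and uses the first-order conditions together with Lemma \ref{lm:direction} --- which shows that for rotationally symmetric laws $\bbE_c[(c+\Delta)/\vvta{c+\Delta}_{A^{-1}}] = \zeta(\vvta{\bar{c}+\Delta}_{A^{-1}})(\bar{c}+\Delta)$ with $s\mapsto s\,\zeta(s)$ non-decreasing --- to prove in Lemma \ref{lm:cali-opt-condition} that any worst-case $\Delta$ satisfies $\vvta{\bar{c}+2\Delta}_{A^{-1}}=\vvta{\bar{c}}_{A^{-1}}$. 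On that set Proposition \ref{prop:uniform-property-1} kills the term $\bbE_c[\vvta{c+2\Delta}_{A^{-1}}]-\bbE_c[\vvta{c}_{A^{-1}}]$, the objective collapses to $\bbE_c[\bar{c}^TA^{-1}c/\vvta{c}_{A^{-1}}]\cdot\Delta^TA^{-1}\Delta/\vvta{\bar{c}}_{A^{-1}}^2$, Proposition \ref{prop:uniform-property-2} lower bounds the expectation factor, and the constraint yields $\Delta^TA^{-1}\Delta\ge\vvta{\bar{c}}_{A^{-1}}C_2\epsilon$; the factors $1/(1+\beta^2)$ and $\alpha/(1+\beta^2)$ then come out of the definitions of $\mathcal{P}_{\beta,A}$ and $\mathcal{P}_{\alpha,\beta,A}$ as you anticipated. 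None of this machinery (the KKT reduction, the monotonicity of $s\,\zeta(s)$, the two propositions on rotationally symmetric laws) appears in your proposal, and it is not routine bookkeeping; you would need to supply it, or an equally effective substitute that handles the antipodal regime, to complete the proof.
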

    
    Our analysis for the calibration function relies on the following two lemmas, which utilize the property of the feasible region to strengthen the ``first-order optimality'' and provide a ``Lipschitz-like'' continuity of the optimization oracle. 
    We want to mention that some of the results in the following two lemmas generalize the results in \cite{balghiti2019generalization} to the cases where the feasible region $S$ is defined on a subspace of $\bbR^d$ rather than an open set in $\bbR^d$. 
    The following lemma provides both upper and lower bound of SPO-like loss.
    
    \begin{lemma}[Generalization of Lemma \ref{lm:spo-strong-lb-ub-special}]\label{lm:spo-strong-lb-ub}
        Suppose Assumption \ref{as:strongly-level-set} holds with respect to a generic norm $\|\cdot\|$.
        Then, for any $c_1, c_2 \in \bbR^d$, it holds that 
        \begin{equation}\label{eq:lb-spo-1}
            c_1^T (w - w^*(c_1)) \ge \frac{\mu}{2 \sqrt{2 L (r - f_{\min})}} \vvta{\tn{proj}_{H^{\perp}}(c_1)}_* \vvta{w-w^*(c_1)}^2, \quad \forall w \in S, 
        \end{equation}
        and 
        \begin{equation}\label{eq:ub-spo-1}
            c_1^T (w^*(c_2) - w^*(c_1)) \le \frac{L}{2\sqrt{2\mu (r - f_{\min})}} \vvta{\tn{proj}_{H^{\perp}}(c_1)}_* \vvta{w^*(c_1) - w^*(c_2)}^2. 
        \end{equation}
    \end{lemma}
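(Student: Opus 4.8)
The plan is to reduce both inequalities to bounding a single Lagrange multiplier attached to the active level-set constraint at $w^*(c_1)$, and then to control that multiplier through the smoothness and strong convexity of $f$. Write $w_1 := w^*(c_1)$ and first dispose of the degenerate case: if $\text{proj}_{H^{\perp}}(c_1) = 0$ then $c_1 \in H$, so $c_1^T(w - w_1) = 0$ for every $w \in S$ (because $w - w_1 \in H^{\perp}$) and both bounds hold trivially. Otherwise the linear objective is non-constant on the affine slice $\{w : h_i^T w = s_i\}$, so $w_1$ lies on the boundary with $f(w_1) = r$ (the constraints $t_j(w) < 0$ are strict on $S$ and hence inactive). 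Since $r > f_{\min}$ gives a nonempty interior, Slater's condition holds, so the KKT conditions produce $\lambda > 0$ and $\nu_i \in \bbR$ with $c_1 = -\lambda \nabla f(w_1) - \sum_i \nu_i h_i$. Projecting this identity onto $H^{\perp}$ kills the $h_i$ terms and yields $\text{proj}_{H^{\perp}}(c_1) = -\lambda\, \text{proj}_{H^{\perp}}(\nabla f(w_1))$, hence $\lambda = \vvta{\text{proj}_{H^{\perp}}(c_1)}_* / \vvta{\text{proj}_{H^{\perp}}(\nabla f(w_1))}_*$.

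The next step is to sandwich $\vvta{\text{proj}_{H^{\perp}}(\nabla f(w_1))}_*$ between $\sqrt{2\mu(r - f_{\min})}$ and $\sqrt{2L(r - f_{\min})}$. Let $w_f := \arg\min_{w \in T} f(w)$, which lies in the interior of $T$ by the condition $\lowlim_{w \to \partial T} f(w) > r > f_{\min}$; consequently $\text{proj}_{H^{\perp}}(\nabla f(w_f)) = 0$, equivalently $\nabla f(w_f) \in H$, and $f(w_f) = f_{\min}$. For the upper estimate I would use the co-coercivity form of $L$-smoothness, $f(w_1) \ge f(w_f) + \nabla f(w_f)^T(w_1 - w_f) + \tfrac{1}{2L}\vvta{\nabla f(w_1) - \nabla f(w_f)}_*^2$, where the linear term vanishes since $w_1 - w_f \in H^{\perp}$ while $\nabla f(w_f) \in H$; this gives $\vvta{\nabla f(w_1) - \nabla f(w_f)}_* \le \sqrt{2L(r - f_{\min})}$, from which the projected version follows. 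For the lower estimate I would use the Polyak--Łojasiewicz bound implied by $\mu$-strong convexity of $f$ on the affine slice, namely $r - f_{\min} \le \tfrac{1}{2\mu}\vvta{\text{proj}_{H^{\perp}}(\nabla f(w_1))}_*^2$. Together these yield $\vvta{\text{proj}_{H^{\perp}}(c_1)}_* / \sqrt{2L(r - f_{\min})} \le \lambda \le \vvta{\text{proj}_{H^{\perp}}(c_1)}_* / \sqrt{2\mu(r - f_{\min})}$.

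With the multiplier controlled, both inequalities drop out. For \eqref{eq:lb-spo-1}, take any $w \in S$; $\mu$-strong convexity and $f(w) \le r = f(w_1)$ give $\nabla f(w_1)^T(w - w_1) \le f(w) - f(w_1) - \tfrac{\mu}{2}\vvta{w - w_1}^2 \le -\tfrac{\mu}{2}\vvta{w - w_1}^2$. Since $c_1^T(w - w_1) = -\lambda\,\nabla f(w_1)^T(w - w_1)$ (the $h_i$ terms vanish because $w - w_1 \in H^{\perp}$), multiplying by the lower bound on $\lambda$ gives exactly \eqref{eq:lb-spo-1}. For \eqref{eq:ub-spo-1}, set $w_2 := w^*(c_2)$; when $c_2$ is non-degenerate so that $w_2$ sits on the boundary ($f(w_2) = r = f(w_1)$), the descent lemma gives $0 = f(w_2) - f(w_1) \le \nabla f(w_1)^T(w_2 - w_1) + \tfrac{L}{2}\vvta{w_2 - w_1}^2$, i.e. $-\nabla f(w_1)^T(w_2 - w_1) \le \tfrac{L}{2}\vvta{w_1 - w_2}^2$; multiplying $c_1^T(w_2 - w_1) = -\lambda\,\nabla f(w_1)^T(w_2 - w_1)$ by the upper bound on $\lambda$ yields \eqref{eq:ub-spo-1}. (Note this is where boundary attainment of $w^*(c_2)$ is genuinely needed: for an interior $w_2$ the upper bound can fail.)

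The hard part is the interaction between the projection $\text{proj}_{H^{\perp}}$ and the generic dual norm $\vvta{\cdot}_*$ in the constrained-domain case of Assumption \ref{as:strongly-level-set}. Specifically, the upper estimate on the projected gradient requires $\vvta{\text{proj}_{H^{\perp}}(\nabla f(w_1))}_* \le \vvta{\nabla f(w_1) - \nabla f(w_f)}_*$, i.e. non-expansiveness of the projection in $\vvta{\cdot}_*$; this is immediate for the Euclidean norm and, more generally, for the $A$-norm of Theorem \ref{thm:uniform-cali-strong} once $H^{\perp}$ is taken as the corresponding orthogonal complement, but needs care for an arbitrary norm. The supporting facts—that the restriction of $f$ to the affine slice inherits the same $\mu$ and $L$ with gradient $\text{proj}_{H^{\perp}}(\nabla f)$, and that $w_f$ is interior to $T$—follow directly from Assumption \ref{as:strongly-level-set}, and collapse the whole argument onto the clean case $m_1 = m_2 = 0$ (where $H^{\perp} = \bbR^d$ and the projections disappear), which is precisely Lemma \ref{lm:spo-strong-lb-ub-special}.
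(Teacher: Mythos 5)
Your overall architecture coincides with the paper's: identify the KKT multiplier at $w_1 = w^*(c_1)$ as $\lambda = \vvta{\tn{proj}_{H^{\perp}}(c_1)}_* / \vvta{\tn{proj}_{H^{\perp}}(\nabla f(w_1))}_*$, apply the strong-convexity and smoothness inequalities at $w_1$ (using $f(w_1) = r \ge f(w)$ for \eqref{eq:lb-spo-1} and $f(w_1) = r = f(w_2)$ for \eqref{eq:ub-spo-1}), and sandwich $\vvta{\tn{proj}_{H^{\perp}}(\nabla f(w_1))}_*$ between $\sqrt{2\mu(r - f_{\min})}$ and $\sqrt{2L(r - f_{\min})}$. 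The lower half of that sandwich (your PL inequality) is exactly the paper's first auxiliary lemma. The genuine gap is in the upper half, and it is twofold. First, the co-coercivity inequality $f(w_1) \ge f(w_f) + \nabla f(w_f)^T(w_1 - w_f) + \tfrac{1}{2L}\vvta{\nabla f(w_1) - \nabla f(w_f)}_*^2$ is not available under Assumption \ref{as:strongly-level-set}: $f$ is only assumed $L$-smooth on $S$, and the standard derivation of co-coercivity applies the descent lemma at a gradient-step point that need not lie in $S$ (or even in $T$). Second, as you yourself flag, passing from $\vvta{\nabla f(w_1) - \nabla f(w_f)}_*$ to $\vvta{\tn{proj}_{H^{\perp}}(\nabla f(w_1))}_*$ requires the projection onto $H^{\perp}$ to be non-expansive in the dual norm, which is false for a generic norm (and problematic even for the $A^{-1}$-norm unless the projection itself is redefined, which would change the statement being proved).

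The paper's auxiliary lemma avoids both problems by never leaving the projected gradient: it takes $u$ with $\vvta{u} = 1$ achieving $\tn{proj}_{H^{\perp}}(\nabla f(w_1))^T u = \vvta{\tn{proj}_{H^{\perp}}(\nabla f(w_1))}_*$, sets $v = \vvta{\tn{proj}_{H^{\perp}}(\nabla f(w_1))}_*\, u$, and performs a line search along $-\tilde v$: the largest $\alpha$ with $f(w_1 - \alpha \tilde v) \le r$ must satisfy $2\alpha - \alpha^2 L \le 0$ by the descent lemma applied inside $S$, hence $\alpha \ge 2/L$, so $w_1 - \tilde v / L \in S$ and one further application of the descent lemma yields $f_{\min} \le f(w_1) - \tfrac{1}{2L}\vvta{\tn{proj}_{H^{\perp}}(\nabla f(w_1))}_*^2$ directly. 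Substituting this argument for your co-coercivity step closes the gap; as written, your proof is only secure in the unconstrained Euclidean-type setting, i.e., essentially only for Lemma \ref{lm:spo-strong-lb-ub-special}. (The remaining loose end, that \eqref{eq:ub-spo-1} needs $f(w^*(c_2)) = r$ and hence $\tn{proj}_{H^{\perp}}(c_2) \neq 0$ or a boundary-selecting tie-break, is present in the paper's proof as well, so I do not count it against you.)
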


    The two constants are the same since Theorem 12 in \cite{journee2010generalized} showed that set $S$ is a $\frac{\mu}{\sqrt{2Lr}}$-strongly convex set. 
    The following lemma provides a lower bound in the difference between two optimal decision based on the difference between the two normalized cost vector. 
    
    \begin{lemma}[Generalization of Lemma \ref{lm:opt-oracle-smooth-special}]\label{lm:opt-oracle-smooth}Suppose that Assumption \ref{as:strongly-level-set-special} holds with respect to a generic norm $\|\cdot\|$.
        Let $c_1, c_2 \in \bbR^d$ be such that $\tn{proj}_{H^{\perp}}(c_1), \tn{proj}_{H^{\perp}}(c_2) \not= 0$, then it holds that 
        \begin{equation*}
            \vvta{w^*(c_1)-w^*(c_2)} \ge \frac{\sqrt{2\mu (r - f_{\min})}}{L} \cdot \vvt{\frac{\tn{proj}_{H^{\perp}}(c_1)}{\vvta{\tn{proj}_{H^{\perp}}(c_1)}_*} - \frac{\tn{proj}_{H^{\perp}}(c_2)}{\vvta{\tn{proj}_{H^{\perp}}(c_2)}_*}}_*, 
        \end{equation*}
        and 
        \begin{equation*}
            \vvta{w^*(c_1)-w^*(c_2)} \le \frac{\sqrt{2 L (r - f_{\min})}}{\mu} \cdot \vvt{\frac{\tn{proj}_{H^{\perp}}(c_1)}{\vvta{\tn{proj}_{H^{\perp}}(c_1)}_*} - \frac{\tn{proj}_{H^{\perp}}(c_2)}{\vvta{\tn{proj}_{H^{\perp}}(c_2)}_*}}_*. 
        \end{equation*}
    \end{lemma}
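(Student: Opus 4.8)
The plan is to derive both inequalities directly from Lemma~\ref{lm:spo-strong-lb-ub}, using its lower bound \eqref{eq:lb-spo-1} to obtain the upper bound on $\vvta{w^*(c_1)-w^*(c_2)}$ and its upper bound \eqref{eq:ub-spo-1} to obtain the lower bound. First I would record a reduction used throughout. Since $w^*(c_1),w^*(c_2)\in S\subseteq\bar T$, their difference satisfies $h_i^T(w^*(c_2)-w^*(c_1))=0$ and hence lies in $H^\perp$; moreover, for any $c$ the objective splits as $c^T w=\tn{proj}_{H^\perp}(c)^T w+\tn{proj}_{H}(c)^T w$ with the second term constant on the affine hull of $S$. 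Consequently $w^*(c)$ depends only on $p:=\tn{proj}_{H^\perp}(c)$ and is invariant to positive rescaling of $p$, so $w^*(c_i)=w^*(u_i)$ for the normalized vectors $u_i:=\tn{proj}_{H^\perp}(c_i)/\vvta{\tn{proj}_{H^\perp}(c_i)}_*$ appearing in the statement. I write $w_i:=w^*(c_i)$ and use $\tn{proj}_{H^\perp}(c_i)\neq 0$ so that $u_i$ is well defined; the specialization $m_1=m_2=0$ recovers Lemma~\ref{lm:opt-oracle-smooth-special}.

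For the upper bound I would apply \eqref{eq:lb-spo-1} twice: once with $c_1$ and $w=w_2$, and once with $c_2$ and $w=w_1$. Using $c_i^T(w_j-w_i)=\tn{proj}_{H^\perp}(c_i)^T(w_j-w_i)$ (as $w_j-w_i\in H^\perp$), dividing the two inequalities by $\vvta{\tn{proj}_{H^\perp}(c_1)}_*$ and $\vvta{\tn{proj}_{H^\perp}(c_2)}_*$ respectively, and adding, the left-hand sides combine into $(u_1-u_2)^T(w_2-w_1)$ while the right-hand sides sum to $\frac{\mu}{\sqrt{2L(r-f_{\min})}}\vvta{w_1-w_2}^2$. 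Bounding the left-hand side above by Hölder's inequality, $(u_1-u_2)^T(w_2-w_1)\le\vvta{u_1-u_2}_*\vvta{w_1-w_2}$, and cancelling one factor $\vvta{w_1-w_2}$ yields $\vvta{w_1-w_2}\le\frac{\sqrt{2L(r-f_{\min})}}{\mu}\vvta{u_1-u_2}_*$, the claimed upper bound (matching the strongly convex set bound of \cite{balghiti2019generalization}).

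The lower bound is the crux and is where I expect the main difficulty. Applying \eqref{eq:ub-spo-1} symmetrically and normalizing as above produces only $(u_1-u_2)^T(w_2-w_1)\le\frac{L}{\sqrt{2\mu(r-f_{\min})}}\vvta{w_1-w_2}^2$, i.e.\ an \emph{upper} bound on the same inner product; turning this into a lower bound on $\vvta{w_1-w_2}$ requires a matching \emph{lower} bound on $(u_1-u_2)^T(w_2-w_1)$ of order $\vvta{u_1-u_2}_*\vvta{w_1-w_2}$, whereas plain Hölder only bounds the inner product from above. To supply this I would invoke the first-order optimality conditions of $\min_{w\in S}\tn{proj}_{H^\perp}(c_i)^T w$: because $r$ lies strictly between $f_{\min}$ and the limiting value of $f$ on $\partial T$, the minimizer lies on $\{f=r\}$, and KKT stationarity forces $\tn{proj}_{H^\perp}(\nabla f(w_i))$ to be a negative multiple of $\tn{proj}_{H^\perp}(c_i)$, so that $u_i=-\tn{proj}_{H^\perp}(\nabla f(w_i))/\vvta{\tn{proj}_{H^\perp}(\nabla f(w_i))}_*$. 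I would then bound $\vvta{\tn{proj}_{H^\perp}(\nabla f(w_i))}_*$ from both sides using $\mu$-strong convexity, $L$-smoothness, and the identity $f(w_i)-f_{\min}=r-f_{\min}$, which pins $\vvta{w_i-w^{\min}}$ (for $w^{\min}:=\arg\min_{w\in T}f(w)$) into $[\sqrt{2(r-f_{\min})/L},\sqrt{2(r-f_{\min})/\mu}]$, and combine these gradient estimates with \eqref{eq:ub-spo-1} to extract $\vvta{u_1-u_2}_*$ with the sharp constant $\frac{\sqrt{2\mu(r-f_{\min})}}{L}$.

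The main obstacle, and the reason the naive route fails, is that $w^*(\cdot)$ is scale-invariant in $\tn{proj}_{H^\perp}(c)$, so the passage between the cost vectors and their normalizations $u_i$ must be carried out without losing constants: a crude estimate such as $\vvta{a/\vvta{a}_*-b/\vvta{b}_*}_*\le 2\vvta{a-b}_*/\min\{\vvta{a}_*,\vvta{b}_*\}$ degrades the constant by a factor of order $\sqrt{L/\mu}$, so I must exploit the exact alignment encoded in \eqref{eq:ub-spo-1} rather than bounding the gradient normalization directly. Once a sharp lower bound on $(u_1-u_2)^T(w_2-w_1)$ is established, dividing through by $\vvta{w_1-w_2}$ delivers the second inequality, completing the proof.
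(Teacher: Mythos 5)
Your preliminary reduction and your proof of the \emph{upper} bound are correct: applying \eqref{eq:lb-spo-1} at $(c_1,w_2)$ and $(c_2,w_1)$, normalizing, summing, and using H\"older on $(u_1-u_2)^T(w_2-w_1)$ gives exactly $\vvta{w_1-w_2}\le \frac{\sqrt{2L(r-f_{\min})}}{\mu}\vvta{u_1-u_2}_*$. The paper does not even display an argument for this direction, so that half is fine and self-contained.

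The gap is in the lower bound, precisely where you flag ``the crux.'' Your closing step --- ``combine these gradient estimates with \eqref{eq:ub-spo-1} to extract $\vvta{u_1-u_2}_*$'' --- cannot work as described: \eqref{eq:ub-spo-1} (and its symmetrized sum) only controls the scalar $(u_1-u_2)^T(w_2-w_1)$, i.e.\ the pairing of $u_1-u_2$ with the \emph{single} direction $w_2-w_1$, whereas to conclude $\vvta{w_1-w_2}\ge \frac{\sqrt{2\mu(r-f_{\min})}}{L}\vvta{u_1-u_2}_*$ you need an \emph{upper} bound on the dual norm $\vvta{u_1-u_2}_*=\sup_{\vvta{z}\le 1}(u_1-u_2)^Tz$ over all directions $z$. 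No amount of alignment information between $u_1-u_2$ and $w_2-w_1$ supplies that. The paper's proof closes differently, with two ingredients absent from your plan: (i) the Lipschitz-gradient consequence of $L$-smoothness, $\vvta{\nabla f(w_1)-\nabla f(w_2)}_*\le L\vvta{w_1-w_2}$, which is the step that actually introduces $\vvta{w_1-w_2}$ on the correct side; and (ii) the sharp normalization inequality $\vvta{v_1 a - v_2 b}_*\ge K\vvta{a-b}_*$ for unit vectors $a,b$ and scalars $v_1,v_2\ge K$, applied with $v_i=\vvta{\tilde{\nabla} f(w_i)}_*\ge\sqrt{2\mu(r-f_{\min})}$ from the KKT identity. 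That is, the paper \emph{does} ``bound the gradient normalization directly'' --- it just does so with the exact inequality (ii) rather than the crude two-sided estimate, thereby avoiding any constant loss (incidentally, the crude estimate you quote loses only a factor of $2$, not a factor of order $\sqrt{L/\mu}$). Your detour through $\vvta{w_i-w^{\min}}$ is also unnecessary; the needed bounds on $\vvta{\tilde{\nabla} f(w_i)}_*$ follow directly from the strong-convexity and smoothness inequalities at the level set $\{f=r\}$. As written, the lower-bound argument does not close.
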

    
    \subsection{Proofs and useful lemmas}
    From now on, for any vector $c \in \bbR^d$, we will use $\tilde{c}$ to represent the projection $\tn{proj}_{H^\perp}(c)$ for simplicity. Likewise, when $c = \nabla f(w)$ we shorten this notation even further to $\tilde{\nabla} f(w)$.
    
    First we provide some useful properties in the following lemma. 
    \begin{lemma}
        If $f(\cdot)$ is $\mu$-strongly convex on $S$, the for all $w \in S$, it holds that 
        \begin{equation*}
            \vvta{\tilde{\nabla} f(w)}_*^2 \ge \sqrt{2 \mu (f(w) - f_{\min})}. 
        \end{equation*}
    \end{lemma}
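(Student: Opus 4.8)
The inequality is precisely the gradient-domination (Polyak–Łojasiewicz) inequality for a strongly convex function, specialized to the affine subspace cut out by the equality constraints defining $T$; the plan is to derive it directly from the definition of strong convexity by comparing an arbitrary $w \in S$ against the global minimizer of $f$ over $T$. (As written, the displayed bound should read $\vvta{\tilde{\nabla} f(w)}_* \ge \sqrt{2\mu(f(w) - f_{\min})}$, equivalently $\vvta{\tilde{\nabla} f(w)}_*^2 \ge 2\mu(f(w) - f_{\min})$, which is the form the argument produces.) When $T = \bbR^d$ this is the classical estimate, and the only new ingredient is the bookkeeping for the subspace $H^\perp$.

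First I would set $w^\ast := \arg\min_{w' \in T} f(w')$, which exists and is unique by $\mu$-strong convexity of $f$ on $T$ and satisfies $f(w^\ast) = f_{\min}$. I would then observe that $w^\ast$ lies strictly inside $T$ with respect to the convex inequality constraints: since $f_{\min} < r < \lowlim_{w \to \partial T} f(w)$, the sublevel set $\{w \in \bar{T} : f(w) \le r\}$ cannot reach $\partial T$, so in particular $w^\ast \in S \subseteq T$. Because both $w$ and $w^\ast$ satisfy the equality constraints of $T$, their difference is a feasible direction, $w^\ast - w \in H^\perp$.

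Next I would invoke $\mu$-strong convexity at the pair $(w, w^\ast)$,
\[
    f_{\min} = f(w^\ast) \ge f(w) + \nabla f(w)^T(w^\ast - w) + \frac{\mu}{2}\vvta{w^\ast - w}^2 .
\]
Since $w^\ast - w \in H^\perp$ and $\text{proj}_{H^\perp}$ is the orthogonal projection, the component of $\nabla f(w)$ lying in the span of the equality normals is annihilated, giving $\nabla f(w)^T(w^\ast - w) = \tilde{\nabla} f(w)^T(w^\ast - w)$. Bounding this term below by the generalized Cauchy–Schwarz inequality, $\tilde{\nabla} f(w)^T(w^\ast - w) \ge -\vvta{\tilde{\nabla} f(w)}_* \vvta{w^\ast - w}$, and abbreviating $t := \vvta{w^\ast - w} \ge 0$, I obtain
\[
    f(w) - f_{\min} \le \vvta{\tilde{\nabla} f(w)}_*\, t - \frac{\mu}{2} t^2 \le \max_{t \ge 0}\lrrr{\vvta{\tilde{\nabla} f(w)}_*\, t - \frac{\mu}{2} t^2} = \frac{\vvta{\tilde{\nabla} f(w)}_*^2}{2\mu},
\]
where the maximum is attained at $t = \vvta{\tilde{\nabla} f(w)}_*/\mu$. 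Rearranging delivers the claim.

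The argument is short, and the one step requiring care—the main obstacle—is the passage from the full gradient $\nabla f(w)$ to its projection $\tilde{\nabla} f(w)$. This hinges on two facts: that the comparison point $w^\ast$ is feasible for $T$, so that the displacement $w^\ast - w$ stays within $H^\perp$, and that the $H$-component of $\nabla f(w)$ is therefore orthogonal to it. Guaranteeing that $w^\ast$ is attained away from $\partial T$ (so that only the equality constraints, and not the inequality constraints, restrict the admissible directions) is exactly what the condition $r < \lowlim_{w \to \partial T} f(w)$ in Assumption \ref{as:strongly-level-set} provides; the remaining one-dimensional quadratic maximization is routine. Note also that smoothness of $f$ is not needed here—only strong convexity—consistent with the companion upper bound $\vvta{\tilde{\nabla} f(w)}_* \le \sqrt{2 L (r - f_{\min})}$ on $S$ that follows from $L$-smoothness and is used elsewhere.
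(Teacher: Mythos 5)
Your proof is correct and follows essentially the same route as the paper: both use the $\mu$-strong convexity inequality, pass from $\nabla f(w)$ to $\tilde{\nabla} f(w)$ because the displacement lies in $H^\perp$, and then minimize the resulting quadratic (the paper relaxes $\inf_{w' \in S}$ to $\inf_{w' \in \bbR^d}$, while you anchor at the minimizer $w^\ast$ and use generalized Cauchy--Schwarz plus a one-dimensional maximization, which is the same computation). You are also right that the displayed statement has a typo and the argument yields $\vvta{\tilde{\nabla} f(w)}_* \ge \sqrt{2\mu(f(w)-f_{\min})}$, i.e., the left-hand side should not be squared.
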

    \begin{proof}
        First, for all $c \in \bbR^d$ and $w, w' \in S$, it holds that 
        \begin{equation*}
            c^T (w - w') - \tilde{c}^T (w - w') = (c - \tilde{c})^T (w - w') = \sum_{j=1}^{m_2} \alpha_j h_j (w - w') = 0. 
        \end{equation*}
        Since $f(\cdot)$ is $\mu$-strongly convex, it holds that $f(w') \ge f(x) + \nabla f(w)^T (w' - w) + \frac{\mu}{2} \vvta{w' - w}^2$ for all $w' \in S$. 
        Therefore, it holds that 
        \begin{align*}
            \inf_{w' \in S} f(w') & \ge \inf_{w' \in S} \lrrr{f(w) + \nabla f(w)^T (w' - w) + \frac{\mu}{2} \vvta{w' - w}^2} \\ 
            & = \inf_{w' \in S} \lrrr{f(w) + \tilde{\nabla} f(w)^T (w' - w) + \frac{\mu}{2} \vvta{w' - w}^2} \\ 
            & \ge \inf_{w' \in \bbR^d} \lrrr{f(w) + \tilde{\nabla} f(w)^T (w' - w) + \frac{\mu}{2} \vvta{w' - w}^2} \\ 
            & = f(w) - \frac{1}{2 \mu} \vvta{\tilde{\nabla} f(w)}_*^2. 
        \end{align*}
    \end{proof}
    
    \begin{lemma}
        If $f(\cdot)$ is $L$-smooth on $S$, then for all $w \in S$, it holds that 
        \begin{equation*}
            \vvta{\tilde{\nabla} f(w)}_*^2 \le \sqrt{2 L (f(w) - f_{\min})}. 
        \end{equation*}
    \end{lemma}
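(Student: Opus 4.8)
The plan is to establish the gradient-domination inequality $\vvta{\tilde\nabla f(w)}_*^2 \le 2L(f(w) - f_{\min})$ (the displayed $\sqrt{\cdot}$ should be read as this quadratic bound, mirroring the $\ge 2\mu(f(w)-f_{\min})$ conclusion of the preceding lemma). The natural route is symmetric to the strongly convex lemma: $L$-smoothness gives the \emph{upper} quadratic model $f(w') \le f(w) + \tilde\nabla f(w)^T(w' - w) + \frac{L}{2}\vvta{w'-w}^2$ for all $w' \in S$, valid because the segment $[w,w']$ lies in the convex set $S$ and because $\nabla f(w)^T(w'-w) = \tilde\nabla f(w)^T(w'-w)$ whenever $w'-w \in H^\perp$. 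Evaluating this model at its minimizing point and using $f_{\min} = \min_{w'\in T} f(w')$ would then finish the argument. The crucial difference from the strongly convex case is that the inequality now points the ``wrong way'' for the set-relaxation trick: one has $f_{\min} \le \min_{w'\in S}[\tn{model}]$, but enlarging the domain from $S$ to $\bbR^d$ \emph{decreases} the right-hand side, so we cannot freely relax to $\bbR^d$. Instead I must verify that the minimizer of the quadratic model actually lies in $S$.

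Concretely, I would take the steepest-descent direction $u \in H^\perp$ with $\vvta{u} = 1$ and $\tilde\nabla f(w)^T u = \vvta{\tilde\nabla f(w)}_*$ (confining $u$ to $H^\perp$ keeps every iterate in the affine hull of $T$), and examine the ray $w_\tau := w - \tau \vvta{\tilde\nabla f(w)}_* \, u$ for $\tau \in [0,1/L]$, whose endpoint $w_{1/L}$ is the unconstrained minimizer of the model over $w + H^\perp$. Wherever the segment up to $w_\tau$ stays inside $S$, the quadratic model applies and yields $f(w_\tau) \le f(w) - \lr{\tau - \frac{L\tau^2}{2}} \vvta{\tilde\nabla f(w)}_*^2$; since $\tau - \frac{L\tau^2}{2} \ge 0$ on $[0,1/L] \subseteq [0,2/L]$, this gives $f(w_\tau) \le f(w) \le r$ throughout.

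The main obstacle, and the step I would treat most carefully, is turning this into a proof that $w_{1/L} \in S$. I would argue by continuity: if the ray left $S$ before $\tau = 1/L$, let $\tau^\ast$ be the first exit time, so that $w_{\tau^\ast} \in \partial S$ and, since $w_{\tau^\ast} \in T$, necessarily $f(w_{\tau^\ast}) = r$. But the model bound holds on $[0,\tau^\ast]$ by continuity and gives $f(w_{\tau^\ast}) \le f(w) - \lr{\tau^\ast - \frac{L(\tau^\ast)^2}{2}}\vvta{\tilde\nabla f(w)}_*^2 < r$ whenever $\tilde\nabla f(w) \ne 0$ and $\tau^\ast \in (0,1/L]$, a contradiction (the case $\tilde\nabla f(w)=0$ being trivial). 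Hence $w_{1/L} \in S$, the model applies there, and $f_{\min} \le f(w_{1/L}) \le f(w) - \frac{1}{2L}\vvta{\tilde\nabla f(w)}_*^2$, which rearranges to the claim. The one remaining point to check is that $u$ is taken inside $H^\perp$, so the dual norm that naturally appears is the one restricted to $H^\perp$; this coincides with $\vvta{\cdot}_*$ in the special case $T = \bbR^d$ of Assumption \ref{as:strongly-level-set-special}, and for the general version I would either recover the full-space norm or state the bound with the restricted dual norm.
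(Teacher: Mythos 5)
Your proposal is correct and takes essentially the same approach as the paper: both select the dual-norm steepest-descent direction, apply the $L$-smoothness upper model along the ray, and justify that a step of length $1/L$ stays in $S$ so that $f_{\min} \le f(w - \tfrac{1}{L}\tilde{v}) \le f(w) - \tfrac{1}{2L}\vvta{\tilde{\nabla} f(w)}_*^2$ --- the paper shows the exit time $\alpha$ satisfies $2\alpha - \alpha^2 L \le 0$ and hence $\alpha \ge 2/L$, while you rule out an exit before $1/L$ by a first-exit-time contradiction, which is the same computation in a different order. You also correctly identify the typo in the statement (the conclusion actually proved is $\vvta{\tilde{\nabla} f(w)}_* \le \sqrt{2L(f(w)-f_{\min})}$, without the square on the left) and candidly flag the $H^{\perp}$/dual-norm subtlety, which is immaterial in the special case $T = \bbR^d$.
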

    \begin{proof}
        If $\tilde{\nabla} f(w) = 0$, then the statement holds. 
        Otherwise, there exists $u \in \bbR^d$ such that $\vvta{u} = 1$ and $\tilde{\nabla} f(w)^T u = \vvta{\tilde{\nabla} f(w)}_*$. 
        Let $v = \vvta{\tilde{\nabla} f(w)}_* u$, we have $\vvta{v} = \vvta{\tilde{\nabla} f(w)}_*$ and $\tilde{\nabla} f(w)^T v = \vvta{\tilde{\nabla} f(w)}_*^2$. 
        Let 
        \begin{equation*}
            \alpha = \sup \alpha', \quad \tn{s.t. } f(w - \alpha' \tilde{v}) \le r. 
        \end{equation*}
        Since $g_i(\cdot)$ is continuous and $g_i(w) < 0$ for all $i \in [m_1]$, we have $\alpha > 0$ and since $f(\cdot)$ is continuous, we have $f(w - \alpha \tilde{v}) = r$. 
        Since $f(\cdot)$ is $L$-smooth on $S$, it holds that 
        \begin{align*}
            f(w - \alpha \tilde{v}) & \le f(w) - \alpha \nabla f(w)^T \tilde{v} + \frac{\alpha^2 L}{2} \vvta{\tilde{v}}^2 = f(w) - \alpha \tilde{\nabla} f(w)^T \tilde{v} + \frac{\alpha^2 L}{2} \vvta{\tilde{v}}^2\\ 
            & = f(w) - \alpha \tilde{\nabla} f(w)^T v + \frac{\alpha^2 L}{2} \vvta{\tilde{v}}^2 \le f(w) - \alpha \tilde{\nabla} f(w)^T v + \frac{\alpha^2 L}{2} \vvta{v}^2\\ 
            &= f(w) - \frac{2 \alpha - \alpha^2 L}{2} \vvta{\tilde{\nabla} f(w)}_*^2. 
        \end{align*}
        Therefore, we have $2 \alpha - \alpha^2 L \le 0$. 
        Moreover, since $\alpha > 0$, then it holds that $\alpha \ge \frac{2}{L}$. 
        Now we know that $w - \frac{\tilde{v}}{L} \in S$, and 
        \begin{equation*}
            f_{\min} \le f \lr{w - \frac{\tilde{v}}{L}} \le f(w) - \frac{1}{2 L} \vvta{\tilde{\nabla} f(w)}_*^2. 
        \end{equation*}
        Therefore, it holds that $\vvta{\tilde{\nabla} f(w)}_* \le \sqrt{2 L (f(w) - f_{\min})}$. 
    \end{proof}
    
    Now we provide the proofs of Lemma \ref{lm:spo-strong-lb-ub} and \ref{lm:opt-oracle-smooth}. 

    \begin{proof}[Proof of Lemma \ref{lm:spo-strong-lb-ub}]
        Let $w_1 = w^*(c_1)$ and $w_2 = w^*(c_2)$. 
        Since $f(\cdot)$ is $\mu$-strongly convex on $S$, it holds that 
        \begin{equation*}
            f(w) - f(w_1) - \nabla f(w_1)^T (w - w_1) \ge \frac{\mu}{2} \vvta{w - w_1}^2. 
        \end{equation*}
        Since the Slater condition holds, the KKT necessary condition indicates that there exists scalar $u \ge 0$ such that $\tilde{c}_1 + u \tilde{\nabla} f(w_1) = 0$ and $u (f(w_1) - r) = 0$. 
        When $\tilde{c}_1 \not= 0$, we additionally have $f(w_1) = r$. 
        Therefore, it holds that 
        \begin{align*}
            c_1^T (w - w_1) = \tilde{c}_1^T (w - w_1) & = u \cdot \lr{ - \tilde{\nabla} f(w_1)^T (w - w_1)} = u \cdot \lr{ - \nabla f(w_1)^T (w - w_1)} \\ 
            & \ge u \cdot \lr{f(w_1) - f(w) + \frac{\mu}{2} \vvta{w - w_1}^2} \ge \frac{u \mu}{2} \vvta{w - w_1}^2, 
        \end{align*}
        where the last inequality holds since $f(w_1) = r \ge f(w)$. 
        Therefore, it holds that 
        \begin{equation}\label{eq:1}
            c_1^T (w - w_1) \ge \frac{\mu \vvta{\tilde{c_1}}_* \vvta{w - w_1}^2}{2 \vvta{\tilde{\nabla} f(w_1)}_*}. 
        \end{equation}
        Since $f(\cdot)$ is $L$-smooth on $S$, it holds that $\vvta{\tilde{\nabla} f(w_1)}_* \le \sqrt{2 L (r - f_{\min})}$, and hence we have 
        \begin{equation*}
            \frac{\vvta{\tilde{c}_1}_*}{\vvta{\tilde{\nabla} f(w_1)}_*} \ge \frac{\vvta{\tilde{c}_1}_*}{\sqrt{2 L (r - f_{\min})}}. 
        \end{equation*}
        By applying the above inequality to \eqref{eq:1}, we can conclude that 
        \begin{equation*}
            c^T (w - w_1) \ge \frac{\mu}{2 \sqrt{2 L (r - f_{\min})}} \vvta{\tilde{c}_1}_* \vvta{w - w_1}^2. 
        \end{equation*}
        On the other hand, it holds that 
        \begin{align*}
            c_1^T (w_2 - w_1) = \tilde{c}_1^T (w_2 - w_1) & = u \cdot \lr{ - \tilde{\nabla} f(w_1)^T (w_2 - w_1)} = u \cdot \lr{ - \nabla f(w_1)^T (w_2 - w_1)} \\ 
            & \le u \cdot \lr{f(w_1) - f(w_2) + \frac{L}{2} \vvta{w_2 - w_1}^2} = \frac{u L}{2} \vvta{w - w_1}^2, 
        \end{align*}
        where the last inequality holds since $f(w_1) = r = f(w_2)$. 
        Therefore, it holds that 
        \begin{equation}\label{eq:2}
            c_1^T (w_2 - w_1) \le \frac{L \vvta{\tilde{c_1}}_* \vvta{w_2 - w_1}^2}{2 \vvta{\tilde{\nabla} f(w_1)}_*}. 
        \end{equation}
        Since $f(\cdot)$ is $\mu$-strongly convex on $S$, it holds that $\vvta{\tilde{\nabla} f(w_1)}_* \ge \sqrt{2 \mu (r - f_{\min})}$, and hence we have 
        \begin{equation*}
            \frac{\vvta{\tilde{c}_1}_*}{\vvta{\tilde{\nabla} f(w_1)}_*} \le \frac{\vvta{\tilde{c}_1}_*}{\sqrt{2 \mu (r - f_{\min})}}. 
        \end{equation*}
        By applying the above inequality to \eqref{eq:2}, we can conclude that 
        \begin{equation*}
            c^T (w - w_1) \le \frac{L}{2 \sqrt{2 \mu (r - f_{\min})}} \vvta{\tilde{c}_1}_* \vvta{w_2 - w_1}^2. 
        \end{equation*}
    \end{proof}

    \begin{proof}[Proof of Lemma \ref{lm:opt-oracle-smooth}]
        Without loss of generality we assume $\vvta{\tilde{c}_1}_* = \vvta{\tilde{c}_2}_* = 1$. 
        Let $w_1 = w^*(c_1)$ and $w_2 = w^*(c_2)$. 
        By KKT condition there exists $u_1, u_2 > 0$ such that $\nabla f(w_i) = -u_i c_i$ and $f(w_i) = r$ for $i = 1, 2$. 
        Also, since $f(\cdot)$ is $\mu$-strongly convex, it holds that 
        \begin{equation*}
            \vvta{\tilde{\nabla} f(w_i)}_* \ge \sqrt{2 \mu (f(x_i) - f_{\min})} = \sqrt{2 \mu (r - f_{\min})}, 
        \end{equation*}
        for $i = 1, 2$. 
        Then, it holds that 
        \begin{equation*}
            \vvta{\tilde{\nabla} f(w_1) - \tilde{\nabla} f(w_2)}_* \ge \min_{u_1', u_2' \ge \sqrt{2\mu (r - f_{\min})}} \vvta{u_1' \tilde{c}_1 - u_2' \tilde{c}_2}_* = \sqrt{2 \mu (r - f_{\min})} \cdot \vvta{\tilde{c}_1 - \tilde{c}_2}_*. 
        \end{equation*}
        Moreover, since $f(\cdot)$ is $L$-smooth, it holds that 
        \begin{equation*}
            \vvta{w_1-w_2} \ge \frac1L \cdot \vvta{\nabla f(w_1) - \nabla f(w_2)}_* \ge \frac1L \cdot \vvta{\tilde{\nabla} f(w_1) - \tilde{\nabla} f(w_2)}_* \ge \frac{\sqrt{2 \mu r}}{L} \cdot \vvta{\tilde{c}_1 - \tilde{c}_2}_*. 
        \end{equation*}
    \end{proof}
    
    In the rest part of this section, without loss of generality we assume $f_{\min} = 0$. 
    Also, since $w*(c) = w^*(\tilde{c})$ and $c^T (w^*(c') - w^*(c)) = \tilde{c}^T (w^*(c') - w^*(c))$ for all $c, c' \in \bbR^d$, we will ignore the $\tilde{}$ notation and assume all $c, c' \in H^{\perp}$. 
    In Theorem \ref{thm:lb-spo-strong-1} we provide a lower bound of an SPO-like loss. 
    
    \begin{theorem}\label{thm:lb-spo-strong-1}
        When $c \not= 0$ and $c+2\Delta \not= 0$, it holds that 
        \begin{equation*}
            (c+2\Delta)^T (w^*(c)-w^*(c+2\Delta)) \ge \frac{\mu^2 r^{1/2}}{2^{1/2} L^{5/2}} \cdot \vvta{c+2\Delta}_* \cdot \vvt{\frac{c}{\vvta{c}_*} - \frac{c+2\Delta}{\vvta{c+2\Delta}_*}}_*^2. 
        \end{equation*}
        When the norm we consider is $A$-norm defined by $\vvta{x}_A = \sqrt{x^T A x}$ for some positive definite matrix $A$, additionally we have 
        \begin{equation*}
            (c+2\Delta)^T (w^*(c)-w^*(c+2\Delta)) \ge \frac{\mu^2 r^{1/2}}{2^{1/2} L^{5/2}} \lr{\vvta{c+2\Delta}_{A^{-1}} - \frac{c^T A^{-1} (c+2\Delta)}{\vvta{c}_{A^{-1}}}}. 
        \end{equation*}
        Moreover, if $\bbP(c=0) = \bbP(c=-2\Delta) = 0$, it holds that 
        \begin{equation*}
            \lspop(\Delta) \ge \frac{\mu^2 r^{1/2}}{2^{1/2} L^{5/2}} \cdot \bbE_c \lrr{\vvta{c+2\Delta}_{A^{-1}} - \frac{c^T A^{-1} (c+2\Delta)}{\vvta{c}_{A^{-1}}}}. 
        \end{equation*}
    \end{theorem}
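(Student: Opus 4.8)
The plan is to obtain the pointwise (first) inequality by chaining Lemma \ref{lm:spo-strong-lb-ub} with Lemma \ref{lm:opt-oracle-smooth}, and then to specialize to the $A$-norm and integrate. Recall that throughout this portion of the section we assume $f_{\min}=0$ and work in $H^\perp$, so the projection $\tn{proj}_{H^\perp}$ acts as the identity on the relevant cost vectors. I would first apply the lower bound of Lemma \ref{lm:spo-strong-lb-ub} with $c_1 = c+2\Delta$ and the admissible point $w = w^*(c) \in S$, which gives
\[
    (c+2\Delta)^T(w^*(c) - w^*(c+2\Delta)) \ge \frac{\mu}{2\sqrt{2Lr}}\,\vvta{c+2\Delta}_*\,\vvta{w^*(c) - w^*(c+2\Delta)}^2 .
\]
I would then lower bound the squared oracle gap by the first inequality of Lemma \ref{lm:opt-oracle-smooth}, which is exactly applicable because the hypotheses guarantee $c \neq 0$ and $c+2\Delta \neq 0$, obtaining $\vvta{w^*(c) - w^*(c+2\Delta)}^2 \ge \frac{2\mu r}{L^2}\vvt{\frac{c}{\vvta{c}_*} - \frac{c+2\Delta}{\vvta{c+2\Delta}_*}}_*^2$. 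Substituting and simplifying the constant via $\frac{\mu}{2\sqrt{2Lr}}\cdot\frac{2\mu r}{L^2} = \frac{\mu^2 r^{1/2}}{2^{1/2}L^{5/2}}$ yields the first claimed bound.

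For the second inequality I would specialize to the $A$-norm, so that the dual norm $\vvta{\cdot}_*$ is the $A^{-1}$-norm and the associated inner product is $\langle u,v\rangle = u^T A^{-1} v$. Expanding the squared $A^{-1}$-norm of the difference of the two unit-norm vectors $c/\vvta{c}_{A^{-1}}$ and $(c+2\Delta)/\vvta{c+2\Delta}_{A^{-1}}$ gives the cosine-type identity
\[
    \vvt{\frac{c}{\vvta{c}_{A^{-1}}} - \frac{c+2\Delta}{\vvta{c+2\Delta}_{A^{-1}}}}_{A^{-1}}^2 = 2 - \frac{2\,c^T A^{-1}(c+2\Delta)}{\vvta{c}_{A^{-1}}\vvta{c+2\Delta}_{A^{-1}}} .
\]
Multiplying through by $\vvta{c+2\Delta}_{A^{-1}}$ turns the right-hand side of the first inequality into $2\lr{\vvta{c+2\Delta}_{A^{-1}} - \frac{c^T A^{-1}(c+2\Delta)}{\vvta{c}_{A^{-1}}}}$; since this bracketed quantity is nonnegative by Cauchy--Schwarz in the $A^{-1}$-inner product, the stated bound follows a fortiori.

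Finally, the third inequality follows by taking $\bbE_c$ of the pointwise second inequality. The assumption $\bbP(c=0)=\bbP(c=-2\Delta)=0$ ensures $c\neq 0$ and $c+2\Delta\neq 0$ hold almost surely, so the pointwise estimate holds $\bbP$-a.s. and may be integrated, and the left-hand side $\lspop(\Delta)$ is identified with $\bbE_c[(c+2\Delta)^T(w^*(c)-w^*(c+2\Delta))]$ (the excess conditional SPO+ risk, cf. Theorem \ref{thm:spo-fisher}). The genuinely routine steps are the two substitutions and the constant bookkeeping; the points that require care are choosing $c_1 = c+2\Delta$ rather than $c$, so that $w^*(c)$ is an admissible competitor in Lemma \ref{lm:spo-strong-lb-ub} and so that the factor $\vvta{c+2\Delta}_*$ appears, and the algebraic expansion of the normalized dual-norm difference, where one must correctly clear the normalizations to recover the bracketed expression.
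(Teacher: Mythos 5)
Your proposal is correct and follows essentially the same route as the paper's proof: apply the lower bound of Lemma \ref{lm:spo-strong-lb-ub} with $c_1 = c+2\Delta$ and competitor $w = w^*(c)$, plug in the oracle-gap lower bound from Lemma \ref{lm:opt-oracle-smooth}, combine the constants, expand the normalized difference in the $A^{-1}$-inner product, and integrate. You are in fact slightly more careful than the paper on one point: the cosine expansion actually produces $2\bigl(\vvta{c+2\Delta}_{A^{-1}} - \tfrac{c^T A^{-1}(c+2\Delta)}{\vvta{c}_{A^{-1}}}\bigr)$, and your observation that the bracket is nonnegative (so the stated bound without the factor of $2$ holds a fortiori) correctly justifies a step the paper writes as an equality.
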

    
    \begin{proof}[Proof of Theorem \ref{thm:lb-spo-strong-1}]
        Apply $c_1 = c$ and $c_2 = c+2\Delta$ to Lemma \ref{lm:opt-oracle-smooth}, we have 
        \begin{equation*}
            \vvta{w^*(c)-w^*(c+2\Delta)} \ge \sqrt{2\mu r} \cdot \vvt{\frac{c}{\vvta{c}_*} - \frac{c+2\Delta}{\vvta{c+2\Delta}_*}}_*. 
        \end{equation*}
        By applying the above inequality to \eqref{eq:lb-spo-1} we have 
        \begin{align*}
            (c+2\Delta)^T (w^*(c)-w^*(c+2\Delta)) & \ge \frac{\mu}{2\sqrt{2Lr}} \cdot \vvta{c+2\Delta}_* \cdot \vvta{w^*(c)-w^*(c+2\Delta)}^2 \\ 
            & \ge \frac{\mu}{2\sqrt{2Lr}} \cdot \vvta{c+2\Delta}_* \cdot \lr{\frac{\sqrt{2\mu r}}{L} \cdot \vvt{\frac{c}{\vvta{c}}_* - \frac{c+2\Delta}{\vvta{c+2\Delta}_*}}_*}^2 \\ 
            & = \frac{\mu^2 r^{1/2}}{2^{1/2} L^{5/2}} \cdot \vvta{c+2\Delta}_* \cdot \vvt{\frac{c}{\vvta{c}_*} - \frac{c+2\Delta}{\vvta{c+2\Delta}_*}}_*^2. 
        \end{align*}
        When the norm we consider is $A$-norm, then it holds that 
        \begin{align*}
            (c+2\Delta)^T (w^*(c)-w^*(c+2\Delta)) & \ge \frac{\mu^2 r^{1/2}}{2^{1/2} L^{5/2}} \cdot \vvta{c+2\Delta}_2 \cdot \vvt{\frac{c}{\vvta{c}_{A^{-1}}} - \frac{c+2\Delta}{\vvta{c+2\Delta}_{A^{-1}}}}_{A^{-1}}^2 \\ 
            & = \frac{\mu^2 r^{1/2}}{2^{1/2} L^{5/2}} \lr{\vvta{c+2\Delta}_{A^{-1}} - \frac{c^T A^{-1} (c+2\Delta)}{\vvta{c}_{A^{-1}}}}. 
        \end{align*}
        Moreover, if $\bbP(c=0) = \bbP(c=-2\Delta) = 0$, by taking the expectation of $c$ we get 
        \begin{equation*}
            \lspop(\Delta) \ge \frac{\mu^2 r^{1/2}}{2^{1/2} L^{5/2}} \cdot \bbE_c \lrr{\vvta{c+2\Delta}_{A^{-1}} - \frac{c^T A^{-1} (c+2\Delta)}{\vvta{c}_{A^{-1}}}}. 
        \end{equation*}
    \end{proof}
    
    The following lemma provides a necessary condition on $\Delta$ such that the excess SPO loss of $\hat{c} = \bar{c}+\Delta$ is at least $\epsilon$. 
    
    \begin{lemma}\label{lm:spo-necessary}
        Suppose the excess SPO loss of $\hat{c} = \bar{c}+\Delta$ is at least $\epsilon$, that is, $\bar{c}^T (w^*(\bar{c}+\Delta) - w^*(\bar{c})) \ge \epsilon$. 
        Then it holds that 
        \begin{equation*}
            \vvt{\frac{\bar{c}}{\vvta{\bar{c}}_*} - \frac{\bar{c}+\Delta}{\vvta{\bar{c}+\Delta}}_*}_*^2 \ge \frac{2^{1/2} \mu^{5/2} \epsilon}{L^2 r^{1/2} \vvta{\bar{c}}_*}. 
        \end{equation*}
        When the norm we consider is $A$-norm defined by $\vvta{x}_A = \sqrt{x^T A x}$ for some positive definite matrix $A$, additionally we have 
        \begin{equation*}
            1 - \frac{\bar{c}^T A^{-1} (\bar{c}+\Delta)}{\vvta{\bar{c}}_{A^{-1}} \cdot \vvta{\bar{c}+\Delta}_{A^{-1}}} \ge \frac{\mu^{5/2}}{2^{1/2} L^2 r^{1/2} \vvta{\bar{c}}_{A^{-1}}} \cdot \epsilon. 
        \end{equation*}
    \end{lemma}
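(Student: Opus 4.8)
The plan is to obtain the claimed lower bounds by \emph{inverting} a chain of two upper bounds, both taken from the lemmas immediately preceding this statement and both instantiated at $c_1 = \bar{c}$ and $c_2 = \bar{c} + \Delta$. Throughout I adopt the standing conventions of this subsection, namely $f_{\min} = 0$ and all cost vectors lying in $H^\perp$ (so that $\tilde{c} = c$), and I assume $\bar{c} \neq 0$ and $\bar{c} + \Delta \neq 0$ so that the normalized vectors are well defined.

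First I would note that the hypothesized quantity $\bar{c}^T(w^*(\bar{c}+\Delta) - w^*(\bar{c}))$ is exactly $c_1^T(w^*(c_2) - w^*(c_1))$ with $c_1 = \bar{c}$, $c_2 = \bar{c}+\Delta$, so the upper bound \eqref{eq:ub-spo-1} of Lemma \ref{lm:spo-strong-lb-ub} gives $\bar{c}^T(w^*(\bar{c}+\Delta) - w^*(\bar{c})) \le \frac{L}{2\sqrt{2\mu r}}\vvta{\bar{c}}_* \vvta{w^*(\bar{c}) - w^*(\bar{c}+\Delta)}^2$. Next I would control the squared decision distance by the \emph{upper} bound of Lemma \ref{lm:opt-oracle-smooth}, $\vvta{w^*(\bar{c}) - w^*(\bar{c}+\Delta)} \le \frac{\sqrt{2Lr}}{\mu}\vvt{\frac{\bar{c}}{\vvta{\bar{c}}_*} - \frac{\bar{c}+\Delta}{\vvta{\bar{c}+\Delta}_*}}_*$. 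Substituting and consolidating the constants $\frac{L}{2\sqrt{2\mu r}}$ and $\big(\frac{\sqrt{2Lr}}{\mu}\big)^2 = \frac{2Lr}{\mu^2}$ into $\frac{L^2 r^{1/2}}{2^{1/2}\mu^{5/2}}$ yields $\bar{c}^T(w^*(\bar{c}+\Delta) - w^*(\bar{c})) \le \frac{L^2 r^{1/2}}{2^{1/2}\mu^{5/2}}\vvta{\bar{c}}_* \vvt{\frac{\bar{c}}{\vvta{\bar{c}}_*} - \frac{\bar{c}+\Delta}{\vvta{\bar{c}+\Delta}_*}}_*^2$. Since the left-hand side is at least $\epsilon$ by assumption, rearranging gives the first displayed inequality.

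For the $A$-norm statement I would specialize $\vvta{\cdot} = \vvta{\cdot}_A$, so the dual norm is $\vvta{\cdot}_{A^{-1}}$, and rewrite the normalized-difference term via a cosine identity: for the $A^{-1}$-unit vectors $u = \bar{c}/\vvta{\bar{c}}_{A^{-1}}$ and $v = (\bar{c}+\Delta)/\vvta{\bar{c}+\Delta}_{A^{-1}}$ one has $\vvta{u-v}_{A^{-1}}^2 = (u-v)^T A^{-1}(u-v) = 2 - 2 u^T A^{-1} v = 2\big(1 - \frac{\bar{c}^T A^{-1}(\bar{c}+\Delta)}{\vvta{\bar{c}}_{A^{-1}}\vvta{\bar{c}+\Delta}_{A^{-1}}}\big)$. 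Inserting this into the first inequality and dividing by $2$ produces the second displayed inequality.

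This is essentially a bookkeeping composition of two established bounds, so I do not expect a genuine conceptual obstacle; the only care required is to (i) verify that both applied bounds are the \emph{upper} bounds, which is what is needed since we upper-bound the excess SPO loss before inverting, and (ii) correctly collapse the two constants into $\frac{L^2 r^{1/2}}{2^{1/2}\mu^{5/2}}$. A minor loose end is the degenerate case $\bar{c} = 0$ or $\bar{c}+\Delta = 0$, where the normalization is undefined, but the strict positivity $\epsilon > 0$ together with $w^*(0)$ being an arbitrary feasible point renders these cases either vacuous or separately trivial.
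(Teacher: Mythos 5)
Your proposal is correct and follows essentially the same route as the paper: the paper likewise combines the upper bound \eqref{eq:ub-spo-1} of Lemma \ref{lm:spo-strong-lb-ub} (instantiated at $c_1 = \bar{c}$, $c_2 = \bar{c}+\Delta$) with the normalized-cost-to-decision upper bound (stated there via Theorem 3 of \cite{balghiti2019generalization}, which is the same content as the upper bound in Lemma \ref{lm:opt-oracle-smooth}), and then applies the identity $\vvta{u-v}_{A^{-1}}^2 = 2(1 - u^T A^{-1} v)$ for $A^{-1}$-unit vectors; your constant bookkeeping matches the paper's.
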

    
    \begin{proof}[Proof of Lemma \ref{lm:spo-necessary}]
        In Lemma \ref{lm:spo-strong-lb-ub} we show that 
        \begin{equation*}
            c_1^T (w^*(c_2) - w^*(c_1)) \le \frac{L}{2\sqrt{2\mu r}} \vvta{c_1}_* \vvta{w^*(c_1) - w^*(c_2)}^2. 
        \end{equation*}
        Let $c_1 = \bar{c}$ and $c_2 = \hat{c}$, it holds that 
        \begin{equation*}
            \vvta{w^*(\bar{c}) - w^*(\bar{c}+\Delta)}^2 \ge \frac{2\sqrt{2\mu r}}{L \vvta{\bar{c}}_*} \cdot \bar{c}^T (w^*(\bar{c}+\Delta) - w^*(\bar{c})) \ge \frac{2 \sqrt{2\mu r} \epsilon}{L \vvta{\bar{c}}_*}. 
        \end{equation*}
        Theorem 3 in \cite{balghiti2019generalization} shows that for $c_1, c_2 \in \bbR^d$, it holds that 
        \begin{equation*}
            \vvta{c_1-c_2}_* \ge \frac{\mu}{\sqrt{2Lr}} \cdot \min \{\vvta{c_1}_*, \vvta{c_2}_*\} \cdot \vvta{w^*(c_1) - w^*(c_2)}. 
        \end{equation*}
        By applying $c_1 = \frac{\bar{c}}{\vvta{\bar{c}}_*}$ and $c_2 = \frac{\bar{c}+\Delta}{\vvta{\bar{c}+\Delta}_*}$, we have 
        \begin{align*}
            \vvt{\frac{\bar{c}}{\vvta{\bar{c}}_*} - \frac{\bar{c}+\Delta}{\vvta{\bar{c}+\Delta}}_*}^2 & \ge \frac{\mu^2}{2Lr} \cdot \vvt{w^*\lr{\frac{\bar{c}}{\vvta{\bar{c}}_*}} - w^*\lr{\frac{\bar{c}+\Delta}{\vvta{\bar{c}+\Delta}_*}}}^2 \\ 
            & = \frac{\mu^2}{2Lr} \cdot \vvta{w^*(\bar{c})-w^*(\bar{c}+\Delta)}_*^2 \ge \frac{2^{1/2} \mu^{5/2} \epsilon}{L^2 r^{1/2} \vvta{\bar{c}}_*}. 
        \end{align*}
        When the norm we consider is $2$-norm, it holds that 
        \begin{equation*}
            1 - \frac{\bar{c}^T A^{-1} (\bar{c}+\Delta)}{\vvta{\bar{c}}_{A^{-1}} \cdot \vvta{\bar{c}+\Delta}_{A^{-1}}} = \frac12 \vvt{\frac{\bar{c}}{\vvta{\bar{c}}_{A^{-1}}} - \frac{\bar{c}+\Delta}{\vvta{\bar{c}+\Delta}_{A^{-1}}}}_{A^{-1}}^2 \ge \frac{\mu^{5/2} \epsilon}{2^{1/2} L^2 r^{1/2} \vvta{\bar{c}}_{A^{-1}}}. 
        \end{equation*}
    \end{proof}
    
    From Theorem \ref{thm:lb-spo-strong-1} and Lemma \ref{lm:spo-necessary}, we know that $\lspop(c, \Delta)$ have a lower bound $C_1(\mu, L, r) \cdot \lspoplb(c, \Delta)$, where $C_1(\mu, L, r) = \frac{\mu^2 r^{1/2}}{2^{1/2} L^{5/2}}$ and 
    \begin{equation*}
        \lspoplb(c, \Delta) = \vvta{c+2\Delta}_{A^{-1}} - \frac{c^T A^{-1} (c+2\Delta)}{\vvta{c}_{A^{-1}}}. 
    \end{equation*}
    Moreover, the excess SPO risk of $\hat{c} = \bar{c} + \Delta$ is at least $\epsilon$ implies that $\rspoub(\Delta) \ge C_2(\mu, L, r) \cdot \epsilon$ where $C_2(\mu, L, r) = \frac{\mu^{5/2}}{2^{1/2} L^2 r^{1/2}}$ and  
    \begin{equation*}
        \rspoub(\Delta) = \vvta{\bar{c}}_{A^{-1}} - \frac{\bar{c}^T A^{-1} (\bar{c}+\Delta)}{\vvta{\bar{c}+\Delta}_{A^{-1}}}. 
    \end{equation*} 
    Let $\rspoplb(\Delta) = \bbE_c[\lspoplb(c, \Delta)]$. 
    We know that the calibration function $\delta(\epsilon)$ has a lower bound $\delta'(\epsilon)$ which defined as 
    \begin{equation}\label{eq:cali-new}\begin{aligned}
        \delta'(\epsilon) := \min_{\Delta} \quad & C_1(\mu, L, r) \cdot \rspoplb(\Delta) \\ 
        \tn{s.t.} \quad & \rspoub(\Delta) \ge C_2(\mu, L, r) \cdot \epsilon. 
    \end{aligned}\end{equation}
    
    Here we first provide two properties of random variable $c$ when $\bbP \in \mathcal{P}_{\tn{rot symm}}$.  

    \begin{proposition}\label{prop:uniform-property-1}
        Suppose $\bbP \in \mathcal{P}_{\tn{rot symm}}$. 
        If $\vvta{\bar{c}+\zeta}_{A^{-1}} = \vvta{\bar{c}}_{A^{-1}}$ for some $\zeta \in \bbR^p$, it holds that 
        \begin{equation*}
            \bbE_c \lrr{\vvta{c+\zeta}_{A^{-1}}} = \bbE_c \lrr{\vvta{c}_{A^{-1}}}.
        \end{equation*}
    \end{proposition}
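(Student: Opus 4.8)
The plan is to exploit the rotational symmetry of $\bbP(\cdot \vert x)$ to ``rotate'' the mean $\bar{c}$ onto $\bar{c} + \zeta$ without disturbing the distribution of the centered cost vector. Throughout I would write $\IP{u}{v} := u^T A^{-1} v$ for the inner product that induces $\vvta{\cdot}_{A^{-1}}$, and call a linear map $Q$ \emph{$A^{-1}$-orthogonal} if $Q^T A^{-1} Q = A^{-1}$; such maps preserve the norm $\vvta{\cdot}_{A^{-1}}$, and taking determinants in $Q^T A^{-1} Q = A^{-1}$ shows $\vt{\det Q} = 1$.

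First I would record the invariance coming from the definition of $\mathcal{P}_{\tn{rot symm}, A}$ (Definition \ref{def:rotation_sym}): conditionally on $x$, the centered vector $Y := c - \bar{c}$ has density $q(\vvta{Y}_{A^{-1}})$. Hence, for any $A^{-1}$-orthogonal $Q$, the change-of-variables formula gives that $QY$ has density $q(\vvta{Q^{-1}y}_{A^{-1}}) \, \vt{\det Q^{-1}} = q(\vvta{y}_{A^{-1}})$, using norm-preservation of $Q^{-1}$ together with $\vt{\det Q^{-1}} = 1$. Thus $QY$ has the same distribution as $Y$.

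Next, since $\vvta{\bar{c} + \zeta}_{A^{-1}} = \vvta{\bar{c}}_{A^{-1}}$, these two vectors lie on a common sphere for $\vvta{\cdot}_{A^{-1}}$, on which the group of $A^{-1}$-orthogonal maps acts transitively; so there exists such a $Q$ with $Q\bar{c} = \bar{c} + \zeta$. Writing $c = \bar{c} + Y$ and using norm-preservation of $Q$, I would then compute
\[
    \bbE_c \lrr{\vvta{c + \zeta}_{A^{-1}}} = \bbE \lrr{\vvta{Q\bar{c} + Y}_{A^{-1}}} = \bbE \lrr{\vvta{\bar{c} + Q^{-1}Y}_{A^{-1}}},
\]
where the last equality holds because $\vvta{Q\bar{c} + Y}_{A^{-1}} = \vvta{Q(\bar{c} + Q^{-1}Y)}_{A^{-1}} = \vvta{\bar{c} + Q^{-1}Y}_{A^{-1}}$. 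Finally, since $Q^{-1}Y$ has the same distribution as $Y$ by the first step, the right-hand side equals $\bbE \lrr{\vvta{\bar{c} + Y}_{A^{-1}}} = \bbE_c \lrr{\vvta{c}_{A^{-1}}}$, which is the claim.

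The only real subtlety is the existence of the rotation $Q$, and I would justify it by the coordinate change $v = A^{-1/2}u$: this turns $\vvta{\cdot}_{A^{-1}}$ into the usual Euclidean norm and $A^{-1}$-orthogonal maps into ordinary orthogonal matrices (since $A^{-1/2} Q A^{1/2}$ is orthogonal precisely when $Q^T A^{-1} Q = A^{-1}$), whereupon transitivity of $O(d)$ on Euclidean spheres is classical, with the degenerate $d = 1$ case handled by a reflection. Everything else reduces to the one-line change of variables and the determinant-one property, so I do not anticipate any further obstacle.
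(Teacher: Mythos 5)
Your proof is correct. The paper actually states Proposition \ref{prop:uniform-property-1} without providing a proof, but your argument is precisely the rotation-invariance technique the authors themselves deploy for the analogous claim inside the proof of Lemma \ref{lm:direction}: there they construct the same map $Q = A^{1/2} R A^{-1/2}$ with $R$ orthogonal sending $A^{-1/2}\bar{c}$ to $A^{-1/2}(\bar{c}+\zeta)$, observe that it preserves $\vvta{\cdot}_{A^{-1}}$ and has unit determinant, and conclude equality in distribution of the centered vector; so your write-up supplies exactly the omitted proof in the paper's intended style, with the existence of $Q$ and the $\vt{\det Q}=1$ step justified cleanly.
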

    
    \begin{proposition}\label{prop:uniform-property-2}
        Suppose $\bbP \in \mathcal{P}_{\tn{rot symm}}$. 
        When $d \ge 2$, for any constant $t \ge 0$, it holds that 
        \begin{equation*}
            \bbE_c \lrr{\left. \frac{\bar{c}^T A^{-1} c}{\vvta{c}_{A^{-1}}} \right\vert \vvta{c-\bar{c}}_{A^{-1}} = t} \ge \frac{\vvta{c}_{A^{-1}}^2 \min \{ \vvta{c}_{A^{-1}}, t \}}{t^2 + \vvta{\bar{c}}_{A^{-1}} t}. 
        \end{equation*}
    \end{proposition}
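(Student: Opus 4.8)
The plan is to reduce the claim to a one-dimensional statement about a uniform distribution on a Euclidean sphere, settle the easy regime exactly, and then isolate the genuinely delicate regime. Throughout I read the right-hand side with $\bar{c}$ in the numerator, i.e. as $\frac{\vvta{\bar c}_{A^{-1}}^2 \min\{\vvta{\bar c}_{A^{-1}}, t\}}{t^2 + \vvta{\bar c}_{A^{-1}} t}$, since $c$ is integrated out and cannot appear in the bound.

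First I would change variables to remove the matrix. Writing $A^{-1} = B^\top B$ and setting $u = Bc$, $\bar u = B\bar c$, one has $\vvta{c}_{A^{-1}} = \vvta{u}_2$, $\vvta{c-\bar c}_{A^{-1}} = \vvta{u-\bar u}_2$, and $\bar c^\top A^{-1} c = \bar u^\top u$. Rotational symmetry in $\vvta{\cdot}_{A^{-1}}$ becomes ordinary rotational symmetry about $\bar u$, so conditioned on $\vvta{c-\bar c}_{A^{-1}} = t$ the vector $u$ is uniform on the Euclidean sphere of radius $t$ centered at $\bar u$. With $\rho := \vvta{\bar c}_{A^{-1}} = \vvta{\bar u}_2$ and $R := \vvta{u}_2$, the target is $\bbE[\bar u^\top u / R]$ and the claim is $\bbE[\bar u^\top u / R] \ge \rho^2 \min\{\rho,t\}/(t(t+\rho))$. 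Parametrizing $u = \bar u + t\omega$ with $\omega$ uniform on the unit sphere gives $\bbE[\omega]=0$, hence the two facts I will lean on: $\bbE[R^2] = \rho^2 + t^2$ and $R \in [\,|t-\rho|,\, t+\rho\,]$ almost surely. The key simplification is the polarization identity on the conditioning sphere, $\bar u^\top u = \tfrac12(R^2 + \rho^2 - t^2)$, which shows the integrand is a function of $R$ \emph{alone}: $\bar u^\top u / R = \tfrac12\big(R + (\rho^2 - t^2)/R\big)$.

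In the easy case $t \le \rho$ we have $\rho^2 - t^2 \ge 0$. From $R \le t+\rho$ I get $\bbE[R] \ge \bbE[R^2]/(t+\rho) = (\rho^2+t^2)/(t+\rho)$ and $\bbE[1/R] \ge 1/(t+\rho)$. Substituting these into $\tfrac12\bbE[R] + \tfrac{\rho^2-t^2}{2}\bbE[1/R]$ cancels the $t^2$ contributions and yields exactly $\rho^2/(t+\rho) = \rho^2\min\{\rho,t\}/(t(t+\rho))$, which is the claim; this also covers $t = \rho$.

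The hard case $t > \rho$ is the main obstacle. Now $\rho^2 - t^2 < 0$, so I would need a lower bound on $\bbE[R]$ together with an upper bound on $\bbE[1/R]$, and the trouble is that any estimate using only the endpoints $|t-\rho|,\,t+\rho$ and the mean $\bbE[R^2] = \rho^2+t^2$ is worthless: writing the integrand as a function of $v = R^2$ it is concave, so its minimum expectation subject to a fixed mean is attained at the two-point law on $\{(t-\rho)^2,(t+\rho)^2\}$, which has mass $\tfrac12$ on each (to match the mean) and there the integrand equals $-\rho$ and $+\rho$, giving relaxed value exactly $0$ — strictly below the positive target $\rho^3/(t(t+\rho))$. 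One must therefore exploit the genuine concentration of the law of $R$, equivalently of $\omega_1$, which is precisely where the dimension $d$ enters. My plan is to reduce to the least-concentrated (worst) dimension: writing $\bar u^\top u / R = \rho\, G(\omega_1)$ for an even function $G$ that is decreasing in $|\omega_1|$ (a one-variable calculus check), and noting that $|\omega_1|$ is stochastically decreasing in $d$ since its density is proportional to $(1-s^2)^{(d-3)/2}$, the expectation $\bbE[G(\omega_1)]$ is increasing in $d$ and hence minimized at $d=2$. It then remains to verify the inequality at $d=2$, where $\omega_1 = \cos\theta$ with $\theta$ uniform and the expectation becomes an explicit one-dimensional integral. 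I expect this final two-dimensional estimate, in particular the nearly tight regime $t \downarrow \rho$, to be the delicate crux of the whole argument.
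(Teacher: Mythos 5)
Your reduction is clean and partially successful, but there is a genuine gap: the case $t > \vvta{\bar{c}}_{A^{-1}}$ is not proved. (Your reading of the right-hand side with $\vvta{\bar c}_{A^{-1}}$ in place of $\vvta{c}_{A^{-1}}$ is the correct interpretation, the polarization identity $\bar u^\top u = \tfrac12(R^2+\rho^2-t^2)$ is right, and the case $t\le\rho$ is settled exactly.) Your own analysis shows the problem is real, not cosmetic: since the integrand is concave in $R^2$, any bound using only the range of $R$ and $\bbE[R^2]=\rho^2+t^2$ bottoms out at $0$, strictly below the target $\rho^3/(t(t+\rho))$, so the moment technique that closed the easy case provably cannot close this one. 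What you offer in its place is a plan, not a proof: the monotonicity of the symmetrized profile $H(s)=\tfrac12(G(s)+G(-s))$ in $|s|$ is asserted but not checked (and note $G$ itself is not even, so the stochastic-dominance step must be run on $H$), and the terminal $d=2$ integral --- which you yourself flag as the delicate crux --- is never evaluated. As written, the hard half of the proposition remains open.

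For comparison, the paper closes the hard case without any dimension reduction or explicit integral, by a pointwise estimate rather than a moment estimate. In your coordinates: after symmetrizing $\omega\mapsto-\omega$ one shows $H(s)\ge 0$ for \emph{all} $s$ (a two-line consequence of Cauchy--Schwarz and the triangle inequality), and that on the event $\lrrr{|s|\le\rho/t}$ both numerators $\rho\pm ts$ are nonnegative while both denominators are at most $\rho+t$, giving $H(s)\ge\rho/(\rho+t)$ there; combining with $\bbP(|s|\le\rho/t)\ge\min\lrrr{\rho,t}/t$ yields the claimed bound uniformly in $t$. This is exactly the kind of input your relaxation is missing --- it uses where on the sphere the integrand is large, not just the law of $R$. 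If you want to keep your framework, replacing the concave-relaxation step in the regime $t>\rho$ by this event-restricted pointwise bound would complete the argument; pursuing the $d=2$ reduction instead would additionally require you to confront the fact that the density of $\omega_1$ at $d=2$ is proportional to $(1-s^2)^{-1/2}$ and concentrates at the endpoints, which makes $d=2$ the genuinely worst and most delicate dimension, just as you suspected.
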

    
    \begin{proof}[Proof of Proposition \ref{prop:uniform-property-2}]
        For simplicity we just assume $\vvta{c-\bar{c}}_{A^{-1}} = t$ from now on and ignore the conditional probability. 
        Let $\omega = c - \bar{c}$. 
        Since $p(c) = p(2 \bar{c} - c)$, we have 
        \begin{align*}
            \bbE_c \lrr{\frac{\bar{c}^T A^{-1} c}{\vvta{c}_{A^{-1}}}} & = \frac12 \cdot \bbE_c \lrr{\frac{\bar{c}^T A^{-1} c}{\vvta{c}_{A^{-1}}} + \frac{\bar{c}^T A^{-1} (2 \bar{c} - c)}{\vvta{2 \bar{c} - c}_{A^{-1}}} } \\ 
            & = \frac12 \cdot \bbE_{\omega} \lrr{\frac{\bar{c}^T A^{-1} (
            \bar{c} + \omega)}{\vvta{\bar{c} + \omega}_{A^{-1}}} + \frac{\bar{c}^T A^{-1} (\bar{c} - \omega)}{\vvta{\bar{c} - \omega}_{A^{-1}}} }. 
        \end{align*}
        By the fact that $\bar{c}^T A^{-1}\bar{c} (\vvta{\bar{c} - w}_{A^{-1}} + \vvta{\bar{c} + w}_{A^{-1}}) \ge 2 \vvta{\bar{c}}_{A^{-1}}^2 \vvta{w}_{A^{-1}} \ge \bar{c}^T A^{-1} w (\vvta{\bar{c} - w}_{A^{-1}} - \vvta{\bar{c} + w}_{A^{-1}})$, 
        it holds that $\bar{c}^T A^{-1} (\bar{c} + w) \vvta{\bar{c} - w}_{A^{-1}} + \bar{c}^T A^{-1} (\bar{c} - w) \vvta{\bar{c} - w}_{A^{-1}} \ge 0$ and hence 
        \begin{equation*}
            \frac{\bar{c}^T A^{-1} (
            \bar{c} + \omega)}{\vvta{\bar{c} + \omega}_{A^{-1}}} + \frac{\bar{c}^T A^{-1} (\bar{c} - \omega)}{\vvta{\bar{c} - \omega}_{A^{-1}}} \ge 0. 
        \end{equation*}
        Therefore, we further get 
        \begin{align*}
            \bbE_c \lrr{\frac{\bar{c}^T A^{-1} c}{\vvta{c}_{A^{-1}}}} \ge \frac12 \cdot \bbE_{\omega} \lrr{ \left. \frac{\bar{c}^T A^{-1} (
            \bar{c} + \omega)}{\vvta{\bar{c} + \omega}_{A^{-1}}} + \frac{\bar{c}^T A^{-1} (\bar{c} - \omega)}{\vvta{\bar{c} - \omega}_{A^{-1}}} \right \vert \bar{c}^T \omega \in C} \cdot \bbP (\bar{c}^T \omega \in C), 
        \end{align*}
        where $C = [-\vvta{\bar{c}}_{A^{-1}}^2, \vvta{\bar{c}}_{A^{-1}}^2]$. 
        For any $\omega$ such that $\bar{c}^T \omega \in C$, we have 
        \begin{align*}
            \frac{\bar{c}^T A^{-1} (
            \bar{c} + \omega)}{\vvta{\bar{c} + \omega}_{A^{-1}}} + \frac{\bar{c}^T A^{-1} (\bar{c} - \omega)}{\vvta{\bar{c} - \omega}_{A^{-1}}} & \ge \frac{\bar{c}^T A^{-1} (
            \bar{c} + \omega)}{\vvta{\bar{c}}_{A^{-1}} + \vvta{\omega}_{A^{-1}}} + \frac{\bar{c}^T A^{-1} (\bar{c} - \omega)}{\vvta{\bar{c}}_{A^{-1}} + \vvta{\omega}_{A^{-1}}} \\
            & = \frac{2 \bar{c}^T A^{-1} \bar{c}}{\vvta{\bar{c}}_{A^{-1}} + \vvta{\omega}_{A^{-1}}}. 
        \end{align*}
        Also, when $d \ge 2$, we have $\bbP (\bar{c}^T \omega \in C) \ge \frac{\min \{ \vvta{\bar{c}}, t \}}{t}$. 
        Then we can conclude that 
        \begin{equation*}
            \bbE_c \lrr{\frac{\bar{c}^T A^{-1} c}{\vvta{c}_{A^{-1}}}} \ge \frac{\vvta{\bar{c}}_{A^{-1}}^2 \min \{ \vvta{\bar{c}}_{A^{-1}}, t \}}{t^2 + \vvta{\bar{c}}_{A^{-1}} t}. 
        \end{equation*}
    \end{proof}
    
    By first-order necessary condition we know that $\Delta$ is an optimal solution to \eqref{eq:cali-new} only if 
    \begin{equation}\label{eq:KKT-new}
        \nabla \rspoplb(\Delta) - \alpha \nabla \rspoub(\Delta) = 0 
    \end{equation}
    for some $\alpha \ge 0$. 
    Also, for any fixed $\Delta$, it holds that  
    \begin{equation*}
        \nabla \rspoplb(\Delta) = \bbE_c \lrr{\frac{A^{-1}(c+2\Delta)}{\vvta{c+2\Delta}_{A^{-1}}} - \frac{A^{-1}c}{\vvta{c}_{A^{-1}}}}, 
    \end{equation*} 
    and 
    \begin{equation*}
        \nabla \rspoub(\Delta) = \frac{\bar{c}^T A^{-1} (\bar{c}+\Delta) \cdot A^{-1} \Delta - \Delta^T A^{-1} (\bar{c}+\Delta) \cdot A^{-1} \bar{c}}{\vvta{\bar{c}+\Delta}_{A^{-1}}^3}. 
    \end{equation*} 
    The following lemma simplifies $\nabla \lspop(\Delta)$. 
    
    \begin{lemma}\label{lm:direction}
        Suppose $\bbP \in \mathcal{P}_{\tn{rot symm}}$. 
        Then there exists a unique function $\zeta(\cdot): [0, \infty] \to [0, \infty]$ such that for all $\Delta \in \bbR^d$, it holds that 
        \begin{equation*}
            \bbE_c \lrr{\frac{c+\Delta}{\vvta{c+\Delta}_{A^{-1}}}} = \zeta(\vvta{\bar{c}+\Delta}_{A^{-1}}) (\bar{c}+\Delta). 
        \end{equation*}
        Also, $\alpha \cdot \zeta(\alpha)$ is a non-decreasing function. 
    \end{lemma}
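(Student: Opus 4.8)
The plan is to reduce the claim to the case of ordinary (Euclidean) rotational symmetry by a linear change of variables, then exploit orthogonal equivariance to extract the scalar function $\zeta$, and finally establish monotonicity through a pointwise argument on the integrand.

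First I would diagonalize the norm. Writing $B := A^{-1/2}$ so that $\vvta{u}_{A^{-1}} = \vvta{Bu}_2$ for all $u$, the pushforward $\tilde c := Bc$ has a density proportional to $q(\vvta{\tilde c - B\bar c}_2)$ and is therefore rotationally symmetric in the usual $\ell_2$ sense about $B\bar c$. Setting $\tilde v := B(\bar c + \Delta)$ and $\tilde\omega := \tilde c - B\bar c$, and pulling $B$ out of the normalized expectation, gives
\begin{equation*}
    \bbE_c\lrr{\frac{c+\Delta}{\vvta{c+\Delta}_{A^{-1}}}} = B^{-1}\,\bbE_{\tilde\omega}\lrr{\frac{\tilde\omega + \tilde v}{\vvta{\tilde\omega + \tilde v}_2}} =: B^{-1}\tilde F(\tilde v),
\end{equation*}
where $\tilde\omega$ is Euclidean rotationally symmetric about $0$. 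It thus suffices to prove the Euclidean statement $\tilde F(\tilde v) = \tilde\zeta(\vvta{\tilde v}_2)\,\tilde v$; translating back via $\vvta{\tilde v}_2 = \vvta{\bar c + \Delta}_{A^{-1}}$ and $B^{-1}\tilde v = \bar c + \Delta$ then yields the desired identity with $\zeta = \tilde\zeta$.

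Existence and uniqueness of $\tilde\zeta$ I would obtain from equivariance. For any orthogonal $O$, the invariance $O\tilde\omega \overset{d}{=} \tilde\omega$ gives $\tilde F(O\tilde v) = O\tilde F(\tilde v)$ after the substitution $\tilde\omega \mapsto O^{-1}\tilde\omega$ (using $\vvta{Ox}_2 = \vvta{x}_2$). Fixing $\tilde v \neq 0$ and applying this with orthogonal maps that fix $\tilde v$ — these act as $O(d-1)$ on $\tilde v^\perp$, and since $-I_{d-1}$ belongs to that group when $d \ge 2$, the only common fixed vector on $\tilde v^\perp$ is $0$ — forces the component of $\tilde F(\tilde v)$ orthogonal to $\tilde v$ to vanish, so $\tilde F(\tilde v)$ is a scalar multiple of $\tilde v$. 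Because the orthogonal group acts transitively on each sphere $\{\vvta{\tilde v}_2 = \alpha\}$, this scalar depends only on $\alpha$, defining $\tilde\zeta(\alpha)$ unambiguously (the value at $\alpha = 0$ is immaterial, as $\tilde F(0) = 0$ by symmetry; the case $d = 1$ reduces directly to oddness of $\tilde F$ under $\tilde\omega \mapsto -\tilde\omega$).

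Finally, monotonicity of $g(\alpha) := \alpha\,\tilde\zeta(\alpha)$ I would prove pointwise. Fixing a unit vector $e$ and writing $\tilde v = \alpha e$, one has
\begin{equation*}
    g(\alpha) = e^T\tilde F(\alpha e) = \bbE_{\tilde\omega}\lrr{\frac{e^T\tilde\omega + \alpha}{\vvta{\tilde\omega + \alpha e}_2}},
\end{equation*}
and for each fixed $\tilde\omega$ not parallel to $e$ the integrand $h(\alpha) := e^T u/\vvta{u}_2$ with $u = \tilde\omega + \alpha e$ has derivative $(\vvta{u}_2^2 - (e^Tu)^2)/\vvta{u}_2^3 \ge 0$ by Cauchy--Schwarz. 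Hence $h$ is non-decreasing in $\alpha$ off a measure-zero set of $\tilde\omega$, and integrating this pointwise monotonicity shows $g$ is non-decreasing. Nonnegativity of $\zeta$ then comes for free: $g(0) = 0$ together with monotonicity gives $g(\alpha) \ge 0$, i.e. $\zeta(\alpha) \ge 0$, so $\zeta$ maps into $[0,\infty]$. I expect the main obstacle to be the orthogonal-complement argument in the equivariance step — making precise that the stabilizer of $\tilde v$ annihilates the transverse component — while the change of variables and the pointwise monotonicity are routine.
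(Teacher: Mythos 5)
Your proof is correct and follows essentially the same route as the paper's: symmetry under the stabilizer of $\bar c+\Delta$ (the paper uses the explicit reflection $\phi_\Delta$ in the $A^{-1}$-geometry where you pass to Euclidean coordinates via $B=A^{-1/2}$ and use the full orthogonal stabilizer) forces $\bbE_c\lrr{(c+\Delta)/\vvta{c+\Delta}_{A^{-1}}}$ to be a multiple of $\bar c+\Delta$ with a scalar depending only on $\vvta{\bar c+\Delta}_{A^{-1}}$, and monotonicity of $\alpha\,\zeta(\alpha)$ is reduced to pointwise monotonicity in $\alpha$ of the normalized-inner-product integrand. The only cosmetic difference is that you verify that pointwise monotonicity by differentiation and Cauchy--Schwarz, whereas the paper performs an equivalent algebraic case analysis after squaring.
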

    
    \begin{proof}
        Let $h(\Delta)$ denote $\bbE_c [\frac{c+\Delta}{\vvta{c+\Delta}}]$. 
        First we show that $h(\Delta)$ has the same direction as $\bar{c}+\Delta$. 
        Let $\phi_\Delta(\cdot)$ denote the affine transform $\phi_\Delta(\cdot): \xi \to \frac{2(\bar{c}+\Delta)^T A^{-1} \xi}{\vvta{\bar{c}+\Delta}_{A^{-1}}^2} (\bar{c}+\Delta) - \xi$. 
        We have $\phi_\Delta(\phi_\Delta(\xi)) = \xi$ and $\vvta{\xi}_{A^{-1}} = \vvta{\phi_\Delta(\xi)}_{A^{-1}}$ for all $\xi \in \bbR^d$. 
        It leads to $p(\xi) = p(\phi_\Delta(\xi))$ and hence 
        \begin{align*}
            h(\Delta) & = \frac12 \bbE_c \lrr{\frac{c+\Delta}{\vvta{c+\Delta}_{A^{-1}}} + \frac{\phi_\Delta(c+\Delta)}{\vvta{\phi_\Delta(c+\Delta)}_{A^{-1}}}} 
            = \frac12 \bbE_c \lrr{\frac{(c+\Delta)+\phi_\Delta(c+\Delta)}{\vvta{c+\Delta}_{A^{-1}}}} \\ 
            & = \bbE_c \lrr{\frac{(\bar{c}+\Delta)^T A^{-1} (c+\Delta)}{\vvta{c+\Delta}_{A^{-1}} \cdot \vvta{\bar{c}+\Delta}_{A^{-1}}^2}} (\bar{c}+\Delta).
        \end{align*}
        Now we let 
        \begin{equation*}
            \hat{\zeta}(\bar{c}+\Delta) = \bbE_c \lrr{\frac{(\bar{c}+\Delta)^T A^{-1} (c+\Delta)}{\vvta{c+\Delta}_{A^{-1}} \cdot \vvta{\bar{c}+\Delta}_{A^{-1}}^2}}, 
        \end{equation*}
        and we want to show that $\hat{\zeta}(\bar{c}+\Delta) = \hat{\zeta}(\bar{c}+\Delta')$ if $\vvta{\bar{c}+\Delta}_{A^{-1}} = \vvta{\bar{c}+\Delta'}_{A^{-1}}$. 
        Since $\vvta{\bar{c}+\Delta}_{A^{-1}} = \vvta{\bar{c}+\Delta'}_{A^{-1}}$, there exists a matrix $R \in \bbR^{d \times d}$ such that $A^{-1/2} (\bar{c}+\Delta') = R A^{-1/2} (\bar{c}+\Delta)$ and $R R^T = R^T R = I$. 
        Let $c'$ be a random variable depending on $c$ where $c' = A^{1/2} R A^{-1/2} (c-\bar{c}) + \bar{c}$. 
        It holds that $A^{-1/2} (c'-\bar{c}) = R A^{-1/2} (c-\bar{c})$, which implies that $\vvta{c'-\bar{c}}_{A^{-1}} = \vvta{c-\bar{c}}_{A^{-1}}$ and therefore $p(c-\bar{c}) = p(c'-\bar{c})$. 
        Also, we have $A^{-1/2} (c'+\Delta') = R A^{-1/2} (c+\Delta)$, which implies that $\vvta{c'+\Delta'}_{A^{-1}} = \vvta{c+\Delta}_{A^{-1}}$ and therefore 
        \begin{equation*}
            \frac{(\bar{c}+\Delta')^T A^{-1} (c'+\Delta')}{\vvta{c'+\Delta'}_{A^{-1}}} = \frac{(\bar{c}+\Delta)^T A^{-1/2} R^T R A^{-1/2} (c+\Delta)}{\vvta{c+\Delta}_{A^{-1}}} = \frac{(\bar{c}+\Delta)^T A^{-1} (c+\Delta)}{\vvta{c+\Delta}_{A^{-1}}}. 
        \end{equation*} 
        Moreover, since $\det(A^{1/2} R A^{-1/2}) = 1$, it holds that 
        \begin{equation*}
            \bbE_c \lrr{\frac{(\bar{c}+\Delta')^T A^{-1} (c+\Delta')}{\vvta{c+\Delta'}_{A^{-1}}}} = \bbE_c \lrr{\frac{(\bar{c}+\Delta')^T A^{-1} (c'+\Delta')}{\vvta{c'+\Delta'}_{A^{-1}}}} = \bbE_c \lrr{\frac{(\bar{c}+\Delta')^T A^{-1} (c+\Delta')}{\vvta{c+\Delta'}_{A^{-1}}}}. 
        \end{equation*}
        Therefore, 
        \begin{align*}
            \hat{\zeta}(\bar{c}+\Delta) & = \frac{1}{\vvta{\bar{c}+\Delta}_{A^{-1}}^2} \cdot \bbE_c \lrr{\frac{(\bar{c}+\Delta')^T A^{-1} (c+\Delta')}{\vvta{c+\Delta'}_{A^{-1}}}} \\ 
            & = \frac{1}{\vvta{\bar{c}+\Delta'}_{A^{-1}}^2} \cdot \bbE_c \lrr{\frac{(\bar{c}+\Delta')^T A^{-1} (c+\Delta')}{\vvta{c+\Delta'}_{A^{-1}}}} = \hat{\zeta}(\bar{c}+\Delta'). 
        \end{align*}
        Therefore, we know that $\zeta(\cdot): \bbR \rightarrow \bbR$ is a well-defined function based on the above property of $\hat{\zeta}(\cdot)$. 
        Now we are going to prove that $\alpha \cdot \zeta(\alpha)$ is a non-decreasing function. 
        Pick arbitrary $\alpha_1' > \alpha_2' > 0$, we have $\zeta(\alpha_1') = \hat{\zeta}(\alpha_1 \cdot \bar{c})$ and $\zeta(\alpha_2') = \hat{\zeta}(\alpha_2 \cdot \bar{c})$, where $\alpha_i = \alpha_i' / \vvta{\bar{c}}_{A^{-1}}$ for $i = 1, 2$. 
        Therefore, 
        \begin{align*}
            & \alpha_1' \cdot \zeta(\alpha_1') \ge \alpha_2' \cdot \zeta(\alpha_2') 
            \Leftrightarrow \alpha_1 \cdot \hat{\zeta}(\alpha_1 \cdot \bar{c}) \ge \alpha_2 \cdot \hat{\zeta}(\alpha_2 \cdot \bar{c}) \\ 
            \Leftrightarrow & \alpha_1 \bbE_c \lrr{\frac{(\alpha_1 \cdot \bar{c})^T A^{-1} ((c-\bar{c})+\alpha_1 \cdot \bar{c})}{\vvta{(c-\bar{c})+\alpha_1 \cdot \bar{c}}_{A^{-1}} \cdot \vvta{\alpha_1 \cdot \bar{c}}_{A^{-1}}^2}} \ge \alpha_2 \bbE_c \lrr{\frac{(\alpha_2 \cdot \bar{c})^T A^{-1} ((c-\bar{c})+\alpha_2 \cdot \bar{c})}{\vvta{(c-\bar{c})+\alpha_2 \cdot \bar{c}}_{A^{-1}} \cdot \vvta{\alpha_2 \cdot \bar{c}}_{A^{-1}}^2}} \\ 
            \Leftrightarrow & \bbE_c \lrr{\frac{\bar{c}^T A^{-1} ((c-\bar{c})+\alpha_1 \cdot \bar{c})}{\vvta{(c-\bar{c})+\alpha_1 \cdot \bar{c}}_{A^{-1}}}} \ge \bbE_c \lrr{\frac{\bar{c}^T A^{-1} ((c-\bar{c})+\alpha_2 \cdot \bar{c})}{\vvta{(c-\bar{c})+\alpha_2 \cdot \bar{c}}_{A^{-1}}}}. 
        \end{align*} 
        It is sufficient to show that 
        \begin{equation}\label{eq:direction-1}
            \frac{\bar{c}^T A^{-1} (\zeta+\alpha_1 \cdot \bar{c})}{\vvta{\zeta+\alpha_1 \cdot \bar{c}}_{A^{-1}}} \ge \frac{\bar{c}^T A^{-1} (\zeta+\alpha_2 \cdot \bar{c})}{\vvta{\zeta+\alpha_2 \cdot \bar{c}}_{A^{-1}}}, 
        \end{equation}
        for all $\zeta \in \bbR^d$ when $\alpha_1 > \alpha_2 > 0$. 
        We divide the proof into three cases. 
        When $\bar{c}^T A^{-1} (\zeta+\alpha_1 \cdot \bar{c}) > \bar{c}^T A^{-1} (\zeta+\alpha_2 \cdot \bar{c}) \ge 0$, \eqref{eq:direction-1} is equivalent to
        \begin{align*}
            & \lr{\bar{c}^T A^{-1} (\zeta+\alpha_1 \cdot \bar{c})}^2 \cdot \vvta{\zeta+\alpha_2 \cdot \bar{c}}_{A^{-1}}^2 \ge \lr{\bar{c}^T A^{-1} (\zeta+\alpha_2 \cdot \bar{c})}^2 \cdot \vvta{\zeta+\alpha_1 \cdot \bar{c}}_{A^{-1}}^2 \\ 
            \Leftrightarrow & (\alpha_1-\alpha_2) \lr{\bar{c}^T A^{-1} (\zeta + \alpha_1 \cdot \bar{c}) + \bar{c}^T A^{-1} (\zeta + \alpha_2 \cdot \bar{c})} \lr{\bar{c}^T A^{-1} \bar{c} \cdot \zeta^T A^{-1} \zeta - (\bar{c}^T A^{-1} \zeta)^2} \ge 0.  
        \end{align*}
        When $\bar{c}^T (\zeta+\alpha_1 \cdot \bar{c}) \ge 0 \ge \bar{c}^T (\zeta+\alpha_2 \cdot \bar{c})$, we know that left hand side of \eqref{eq:direction-1} is non-negative and right hand side is non-positive. 
        When $0 > \bar{c}^T (\zeta+\alpha_1 \cdot \bar{c}) \ge \bar{c}^T (\zeta+\alpha_2 \cdot \bar{c})$,  \eqref{eq:direction-1} is equivalent to
        \begin{align*}
            & \lr{\bar{c}^T A^{-1} (\zeta+\alpha_1 \cdot \bar{c})}^2 \cdot \vvta{\zeta+\alpha_2 \cdot \bar{c}}_{A^{-1}}^2 \le \lr{\bar{c}^T A^{-1} (\zeta+\alpha_2 \cdot \bar{c})}^2 \cdot \vvta{\zeta+\alpha_1 \cdot \bar{c}}_{A^{-1}}^2 \\ 
            \Leftrightarrow & (\alpha_1-\alpha_2) \lr{\bar{c}^T A^{-1} (\zeta + \alpha_1 \cdot \bar{c}) + \bar{c}^T A^{-1} (\zeta + \alpha_2 \cdot \bar{c})} \lr{\bar{c}^T A^{-1} \bar{c} \cdot \zeta^T A^{-1} \zeta - (\bar{c}^T A^{-1} \zeta)^2} \le 0.  
        \end{align*}
    \end{proof}
    
    Following the results in Lemma \ref{lm:direction} we have  
    \begin{equation*}
        \bbE_c \lrr{\frac{c}{\vvta{c}_{A^{-1}}}} = \zeta(\vvta{\bar{c}}_{A^{-1}}) \bar{c}, \quad \bbE_c \lrr{\frac{c+2\Delta}{\vvta{c+2\Delta}_{A^{-1}}}} = \zeta(\vvta{\bar{c}+2\Delta}_{A^{-1}}) (\bar{c}+2\Delta). 
    \end{equation*}
    Hence, \eqref{eq:KKT-new} is equivalent to
    \begin{equation*}
        \zeta(\vvta{\bar{c}+2\Delta}_{A^{-1}}) (\bar{c}+2\Delta) - \zeta(\vvta{\bar{c}}_{A^{-1}}) \bar{c} = \alpha \cdot \frac{\bar{c}^T A^{-1} (\bar{c}+\Delta) \cdot \Delta - \Delta^T A^{-1} (\bar{c}+\Delta) \cdot \bar{c}}{\vvta{\bar{c}+\Delta}_{A^{-1}}^3}. 
    \end{equation*}
    Since $\bar{c}$ and $\Delta$ are linearly independent, \eqref{eq:KKT-new} is further equivalent to
    \begin{equation*}
        \frac{2\zeta(\vvta{\bar{c}+2\Delta}_{A^{-1}})}{\bar{c}^T A^{-1} (\bar{c}+\Delta)} = \frac{\alpha}{\vvta{\bar{c}+2\Delta}_{A^{-1}}^3} =  \frac{\zeta(\vvta{\bar{c}+2\Delta}_{A^{-1}})-\zeta(\vvta{\bar{c}}_{A^{-1}})}{-\Delta^T A^{-1} (\bar{c}+\Delta)}, 
    \end{equation*}
    which is also equivalent to
    \begin{equation}\label{eq:KKT-2}
        (\bar{c}+2\Delta)^T A^{-1} (\bar{c}+\Delta) \cdot \zeta(\vvta{\bar{c}+2\Delta}_{A^{-1}}) = \bar{c}^T A^{-1} (\bar{c}+\Delta) \cdot \zeta(\vvta{\bar{c}}_{A^{-1}}). 
    \end{equation}

    \begin{lemma}\label{lm:cali-opt-condition}
        Suppose $\bbP \in \mathcal{P}_{\tn{rot symm}}$ and $\hat{\Delta}$ is a solution to \eqref{eq:KKT-new}, then it holds that 
        \begin{equation*}
            \vvta{\bar{c}+2\hat{\Delta}}_{A^{-1}} = \vvta{\bar{c}}_{A^{-1}}, 
        \end{equation*}
        and 
        \begin{equation*}
            (\bar{c}+2\hat{\Delta})^T A^{-1} (\bar{c}+\hat{\Delta}) = \bar{c}^T A^{-1} (\bar{c}+\hat{\Delta}). 
        \end{equation*}
    \end{lemma}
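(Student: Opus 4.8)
The plan is to reduce the vector identity \eqref{eq:KKT-2} to a single scalar equation and then exploit the monotonicity of $\alpha \mapsto \alpha\,\zeta(\alpha)$ established in Lemma \ref{lm:direction}. Recall that \eqref{eq:KKT-2} was derived precisely under the assumption that $\bar{c}$ and $\hat{\Delta}$ are linearly independent, so I work in that regime (the degenerate parallel case can be handled separately and is not the main content). Write $a := \bar{c}$ and $b := \bar{c}+2\hat{\Delta}$, so that $\bar{c}+\hat{\Delta} = \tfrac12(a+b)$ is the midpoint of $a$ and $b$. Set $s := \vvta{a}_{A^{-1}}$, $t := \vvta{b}_{A^{-1}}$, and write $a^T A^{-1} b = st\cos\theta$ with $\vt{\cos\theta} \le 1$ by Cauchy--Schwarz in the $A^{-1}$ inner product. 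Linear independence of $\bar{c}$ and $\hat{\Delta}$ forces $a,b$ to be linearly independent, hence $s,t>0$ and $\vt{\cos\theta} < 1$.

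First I would substitute $\bar{c}+\hat{\Delta} = \tfrac12(a+b)$ into \eqref{eq:KKT-2}. Using $a^T A^{-1}(a+b) = s(s+t\cos\theta)$ and $b^T A^{-1}(a+b) = t(t+s\cos\theta)$, the identity \eqref{eq:KKT-2} collapses to the scalar equation
\begin{equation*}
    (t + s\cos\theta)\,\psi(t) = (s + t\cos\theta)\,\psi(s), \qquad \psi(\alpha) := \alpha\,\zeta(\alpha).
\end{equation*}
By Lemma \ref{lm:direction}, $\psi$ is non-decreasing. Moreover, taking the $A^{-1}$ inner product of the identity $\bbE_c[c/\vvta{c}_{A^{-1}}] = \zeta(\vvta{\bar{c}}_{A^{-1}})\bar{c}$ with $\bar{c}$ and invoking the rotational symmetry of $\bbP$ exactly as in Proposition \ref{prop:uniform-property-2} shows $\zeta(\alpha) > 0$ for $\alpha > 0$; hence $\psi(s),\psi(t) > 0$.

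I would then argue by contradiction. Suppose $t > s$. Then $(t+s\cos\theta) - (s+t\cos\theta) = (t-s)(1-\cos\theta) > 0$ since $\cos\theta < 1$, and $t+s\cos\theta \ge t - s > 0$ since $\cos\theta \ge -1$; thus $t+s\cos\theta > s+t\cos\theta$ and $t+s\cos\theta > 0$. Combining with $\psi(t) \ge \psi(s) > 0$ gives
\begin{equation*}
    (t+s\cos\theta)\,\psi(t) \ge (t+s\cos\theta)\,\psi(s) > (s+t\cos\theta)\,\psi(s),
\end{equation*}
which contradicts the scalar equation (note this holds whatever the sign of $s+t\cos\theta$). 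The case $t < s$ is symmetric, so $s = t$, i.e. $\vvta{\bar{c}+2\hat{\Delta}}_{A^{-1}} = \vvta{\bar{c}}_{A^{-1}}$. Finally, $s = t$ forces $\zeta(t) = \zeta(s)$, and dividing \eqref{eq:KKT-2} by this common positive value yields $(\bar{c}+2\hat{\Delta})^T A^{-1}(\bar{c}+\hat{\Delta}) = \bar{c}^T A^{-1}(\bar{c}+\hat{\Delta})$, which is the second claim.

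The delicate point is the strict comparison in the contradiction step: it succeeds only because $\psi$ is simultaneously non-decreasing \emph{and} strictly positive, so the main obstacle is establishing the positivity $\zeta(\alpha) > 0$, which is not contained in Lemma \ref{lm:direction} and must be extracted from the rotational symmetry in the manner of Proposition \ref{prop:uniform-property-2}. The only other care needed is controlling the signs of $t+s\cos\theta$ and $s+t\cos\theta$ using nothing more than $\cos\theta \in [-1,1)$, which (as verified above) makes the argument independent of the sign of the KKT multiplier.
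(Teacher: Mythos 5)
Your proof is correct and follows essentially the same route as the paper's: argue by contradiction assuming $\vvta{\bar c+2\hat\Delta}_{A^{-1}} > \vvta{\bar c}_{A^{-1}}$, invoke the monotonicity of $\alpha\mapsto\alpha\,\zeta(\alpha)$ from Lemma \ref{lm:direction}, and establish the strict inequality $t+s\cos\theta > s+t\cos\theta$ (which the paper obtains by squaring and applying Cauchy--Schwarz to $\hat\Delta$ and $\bar c+\hat\Delta$, and you obtain by factoring $(t-s)(1-\cos\theta)$) to contradict \eqref{eq:KKT-2}. Your explicit observation that strict positivity $\zeta(\alpha)>0$ is also needed to close the contradiction --- a fact the paper's proof uses implicitly without justification --- is a welcome clarification, but it does not change the underlying argument.
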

    \begin{proof}
        Suppose $\vvta{\bar{c}+2\hat{\Delta}}_{A^{-1}} \not= \vvta{\bar{c}}_{A^{-1}}$. 
        Without loss of generality we assume $\vvta{\bar{c}+2\hat{\Delta}}_{A^{-1}} > \vvta{\bar{c}}_{A^{-1}}$. 
        Following results in Lemma \ref{lm:direction} we know that 
        \begin{equation*}
            \vvta{\bar{c}+2\Delta}_{A^{-1}} \cdot \zeta(\vvta{\bar{c}+2\Delta}_{A^{-1}}) \ge \vvta{\bar{c}}_{A^{-1}} \cdot \zeta(\vvta{\bar{c}}_{A^{-1}}). 
        \end{equation*}
        Also, it holds that 
        \begin{equation*}
            \hat{\Delta}^T A^{-1} (\bar{c}+\hat{\Delta}) = \frac14 \lr{\vvta{\bar{c}+2\hat{\Delta}}_{A^{-1}}^2 - \vvta{\bar{c}}_{A^{-1}}^2} > 0. 
        \end{equation*}
        Since $(\bar{c}+2\hat{\Delta})^T A^{-1} (\bar{c}+\hat{\Delta}) = (\bar{c}+\hat{\Delta})^T A^{-1} (\bar{c}+\hat{\Delta}) + \hat{\Delta}^T A^{-1} (\bar{c}+\hat{\Delta}) > 0$, it holds that 
        \begin{align*}
            & \frac{(\bar{c}+2\hat{\Delta})^T A^{-1} (\bar{c}+\hat{\Delta})}{\vvta{\bar{c}+2\hat{\Delta}}_{A^{-1}}} > \frac{\bar{c}^T A^{-1} (\bar{c}+\hat{\Delta})}{\vvta{\bar{c}}_{A^{-1}}} \\ 
            \Leftrightarrow & (\bar{c}+2\hat{\Delta})^T A^{-1} (\bar{c}+\hat{\Delta}) \cdot \vvta{\bar{c}}_{A^{-1}} > \bar{c}^T A^{-1} (\bar{c}+\hat{\Delta}) \cdot \vvta{\bar{c}+2\hat{\Delta}}_{A^{-1}} \\
            \Leftarrow & \lr{(\bar{c}+2\hat{\Delta})^T A^{-1} (\bar{c}+\hat{\Delta})}^2 \cdot \vvta{\bar{c}}_{A^{-1}}^2 > \lr{\bar{c}^T A^{-1} (\bar{c}+\hat{\Delta})}^2 \cdot \vvta{\bar{c}+2\hat{\Delta}}_{A^{-1}}^2 \\ 
            \Leftrightarrow & \lr{\hat{\Delta}^T A^{-1} (\bar{c}+\hat{\Delta})} \cdot \lr{\vvta{\bar{c}+\hat{\Delta}}_{A^{-1}}^2 \cdot \vvta{\hat{\Delta}}_{A^{-1}}^2 - (\hat{\Delta}^T A^{-1} (\bar{c}+\hat{\Delta}))^2} > 0. \\ 
        \end{align*}
        Therefore, we have 
        \begin{equation*}
            (\bar{c}+2\Delta)^T A^{-1} (\bar{c}+\Delta) \cdot \zeta(\vvta{\bar{c}+2\Delta}_{A^{-1}}) > \bar{c}^T A^{-1} (\bar{c}+\Delta) \cdot \zeta(\vvta{\bar{c}}_{A^{-1}}), 
        \end{equation*}
        which contradicts with \eqref{eq:KKT-2}. 
        Therefore, we have $\vvta{\bar{c}+2\hat{\Delta}}_{A^{-1}} = \vvta{\bar{c}}_{A^{-1}}$ and hence $(\bar{c}+2\hat{\Delta})^T A^{-1} (\bar{c}+\hat{\Delta}) = \bar{c}^T A^{-1} (\bar{c}+\hat{\Delta})$. 
    \end{proof}
    
    Based on the above property, we provide a lower bound of calibration function. 
    \begin{theorem}\label{thm:cali}
        Suppose Assumption \ref{as:strongly-level-set} holds and $\bbP \in \mathcal{P}_{\tn{rot symm}}$, then the calibration function $\delta(\cdot)$ satisfies 
        \begin{equation*}
            \delta(\epsilon) \ge \bbE_c \lrr{\frac{ \min \{ \vvta{\bar{c}}_{A^{-1}}, \vvta{c-\bar{c}}_{A^{-1}} \} }{\vvta{c-\bar{c}}_{A^{-1}}^2 + \vvta{\bar{c}}_{A^{-1}} \vvta{c-\bar{c}}_{A^{-1}}}} \cdot \frac{\mu^{9/2} \vvta{\bar{c}}_{A^{-1}}}{2 L^{9/2}} \cdot \epsilon, 
        \end{equation*}
        for all $\epsilon > 0$. 
    \end{theorem}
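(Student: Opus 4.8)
The plan is to start from the reduction \eqref{eq:cali-new}, which already gives $\delta(\epsilon) \ge \delta'(\epsilon) = \min_\Delta\{C_1(\mu,L,r)\,\rspoplb(\Delta) : \rspoub(\Delta) \ge C_2(\mu,L,r)\,\epsilon\}$, and to evaluate this program at its minimizer. Since a minimizer $\hat\Delta$ must satisfy the stationarity condition \eqref{eq:KKT-new}, Lemma \ref{lm:cali-opt-condition} applies and yields the two identities $\vvta{\bar c + 2\hat\Delta}_{A^{-1}} = \vvta{\bar c}_{A^{-1}}$ and $(\bar c + 2\hat\Delta)^T A^{-1}(\bar c + \hat\Delta) = \bar c^T A^{-1}(\bar c + \hat\Delta)$. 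Writing $b := \vvta{\bar c}_{A^{-1}}$ and $h := \vvta{\hat\Delta}_{A^{-1}}$, I would first expand the squared first identity to extract $\bar c^T A^{-1}\hat\Delta = -h^2$, and then substitute this to obtain $\vvta{\bar c + \hat\Delta}_{A^{-1}}^2 = b^2 - h^2$ and $\bar c^T A^{-1}(\bar c + \hat\Delta) = b^2 - h^2$.

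Next I would simplify the objective $\rspoplb(\hat\Delta) = \bbE_c[\vvta{c + 2\hat\Delta}_{A^{-1}}] - \bbE_c[\,c^T A^{-1}(c + 2\hat\Delta)/\vvta{c}_{A^{-1}}\,]$ at this point. For the first expectation, Proposition \ref{prop:uniform-property-1} applies with its displacement vector taken to be $2\hat\Delta$ (its hypothesis is exactly the first KKT identity), giving $\bbE_c[\vvta{c + 2\hat\Delta}_{A^{-1}}] = \bbE_c[\vvta{c}_{A^{-1}}]$. Expanding the second expectation as $\bbE_c[\vvta{c}_{A^{-1}}] + 2\,\bbE_c[\,c^T A^{-1}\hat\Delta/\vvta{c}_{A^{-1}}\,]$ and invoking the $\Delta = 0$ instance of Lemma \ref{lm:direction}, namely $\bbE_c[c/\vvta{c}_{A^{-1}}] = \zeta(b)\,\bar c$, the two $\bbE_c[\vvta{c}_{A^{-1}}]$ terms cancel and I am left with the clean expression $\rspoplb(\hat\Delta) = -2\zeta(b)\,\bar c^T A^{-1}\hat\Delta = 2\zeta(b)\,h^2$.

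It remains to turn the constraint into an $\epsilon$-dependent lower bound on $h^2$ and to lower-bound $\zeta(b)$. From the identities above, $\rspoub(\hat\Delta) = b - (b^2-h^2)/\sqrt{b^2-h^2} = b - \sqrt{b^2-h^2}$, which is increasing in $h^2$; hence the constraint is active at the minimizer, giving $b - \sqrt{b^2-h^2} = C_2\epsilon$ and therefore $h^2 = C_2\epsilon(2b - C_2\epsilon) \ge b\,C_2\epsilon$ whenever $C_2\epsilon \le b$. For $\zeta(b)$, dotting the $\Delta = 0$ instance of Lemma \ref{lm:direction} with $A^{-1}\bar c$ gives $\zeta(b) = b^{-2}\,\bbE_c[\,\bar c^T A^{-1}c/\vvta{c}_{A^{-1}}\,]$; conditioning on $\vvta{c-\bar c}_{A^{-1}} = t$, applying Proposition \ref{prop:uniform-property-2}, and integrating over $t$ yields $\zeta(b) \ge \bbE_c[\,\min\{b, \vvta{c-\bar c}_{A^{-1}}\}/(\vvta{c-\bar c}_{A^{-1}}^2 + b\,\vvta{c-\bar c}_{A^{-1}})\,]$. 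Chaining these bounds gives $\delta(\epsilon) \ge C_1 \cdot 2\zeta(b)\cdot b\,C_2\epsilon = \tfrac{\mu^{9/2}}{L^{9/2}}\,b\,\zeta(b)\,\epsilon$ (using $C_1C_2 = \mu^{9/2}/(2L^{9/2})$), and inserting the lower bound on $\zeta(b)$ yields the stated estimate, indeed with room to spare.

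The main obstacle is the uniformity in $\epsilon$ rather than the algebra. The step $h^2 \ge b\,C_2\epsilon$ used $2b - C_2\epsilon \ge b$, valid only for $C_2\epsilon \le b$, and the stationarity formula $\rspoub(\hat\Delta) = b - \sqrt{b^2-h^2} \le b$ shows that no KKT point can satisfy the constraint once $C_2\epsilon > b$. I would cover $C_2\epsilon \in (b, 2b]$ by monotonicity: the feasible set $\{\Delta : \rspoub(\Delta) \ge C_2\epsilon\}$ shrinks as $\epsilon$ grows, so $\delta'(\epsilon) \ge \delta'(b/C_2) \ge 2C_1\zeta(b)b^2$, which still dominates the target $\tfrac{\mu^{9/2}}{2L^{9/2}}\,b\,\zeta(b)\,\epsilon = C_1C_2\,b\,\zeta(b)\,\epsilon$ on this range precisely because $C_2\epsilon \le 2b$. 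Finally, $\rspoub(\Delta) \le 2b$ for all $\Delta$ by Cauchy--Schwarz, so the constraint is infeasible and $\delta'(\epsilon) = +\infty$ for $C_2\epsilon > 2b$. This bookkeeping across the three regimes is exactly what forces the factor $\tfrac12$ in the stated constant (the direct computation on $C_2\epsilon \le b$ gives the better constant $\mu^{9/2}/L^{9/2}$), and is the one place where the argument must be watched.
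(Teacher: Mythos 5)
Your proof is correct and follows essentially the same route as the paper's: the reduction to $\delta'(\epsilon)$ via \eqref{eq:cali-new}, Lemma \ref{lm:cali-opt-condition} at a stationary point, Proposition \ref{prop:uniform-property-1} to cancel the $\bbE_c\lrr{\vvta{c+2\Delta}_{A^{-1}}}$ and $\bbE_c\lrr{\vvta{c}_{A^{-1}}}$ terms, Proposition \ref{prop:uniform-property-2} to lower-bound the remaining expectation, and the constraint to deduce $\vvta{\hat\Delta}_{A^{-1}}^2 \ge \vvta{\bar c}_{A^{-1}}\, C_2\epsilon$. If anything you are more careful than the paper, which tacitly restricts to the regime $C_2\epsilon \le \vvta{\bar c}_{A^{-1}}$ (your three-regime bookkeeping closes that gap) and whose displayed simplification of $\rspoplb$ drops a factor of $2$ that you retain, so your argument actually delivers a constant twice as large as the one stated.
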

    \begin{proof}
        First we know that $\delta(\epsilon) \ge \delta'(\epsilon)$. 
        Also, Lemma \ref{lm:cali-opt-condition} shows that for optimal $\Delta$, it holds that $\vvta{\bar{c}}_{A^{-1}} = \vvta{\bar{c}+2\Delta}_{A^{-1}}$. 
        By the definition of $\rspoplb$, we have
        \begin{align*}
            \rspoplb(\Delta) &= \bbE_c[\lspoplb(c, \Delta)] = \bbE_c \lrr{\vvta{c+2\Delta}_{A^{-1}} - \frac{c^T A^{-1} (c+2\Delta)}{\vvta{c}_{A^{-1}}}} \\ 
            &= \bbE_c \lrr{\vvta{c+2\Delta}_{A^{-1}}} - \bbE_c \lrr{\vvta{c}_{A^{-1}}} - \bbE_c \lrr{\frac{2 c^T A^{-1} \Delta}{\vvta{c}_{A^{-1}}}}. 
        \end{align*}
        Since $\vvta{\bar{c}+2\Delta}_{A^{-1}} = \vvta{\bar{c}}_{A^{-1}}$, Proposition \ref{prop:uniform-property-1} shows that $\bbE_c \lrr{\vvta{c+2\Delta}_{A^{-1}}} = \bbE_c \lrr{\vvta{c}_{A^{-1}}}$. 
        Therefore, it holds that 
        \begin{align*}
            \rspoplb(\Delta) &= - \bbE_c \lrr{\frac{2 c^T A^{-1} \Delta}{\vvta{c}_{A^{-1}}}} = - \bbE_c \lrr{\frac{(c+\phi_0(c))^T A^{-1} \Delta}{\vvta{c}_{A^{-1}}}} \\ 
            &= \bbE_c \lrr{\frac{\bar{c}^T A^{-1} c}{\vvta{\bar{c}}_{A^{-1}}^2} \cdot \frac{\bar{c}^T A^{-1} \Delta}{\vvta{c}_{A^{-1}}}} = \bbE_c \lrr{\frac{\bar{c}^T A^{-1} c}{\vvta{c}_{A^{-1}}}} \cdot \frac{-\bar{c}^T A^{-1} \Delta}{\vvta{\bar{c}}_{A^{-1}}^2} \\ 
            & = \bbE_c \lrr{\frac{\bar{c}^T A^{-1} c}{\vvta{c}_{A^{-1}}}} \cdot \frac{\Delta^T A^{-1} \Delta}{\vvta{\bar{c}}_{A^{-1}}^2}, 
        \end{align*}
        where the last inequality holds since $(\bar{c}+\Delta)^T A^{-1} \Delta = 0$. 
        Based on the result in Proposition \ref{prop:uniform-property-2}, we have 
        \begin{equation*}
            \bbE_c \lrr{\frac{\bar{c}^T A^{-1} c}{\vvta{c}_{A^{-1}}}} \ge \bbE_c \frac{\vvta{c}_{A^{-1}}^2 \min \{ \vvta{c}_{A^{-1}}, \vvta{c-\bar{c}}_{A^{-1}} \}}{\vvta{c-\bar{c}}_{A^{-1}}^2 + \vvta{\bar{c}}_{A^{-1}} \vvta{c-\bar{c}}_{A^{-1}}}. 
        \end{equation*}
        Also, let $\epsilon' = C_2(\mu, L, r) \cdot \epsilon$. 
        In the constraint we have 
        \begin{equation*}
            \vvta{\bar{c}}_{A^{-1}} - \frac{\bar{c}^T A^{-1} (\bar{c}+\Delta)}{\vvta{\bar{c}+\Delta}_{A^{-1}}} \ge \epsilon', 
        \end{equation*}
        and hence $\vvta{\bar{c}}_{A^{-1}} - \vvta{\bar{c}+\Delta}_{A^{-1}} \ge \epsilon'$. 
        Since $\vvta{\bar{c}}_{A^{-1}} \ge \epsilon'$, it holds that $(\vvta{\bar{c}}_{A^{-1}} - \epsilon')^2 \ge \vvta{\bar{c}+\Delta}_{A^{-1}}^2$. 
        This implies that $\Delta^T A^{-1} \Delta \ge 2 \vvta{\bar{c}}_{A^{-1}} \epsilon' - \epsilon'^2 \ge \vvta{\bar{c}}_{A^{-1}} \epsilon' = \vvta{\bar{c}}_{A^{-1}} C_2(\mu, L, r) \epsilon$. 
        Therefore, we conclude that 
        \begin{equation*}
            \delta(\epsilon) \ge \bbE_c \lrr{\frac{ \min \{ \vvta{\bar{c}}_{A^{-1}},  \vvta{c-\bar{c}}_{A^{-1}} \} }{\vvta{c-\bar{c}}_{A^{-1}}^2 + \vvta{\bar{c}}_{A^{-1}} \vvta{c-\bar{c}}_{A^{-1}}}} \cdot \frac{\mu^{9/2} \vvta{\bar{c}}_{A^{-1}}}{2 L^{9/2}} \cdot \epsilon.  
        \end{equation*}
    \end{proof}
    
    We are now ready to complete the proof of Theorem \ref{thm:uniform-cali-strong}.
    \begin{proof}[Proof of Theorem \ref{thm:uniform-cali-strong}]
        From Theorem \ref{thm:cali}, we know that 
        \begin{equation*}
            \delta(\epsilon; x, \bbP) \ge \bbE_{c \vert x} \lrr{\frac{ \min \{ \vvta{\bar{c}}_{A^{-1}},  \vvta{c-\bar{c}}_{A^{-1}} \} \cdot \vvta{\bar{c}}_{A^{-1}} }{\vvta{c-\bar{c}}_{A^{-1}}^2 + \vvta{\bar{c}}_{A^{-1}} \vvta{c-\bar{c}}_{A^{-1}}}} \cdot \frac{\mu^{9/2} \epsilon}{2 L^{9/2}}.  
        \end{equation*}
        Also, by $\frac{\min \{c_1, c_2\} \cdot c_1}{c_2^2 + c_1 c_2} \ge \frac{c_1^2}{2 (c_1^2 + c_2^2)}$ for all $c_1, c_2 \not= 0$, we have 
        \begin{equation}\label{eq:ineq-cali}
            \delta(\epsilon; x, \bbP) \ge \bbE_{c \vert x} \lrr{ \frac{\vvta{\bar{c}}_{A^{-1}}^2}{2 (\vvta{\bar{c}}_{A^{-1}}^2 + \vvta{c-\bar{c}}_{A^{-1}}^2)}} \cdot \frac{\mu^{9/2} \epsilon}{2 L^{9/2}}. 
        \end{equation}
        Moreover, for all $\bbP \in \mathcal{P}_{\alpha, \beta}$, it holds that 
        \begin{align*}
            \bbE_{c \vert x} \lrr{ \frac{\vvta{\bar{c}}_{A^{-1}}^2}{\vvta{\bar{c}}_{A^{-1}}^2 + \vvta{c-\bar{c}}_{A^{-1}}^2}} \ge \; & \bbE_{c \vert x} \lrr{ \left. \frac{\vvta{\bar{c}}_{A^{-1}}^2}{\vvta{\bar{c}}_{A^{-1}}^2 + \vvta{c-\bar{c}}_{A^{-1}}^2} \right\vert \vvta{c - \bar{c}}_{A^{-1}} \le \beta \cdot \vvta{\bar{c}}_{A^{-1}}} \\ 
            & \cdot \bbP_{c \vert x} ( \vvta{c - \bar{c}}_{A^{-1}} \le \beta \cdot \vvta{\bar{c}}_{A^{-1}} ) \\ 
            \ge \; & \frac{\alpha}{1 + \beta^2}, 
        \end{align*}
        and for all $\bbP \in \mathcal{P}_{\beta}$, it holds that 
        \begin{align*}
            \bbE_{c \vert x} \lrr{ \frac{\vvta{\bar{c}}_{A^{-1}}^2}{\vvta{\bar{c}}_{A^{-1}}^2 + \vvta{c-\bar{c}}_{A^{-1}}^2}} \ge \frac{\vvta{\bar{c}}_{A^{-1}}^2}{\vvta{\bar{c}}_{A^{-1}}^2 + \bbE_{c \vert x} [\vvta{c-\bar{c}}_{A^{-1}}^2]} \ge \frac{1}{1 + \beta^2}. 
        \end{align*}
        By applying the above two inequalities to \eqref{eq:ineq-cali} we complete the proof. 
    \end{proof}

\section{Experimental details}\label{sec:appendix-exp}
    For both problems, we ran each instance on one core of Intel Xeon Skylake 6230 @ 2.1 GHz. 
    
    \subsection{Excess risk comparison}\label{sec:excess_plots}
    In Figure \ref{fig:excess-risk}, we provide the empirical excess risk comparison of the cases with polyhedral and level-set feasible regions. 
    The case with polyhedral feasible region are the cost-sensitive multi-class classification instances with simplex feasible region, and the case with level-set feasible region are the entropy constrained portfolio optimization problems. 
    The main metric we use in Figure \ref{fig:excess-risk} is the \emph{normalized excess risk}, which for each case, is defined as the excess risk over the averaged excess risk with sample size $n = 100$. 
    For each type of feasible region, the excess risk is calculated by the difference between the SPO risk of the predictions given by the trained model and the true model. 
    Also, we set polynomial degree equals to one with moderate noises, which means the true model is in the hypothesis class. 
    The main purpose of this plot is not checking if the order of the calibration matches the theoretical results, as these are only worst case guarantees, but qualitatively comparing the convergence of excess risk with different types of feasible regions. 
    
    \begin{figure}[]
        \centering
        \includegraphics[width=.83\textwidth]{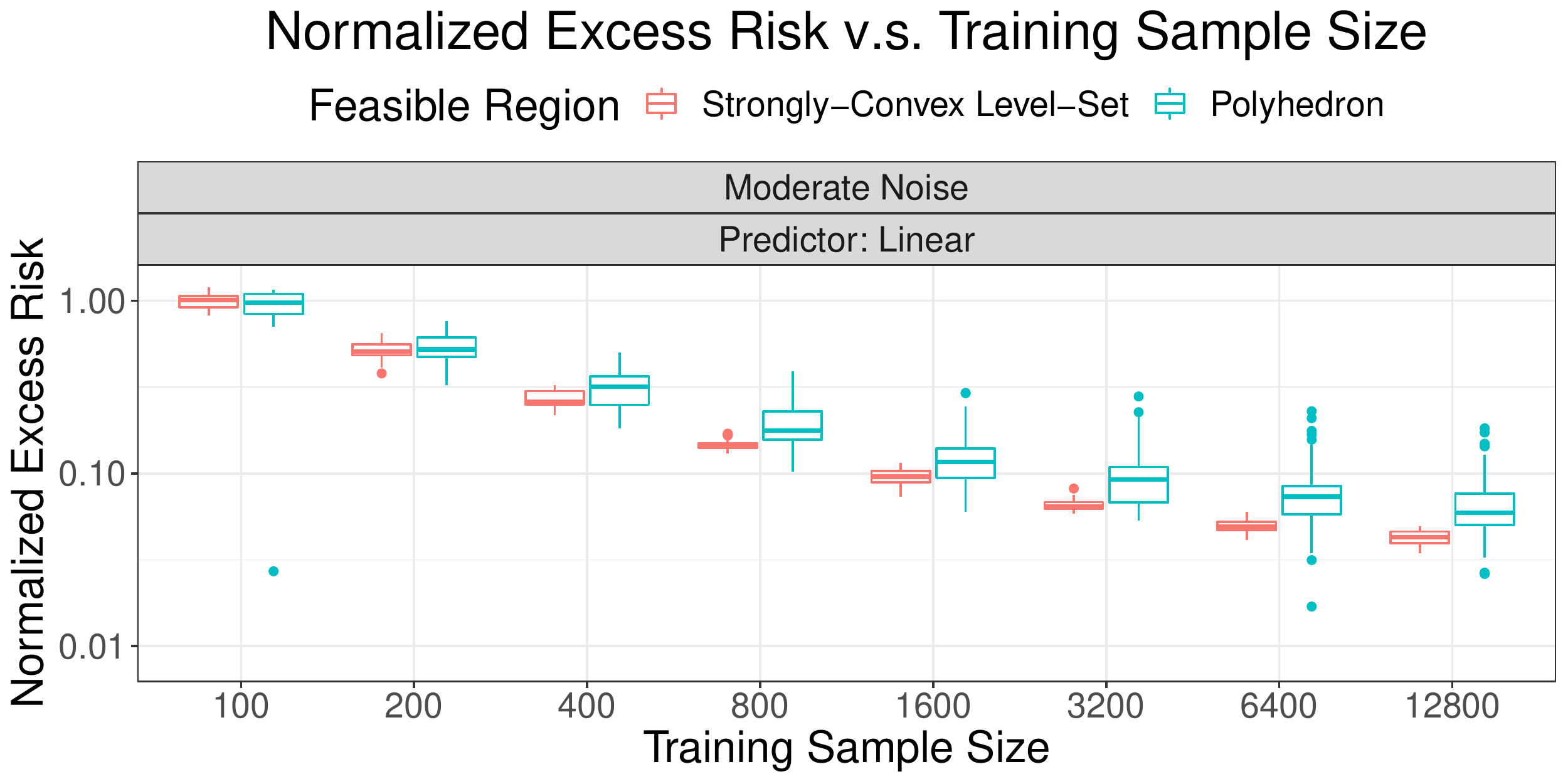}
        \caption{Normalized test set excess risk for the SPO+ methods on instances with polyhedron and level-set feasible regions. For each value of the sample size in the above plots we run $50$ independent trials.}
        \label{fig:excess-risk}
    \end{figure}

    \subsection{Additional plots on the cost-sensitive multi-class classification instances}
    In Figure \ref{fig:classification}, we provide a complete comparison of all the method on the cost-sensitive multi-class classification instances. 
    We can observe a similar pattern as in Figure \ref{fig:portfolio-04}. 
    
    \begin{figure}[]
        \centering
        \includegraphics[width=\textwidth]{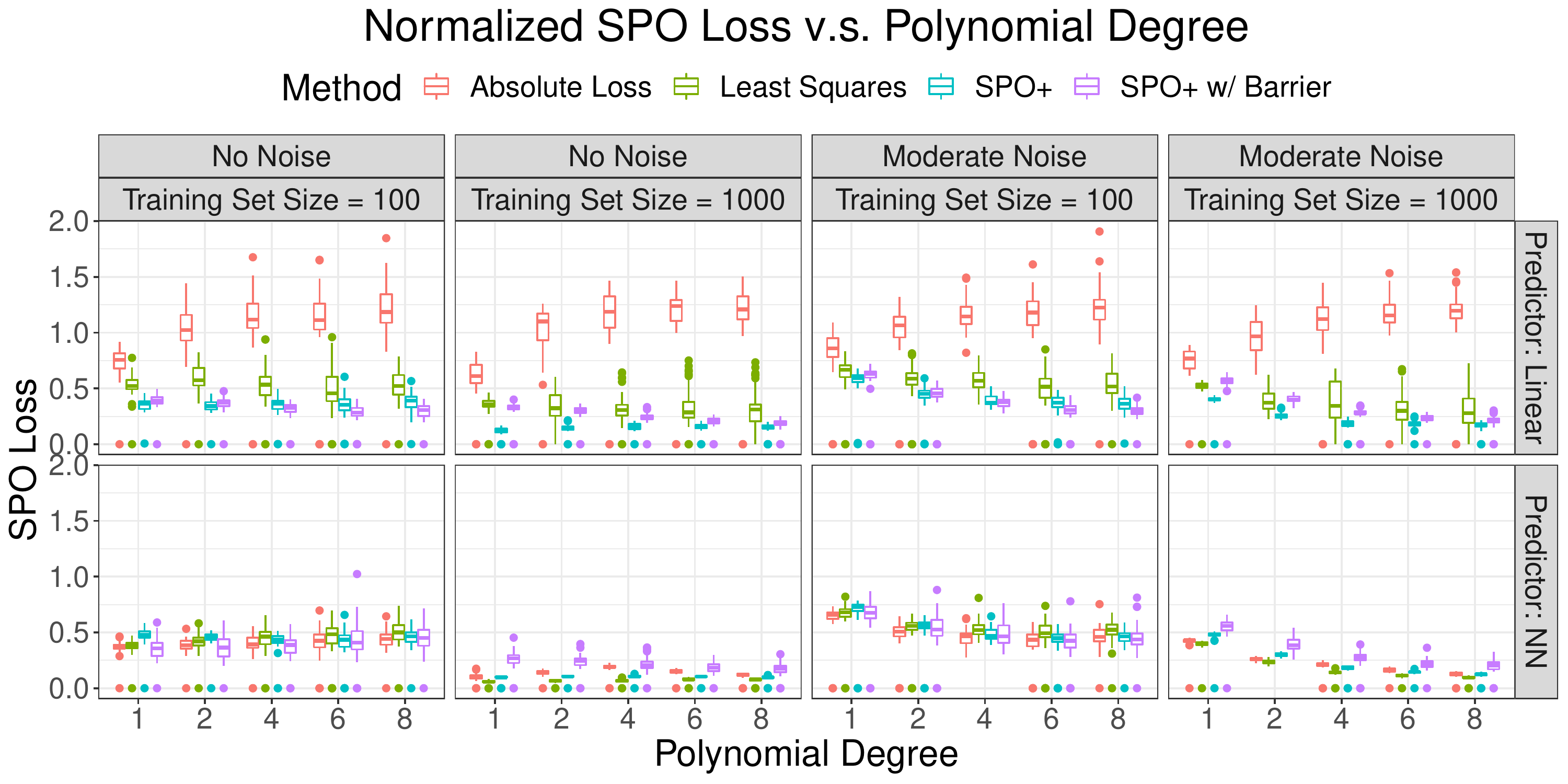}
        \caption{Test set SPO loss for the SPO+, least squares, and absolute loss methods on cost-sensitive multi-class classification instances. For each value of the polynomial degree in the above plots we run $50$ independent trials.}
        \label{fig:classification}
    \end{figure}

    \subsection{Technical details}
    In Lemma \ref{lm:oracle-differentiable} we show that the optimization oracle $w^*(\cdot)$ is differentiable when the projection of the predicted cost vector $\hat{c}$ is not zero for the entropy constrained portfolio optimization example.
    
    \begin{lemma}\label{lm:oracle-differentiable}
        Let $T = \{w \in \bbR^d: w > 0, {\bm 1}^T w = 1\}$ denote the interior of the probability simplex. 
        For any vector $c \in \bbR^d$, let $\tilde{c}$ denote the projection of $c$ onto $T$. 
        Let $f(w) = \sum_{i=1}^d - w_i \log(w_i)$ denote the entropy function. 
        For some scalar $r \in (f_{\min}, \lowlim_{w \to \partial T} f(w))$, let $S = \{w \in T: f(w) \le r\}$. 
        Let $w^*(c) = \arg \min_{w \in S} c^T w$. 
        Then it holds that $w^*(c)$ is differentiable when $\tilde{c} \not= 0$ where $\tilde{c}$ is the projection of $c$ onto the subspace $\{w \in \bbR^d : {\bm 1}^Tw = 0\}$. 
    \end{lemma}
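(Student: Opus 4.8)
The plan is to establish differentiability locally via the implicit function theorem (IFT) applied to the Karush–Kuhn–Tucker (KKT) system of $\min_{w \in S} c^T w$. Since differentiability is a local property, and since only the component of $c$ orthogonal to ${\bm 1}$ affects the objective on the affine set $\{{\bm 1}^T w = 1\}$, it suffices to treat $c = \tilde{c} \in H^\perp$ with $\tilde c \neq 0$ and to exhibit a $C^1$ map $c \mapsto (w^*(c), u(c), \lambda(c))$ solving the stationarity, primal feasibility, and complementarity conditions on a neighborhood of $\tilde c$. Here I follow the framework of Assumption \ref{as:strongly-level-set}, treating $f$ as the ($\mu$-strongly convex, $L$-smooth) negative entropy, so that $\nabla f(w)_i = \log w_i + 1$ and $\nabla^2 f(w) = \text{diag}(1/w_1, \dots, 1/w_d)$.

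First I would pin down the structure of the optimal solution. Because $r < \lowlim_{w \to \partial T} f(w)$, the level set $S = \{w \in T : f(w) \le r\}$ is a compact subset of the open simplex $T$ that is bounded away from $\partial T$; hence $w^*(c) \in T$, i.e., all coordinates are strictly positive and the nonnegativity constraints are inactive. Since $\tilde c \neq 0$, the linear objective is nonconstant on the (relatively full-dimensional) set $S$, so its minimizer lies on the boundary $\{f(w) = r\}$ and, by strong convexity of $f$, is unique. Slater's condition holds because $S$ has nonempty relative interior, so there exist multipliers $u \ge 0$ and $\lambda \in \bbR$ with $c + u\nabla f(w^*) + \lambda {\bm 1} = 0$ and $u(f(w^*) - r) = 0$; projecting stationarity onto $H^\perp$ and using $\tilde c \neq 0$ forces $u > 0$, so the entropy constraint is strictly active.

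The crux is to show that the KKT Jacobian is nonsingular, so that the IFT yields a differentiable solution branch. Writing $F(w, u, \lambda; c) = \big(c + u\nabla f(w) + \lambda {\bm 1}, \ f(w) - r, \ {\bm 1}^T w - 1\big)$, the Jacobian in $(w, u, \lambda)$ at the optimum is the saddle-point matrix whose leading block is $u\nabla^2 f(w^*) \succ 0$ and whose constraint-gradient block is $B$ with rows $\nabla f(w^*)^T$ and ${\bm 1}^T$. I would invoke the standard fact that such a matrix is invertible whenever the leading block is positive definite and $B$ has full row rank. Positive definiteness holds since $w^* \in T$ gives $\text{diag}(1/w^*_i) \succ 0$; full row rank reduces to linear independence of $\nabla f(w^*)$ and ${\bm 1}$. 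Since $\nabla f(w^*)_i = \log w^*_i + 1$, we have $\nabla f(w^*) \in \text{span}({\bm 1})$ only if $w^*$ is the uniform point, which is the unconstrained minimizer of $f$ and lies strictly inside $S$; as $w^*$ is on the active boundary $\{f = r\}$ it cannot be uniform, yielding the required independence.

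With nonsingularity in hand, the IFT produces a $C^1$ (indeed smooth, as $f$ is $C^\infty$ on $T$) map $c \mapsto (w^*(c), u(c), \lambda(c))$ on a neighborhood of $\tilde c$; strict feasibility $w^*(c) \in T$ and the strict inequality $u > 0$ persist under small perturbations of $c$, so the local root of $F = 0$ coincides with $w^*(c)$, establishing its differentiability. I expect the main obstacle to be the rank and constraint-qualification bookkeeping of the previous paragraph — verifying that $\nabla f(w^*)$ and ${\bm 1}$ remain independent (equivalently, that $w^*$ never degenerates to the uniform point) and that the active set is locally constant; once these are secured, the IFT conclusion is routine.
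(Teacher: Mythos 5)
Your proof is correct, but it takes a genuinely different route from the paper's. The paper exploits the closed form special to the entropy function: from stationarity $c = -u(c)\nabla f(w^*(c))$ it writes $w^*(c) = \tn{softmax}(-\tilde{c}/u(c))$ explicitly, so that differentiability of $w^*$ reduces to differentiability of the single scalar $u(c)$, which is then extracted from the one-dimensional implicit equation $f(\tn{softmax}(-\tilde{c}/u)) = r$ via the inverse function theorem (using that this quantity is strictly decreasing in $u$). You instead run a general KKT sensitivity analysis: you set up the full system $F(w,u,\lambda;c)=0$ and verify nonsingularity of the saddle-point Jacobian, with the two nondegeneracy facts being $u>0$ (forced by $\tilde c \neq 0$) and linear independence of $\nabla f(w^*)$ and ${\bm 1}$ (forced by $w^*$ lying on the active level set $\{f=r\}$ and hence not being the uniform point). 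The two arguments hinge on the same underlying nondegeneracy --- your rank condition is exactly what makes the paper's $\partial\phi/\partial u$ strictly negative, a point the paper asserts somewhat tersely --- but yours makes it explicit and does not use the softmax representation at all, so it extends verbatim to any smooth strongly convex $f$ on the simplex (i.e., to the full generality of Assumption \ref{as:strongly-level-set}), at the cost of a longer setup. One cosmetic note: you silently corrected the sign of $f$ to the negative entropy (so that $\nabla^2 f = \mathrm{diag}(1/w_i) \succ 0$), which is indeed what the lemma intends given the constraint $\sum_i w_i\log w_i \le r$ used in Section \ref{sec:experiment}.
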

    \begin{proof}
        Let $\tn{softmax}(\cdot): \bbR^d \to \bbR^d$ denote the softmax function, namely
        \begin{equation*}
            \tn{softmax}(c) = \lrr{\frac{\exp(c_1)}{\sum_{i=1}^d \exp(c_i)}, \dots, \frac{\exp(c_d)}{\sum_{i=1}^d \exp(c_i)}}^T. 
        \end{equation*}
        Using KKT condition, we know that for any $c \in \bbR^d$ such that $\tilde{c} \not= 0$, there exists some scalar $u(c) \ge 0$ such that $c = - u(c) \cdot \nabla f(w^*(c))$, and therefore $w^*(c) = \tn{softmax}(- \tilde{c} / u(c))$. 
        Since the softmax function is differentiable and $\tilde{c}$ is differentiable with respect to $c$, we only need to show that the function $u(c)$ is also differentiable with respect to $c$. 
        Indeed, when $\tilde{c} \not= 0$, we have $f(w^*(c)) = r$, which is equivalent to $f(\tn{softmax}(- \tilde{c} / u(c))) = r$. 
        Let $\phi(c, u) = f(\tn{softmax}(-\tilde{c} / u))$. 
        Since $\phi(c, u)$ is a decreasing function for $u > 0$, by inverse function theorem we have $\frac{\dd u}{\dd c} = - \frac{\partial \phi / \partial c}{\partial \phi / \partial u}$, and hence $u(c)$ is also differentiable with respect to $c$. 
    \end{proof}
    
    In the cost-sensitive multi-class classification problem, we consider the SPO+ method using a log barrier approximation to the unit simplex. 
    For the choice of the threshold $r$, according to Assumption \ref{as:strongly-level-set} we will need $r > f_{\min}$ and $r < \lowlim_{w \to \partial T} f(w)$. 
    In this log barrier scenario, we have $f_{\min} = d \log d$ and $\lowlim_{w \to \partial T} f(w) = \infty$. 
    Therefore, we pick the threshold $r = 2 d \log d$. Of course, one may consider a more careful tuning of this hyper-parameter. Nevertheless, even with our simplistic approach for setting it we observe benefits of the SPO+ loss that uses a log barrier approximation to the unit simplex. 
    
    \subsection{Data generation processes}
    In the next two paragraphs we discuss the detailed data generation process of each problem. 

    \paragraph{Portfolio allocation problems.}
    Let us describe the process used for generating the synthetic data sets for portfolio allocation instances. 
    In this experiment, we set the number of assets $d = 50$ and the dimension of feature vector $p = 5$. 
    We first generate a weight matrix $B \in \bbR^{d \times p}$, whereby each entry of $B$ is a Bernoulli random variable with the probability $\bbP(B_{ij} = 1) = \frac12$. 
    We then generate the training data set $\{(x_i, c_i)\}_{i=1}^n$ and the test data set $\{(\tilde{x}_i, \tilde{c}_i)\}_{i=1}^m$ independently according to the following procedure. 
    \begin{enumerate}
        \item First we generate the feature vector $x \in \bbR^p$ from the standard multivariate normal distribution, namely $x \sim \mathcal{N}(0, I_p)$. 
        \item Then we generate the true cost vector $c \in \bbR^d$ according to $c_j = \lrr{1 + \lr{1 + \frac{b_j^T x}{\sqrt{p}}}^{\tn{deg}}} \epsilon_j$ for $j = 1, \dots, d$, where $b_j$ is the $j$-th row of matrix $B$. 
        Here $\tn{deg}$ is the fixed degree parameter and $\epsilon_j$, the multiplicative noise term, is a random variable which independently generated from the uniform distribution $[1 - \bar{\epsilon}, 1 + \bar{\epsilon}]$ for a fixed noise half width $\bar{\epsilon} \ge 0$. 
        In particular, $\bar{\epsilon}$ is set to $0$ for ``no noise'' instances and $0.5$ for ``moderate noise'' instances. 
    \end{enumerate}
    
    \paragraph{Cost-sensitive multi-class classification problems.}
    Let us describe the process used for generating the synthetic data sets for cost-sensitive multi-class classification instances. 
    In this experiment, we set the number of class $d = 10$ and the dimension of feature vector $p = 5$. 
    We first generate a weight vector $b \in \bbR^p$, whereby each entry of $b$ is a Bernoulli random variable with the probability $\bbP(b_j = 1) = \frac12$. 
    We then generate the training data set $\{(x_i, c_i)\}_{i=1}^n$ and the test data set $\{(\tilde{x}_i, \tilde{c}_i)\}_{i=1}^m$ independently according to the following procedure. 
    \begin{enumerate}
        \item First we generate the feature vector $x \in \bbR^p$ from the standard multivariate normal distribution, namely $x \sim \mathcal{N}(0, I_p)$. 
        \item Then we generate the score $s \in (0, 1)$ according to $s = \sigma \lr{(b^T x)^{\tn{deg}} \cdot \tn{sign}(b^T x) \cdot \epsilon}$, where $\sigma(\cdot)$ is the logistic function. 
        Here $\epsilon$, the multiplicative noise term, is a random variable which independently generated from the uniform distribution $[1 - \bar{\epsilon}, 1 + \bar{\epsilon}]$ for a fixed noise half width $\bar{\epsilon} \ge 0$. 
        In particular, $\bar{\epsilon}$ is set to $0$ for ``no noise'' instances and $0.5$ for ``moderate noise'' instances. 
        \item Finally we generate the true class label $\tn{lab} = \lceil 10 s \rceil \in \{1, \dots, 10\}$ and the true cost vector $c = (c_1, \dots, c_{10})$ is given by $c_j = \vta{j - \tn{lab}}$ for $j = 1, \dots, 10$. 
    \end{enumerate}

\end{document}